\def\eqref#1{equation~\ref{#1}}
\def\1{\bm{1}}
\DeclareMathAlphabet{\mathsfit}{\encodingdefault}{\sfdefault}{m}{sl}
\SetMathAlphabet{\mathsfit}{bold}{\encodingdefault}{\sfdefault}{bx}{n}
\newtheorem{theorem}{Theorem}
\newtheorem{definition}{Definition}
\newtheorem{proposition}{Proposition}
\newtheorem{lemma}{Lemma}
\title{Reward Distance Comparisons Under Transition Sparsity}
\author{\name Clement Nyanhongo  \email clement.k.nyanhongo.th@dartmouth.edu \\
      \addr Thayer School of Engineering \\
      Dartmouth College
      \AND
      \name Bruno Miranda Henrique \email bruno.miranda.henrique.th@dartmouth.edu \\
      \addr Thayer School of Engineering \\
      Dartmouth College
      \AND
      \name Eugene Santos Jr. \email eugene.santos.jr@dartmouth.edu \\
      \addr Thayer School of Engineering \\
      Dartmouth College}
\begin{document}

\maketitle

\begin{abstract}
Reward comparisons are vital for evaluating differences in agent behaviors induced by a set of reward functions. Most conventional techniques utilize the input reward functions to learn optimized policies, which are then used to compare agent behaviors. However, learning these policies can be computationally expensive and can also raise safety concerns. Direct reward comparison techniques obviate policy learning but suffer from transition sparsity, where only a small subset of transitions are sampled due to data collection challenges and feasibility constraints. Existing state-of-the-art direct reward comparison methods are ill-suited for these sparse conditions since they require high transition coverage, where the majority of transitions from a given coverage distribution are sampled. When this requirement is not satisfied, a distribution mismatch between sampled and expected transitions can occur, leading to significant errors. This paper introduces the Sparsity Resilient Reward Distance (SRRD) pseudometric, designed to eliminate the need for high transition coverage by accommodating diverse sample distributions, which are common under transition sparsity. We provide theoretical justification for SRRD's robustness and conduct experiments to demonstrate its practical efficacy across multiple domains.

\end{abstract}

\section{Introduction}

In sequential decision problems, reward functions often serve as the most ``succinct, robust, and transferable'' representations of a task \citep{Abbeel_Ng:IRL}, encapsulating agent goals, social norms, and intelligence \citep{silver-et-al:reward,zahavy-et-al:convex,singh2009}. For problems where a reward function is specified and the goal is to find an optimal policy that maximizes cumulative rewards, Reinforcement Learning (RL) is predominantly employed \citep{sutton-barto:rl}. Conversely, when a reward function is complex or difficult to specify and past expert demonstrations (or policies) are available, the reward function can be learned via Inverse Reinforcement Learning (IRL) \citep{Ng-Russel:irl}. 

In both RL and IRL paradigms, reward functions govern agent decision-making, and reward comparisons can help assess the similarity of these functions in terms of their induced behaviors. The task of reward comparisons aims to assess the similarity among a collection of reward functions. This can be done through pairwise comparisons where the similarity distance $D(R_A, R_B)$ between two reward functions, $R_A$ and $R_B$ (vectors, not scalars), is computed. The similarity distance should reflect variations not only in magnitude but also in the preferences and behaviors induced by the reward functions. This is characterized by the property of policy invariance, which ensures that reward functions yielding the same optimal policies are considered similar even if their numerical reward values differ \citep{Ng_et_al:invariance}. This makes direct reward comparisons via distance measures such as Euclidean or Kullback-Leibler (KL) divergence unfavorable since these distances do not maintain policy invariance. To satisfy policy invariance, traditional reward comparison techniques have adopted indirect approaches that compare behaviors derived from the optimized policies associated with the reward functions under comparison \citep{arora-doshi:survey_irl}. However, these indirect approaches pose the following challenges: (1) they can be slow and resource-intensive due to iterative policy learning via RL, and (2) policy learning may not be favorable in critical online environments such as healthcare or autonomous vehicles, where safety considerations are paramount \citep{amodei2016concrete, thomas2021safe}. Therefore, developing direct reward comparison methods that bypass the computationally expensive process of policy learning, while maintaining policy invariance is highly important.

\begin{figure*}[] 
  \centering
  \includegraphics[width=\textwidth]{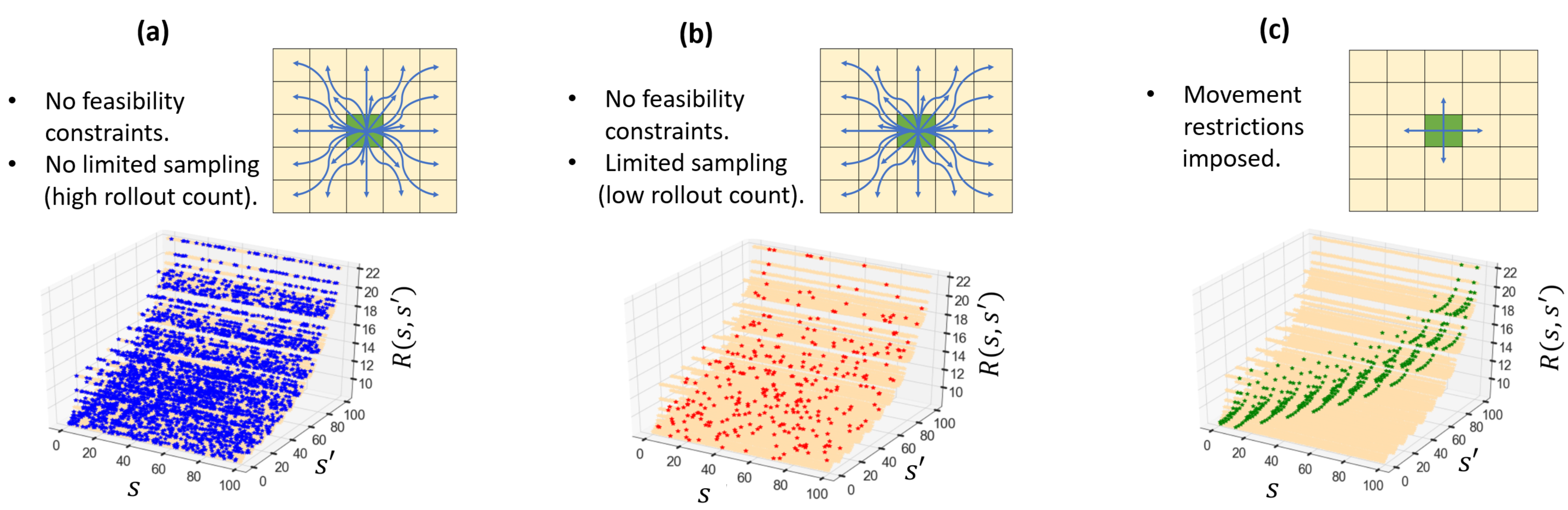}   
    \caption{(Transition Sparsity in a $10 \times 10$ Gridworld Domain) In this illustration, each transition starts from a starting state \( s \) and ends in a destination state \( s' \). For clarity in visualization, we consider action-independent rewards, \( R(s, s') \). In (\textbf{a}), high transition coverage results from a high rollout count (number of policy rollouts) in the absence of feasibility constraints, leading to the majority of transitions being sampled (blue points). In (\textbf{b}), low coverage results from a low rollout count in the absence of feasibility constraints, leading to fewer sampled transitions (red points). In (\textbf{c}), low coverage results from feasibility constraints, such as movement restrictions that only allow actions to adjacent cells, which can significantly reduce the space of sampled transitions (green points) irrespective of rollout count.}
    \label{feasability}
\end{figure*}

To achieve policy invariance in direct reward comparisons, \citet{gleave-et-al:differences} introduced the Equivalent Policy Invariant Comparison (EPIC) pseudometric. Given the task to compare two reward functions, EPIC first performs reward canonicalization to express rewards in a standardized form by removing shaping, and then computes the Pearson distance to measure the difference between the canonical reward functions. Although theoretically rigorous, EPIC falls short in practice since it is designed to compare reward functions under high transition coverage, when the majority of transitions within the explored state and action space are sampled. In many practical scenarios, achieving high transition coverage can be impractical due to transition sparsity, a condition where a minority of transitions are sampled. The remaining unsampled transitions may have unknown or undefined reward values (especially if unrealizable), which can distort the computation of reward expectations during the canonicalization process. 

Transition sparsity can be attributed to: (1) \textbf{limited sampling} - when data collection challenges result in fewer sampled transitions; and (2) \textbf{feasibility constraints} - when environmental or agent-specific limitations restrict certain transitions. Consider, for instance, a standard $10 \times 10$ Gridworld domain which has a total of $100$ states, each represented by an $(x,y)$ coordinate. The total number of possible transitions is at least $10,000$ ($100 \text{ states} \times \text{number of actions} \times 100 \text{ states}$) if at least one action exists between every pair of states (see Figure \ref{feasability}). However, feasibility constraints such as movement restrictions might limit the transitions that can be sampled. For example, an agent that can only move to its neighboring states by taking single-step cardinal directions (actions: up, right, down, left) in each state will explore fewer than $400$ transitions ($100$ states $\times 4$ actions), as shown in Figure \ref{feasability}c. To illustrate the impact of limited sampling, consider a scenario where transitions are sampled via policy rollouts. Assuming that factors such as feasibility constraints, the transition model, and the policy rollout method are kept constant, the extent of sampled transitions is proportional to the number of policy rollouts (see Figure \ref{feasability}a and \ref{feasability}b). To alleviate the impact of transition sparsity in reward comparisons, we present the Sparsity Resilient Reward Distance (SRRD) pseudometric, which does not assume the existence of reward samples with high transition coverage. During canonicalization, SRRD introduces additional reward expectation terms to accommodate a wider and more diverse range of transition distributions.

In practical settings, reward comparisons can be useful for a broad range of applications such as: (1) \textit{Evaluating Agent Behaviors} -- By comparing how different reward functions align or differ using specified similarity measures, agent rewards can be grouped, to reason and interpret different agent behaviors. This can be useful in IRL domains, where there is need to extract meaning from inferred rewards computed to represent agent preferences \citep{Ng-Russel:irl, santos-nyanhongo:opinion}. For instance, in sport domains such as hockey, reward comparisons could be useful in inferring player rankings and their decision-making strategies \citep{c:115}. (2) \textit{Initial Reward Screening} -- In RL domains, direct reward comparisons could serve as a preliminary step to quickly identify rewards that will achieve a spectrum of desired behaviors before actual training. This could be beneficial in scenarios where multiple possible reward configurations exist, but some might be more efficient.
(3) \textit{Addressing Reward Sparsity\footnote{Transition sparsity arises when a minority of transitions are sampled. This is different from reward sparsity, which occurs when rewards are infrequent or sparse, making RL tasks difficult.}} -- Reward comparisons could also tackle issues such as reward sparsity, by identifying more informative and easier-to-learn reward functions that might be similar in terms of optimal policies but are more desirable than sparse reward functions.

\paragraph{Contributions}

In this paper, we introduce the Sparsity Resilient Reward Distance (SRRD) pseudometric, designed to improve direct reward comparisons in environments characterized by high transition sparsity. SRRD demonstrates greater robustness compared to existing pseudometrics (such as EPIC), which require high transition coverage. SRRD's strength lies in its ability to integrate reward samples with diverse transition distributions, which are common in scenarios with low coverage. We provide the theoretical justification for SRRD's robustness and demonstrate its superiority through experiments in several domains of varying complexity: Gridworld, Bouncing Balls, Drone Combat, Robomimic, Montezuma's Revenge, StarCraft II, and MIMIC-IV. For the simpler domains, Gridworld and Bouncing Balls, we evaluate SRRD against manually defined factors such as nonlinear reward functions and feasibility constraints, to fully understand its strengths and limitations under controlled conditions. In the more complex domains such as StarCraft II, we assess SRRD in environments characterized by large state and action spaces, to gauge how it is likely to perform in realistic settings. Our final experiment explores a novel and practical application of these pseudometrics as distance measures within a $k$-nearest neighbors algorithm, tailored to classify agent behaviors based on reward functions computed via IRL. Empirical results highlight SRRD's superior performance, as evidenced by its ability to find higher similarity between rewards generated from the same agents and higher variation between rewards from different agents. These results underscore the crucial need to account for transition sparsity in direct reward comparisons.

\section{Related Works}
The EPIC pseudometric is the first direct reward comparison technique that circumvents policy learning while maintaining policy invariance \citep{gleave-et-al:differences}. In practical settings, EPIC's major limitation is that it is designed to compare rewards under high transition coverage. In scenarios characterized by transition sparsity, EPIC underperforms due to its high sensitivity to unsampled transitions, which can distort the computation of reliable reward expectation terms, needed during canonicalization. This limitation has been observed by \citet{wulfe-et-al:dard}, who introduced the Dynamics-Aware Reward Distance (DARD) pseudometric. DARD improves on EPIC by relying on transitions that are closer to being physically realizable; however, it still remains highly sensitive to unsampled transitions.

\citet{skalse2023starc} also introduced a family of reward comparison pseudometrics, known as Standardized Reward Comparisons (STARC). These pseudometrics are shown to induce lower and upper bounds on worst-case regret, implying that the metrics are tight, and differences in STARC distances between two reward functions correspond to differences in agent behaviors. Among the different STARC metrics explored, the Value-Adjusted Levelling (VAL) and the VALPotential functions are empirically shown to have a marginally tighter correlation with worst-case regret compared to both EPIC and DARD. While an improvement, a significant limitation of these metrics is their reliance on value functions, which can be computed via policy evaluation---a process that incurs a substantially higher computational overhead than sample-based approximations for both EPIC and DARD. In small environments, these metrics can be computed using dynamic programming for policy evaluation, an iterative process with polynomial complexity relative to the state and action spaces \citep{skalse2023starc}. In larger environments, computing the exact value functions becomes impractical hence the value functions need to be approximated via neural networks updated with Bellman updates \citep{skalse2023starc}. Since the primary motivation for direct reward comparisons is to eliminate the computationally expensive process of policy learning, incorporating value functions is somewhat contradictory since policy evaluation is iterative, and it can have comparable complexity with policy learning techniques such as value iteration. Our work focuses on computationally scalable direct reward comparison pseudometrics (such as EPIC and DARD), which do not involve iterative policy learning or evaluation.

The task of reward comparisons lies within the broader theme of reward evaluations, which aim to explain or interpret the relationship between rewards and agent behavior. Some notable works tackling this theme, include, \citet{lambert2024rewardbench}, who developed benchmarks to evaluate reward models in Large Language Models (LLMs), which are often fine-tuned using RL via human feedback (RLHF) to align the rewards with human values. These benchmarks assess criteria such as communication, safety and reasoning capabilities across a variety of reward models. In another line of work, \citet{mahmud-etal:explanation-guided} presented a framework leveraging human explanations to evaluate and realign rewards for agents trained via IRL on limited data. Lastly, \citet{russel-santos:mdp} proposed a method that examines the consistency between global and local explanations, to determine the extent to which a reward model can capture complex agent behavior. Similar to reward comparisons, reward evaluations can be influenced by shaping functions, thus necessitating techniques such as canonicalization as preprocessing steps to eliminate shaping \citep{jenner-gleave:preprocessing}.

Reward shaping is a technique that transforms a base reward function into alternate forms \citep{Ng_et_al:invariance}. This technique is mainly employed in RL for reward design where heuristics and domain knowledge are integrated to accelerate learning \citep{mataric1994,hu_et_al:utilize,cheng_et_al:heuristic_rl,gupta_et_al:reward_shaping,suay_et_al:demonstration_irl}. Several applications of reward shaping have been explored, and some notable examples include: training autonomous robots for navigation \citep{ana_et_al:dynamic}; training agents to ride bicycles \citep{jette_preven:shaping}; improving agent behavior in multiagent contexts such as the Prisoner's Dilemma \citep{babes2008social}; and scaling RL algorithms in complex games \citep{guillaume:fps,christiano2017}. Among several reward shaping techniques, potential-based shaping is the most popular due to its preservation of policy invariance, ensuring that the set of optimal policies remains unchanged between different versions of reward functions \citep{Ng_et_al:invariance,wiewiora2003principled,gao2015potential}.

\section{Preliminaries}
\label{sec:Section3}

This section introduces the foundational concepts necessary for understanding direct reward comparisons, and the critical challenge of transition sparsity which our proposed approach, SRRD (detailed in Section 4), is designed to address. We begin by outlining the Markov Decision Process formalism, followed by a discussion of policy invariance. Finally, we review the existing key direct reward comparison pseudometrics (EPIC and DARD) and examine their limitations, which motivate the development of SRRD.

\subsection{Markov Decision Processes}

A Markov Decision Process (MDP) is defined as a tuple $(\mathcal{S}, \mathcal{A}, \gamma, T, R)$, where $\mathcal{S}$ and $\mathcal{A}$ are the state and action spaces, respectively. The transition model $T: \mathcal{S} \times \mathcal{A} \times \mathcal{S} \rightarrow [0, 1]$, dictates the probability distribution of moving from one state, $s \in \mathcal{S}$, to another state, $s' \in \mathcal{S}$, under an action $a \in \mathcal{A}$, and each given transition is specified by the tuple $(s, a, s')$.  The discount factor $\gamma \in [0, 1]$ reflects the preference for immediate over future rewards. The reward function $R: \mathcal{S \times A \times S} \rightarrow \mathbb{R}$ assigns a reward $R(s, a, s')$ to each transition. A trajectory $\tau = \{(s_0, a_0),(s_1, a_1),\cdots, (s_n)\}$, $n \in \mathbb{Z}^+$, is a sequence of states and actions, with a total return: $g(\tau) = \sum_{t = 0}^{\infty} \gamma^tR(s_t, a_t, s_{t + 1})$. The goal in an MDP is to find a policy $\pi: \mathcal{S}\times \mathcal{A} \rightarrow [0, 1]$ (often via RL) that maximizes the expected return $\mathbb{E} [g(\tau)]$. 

Given the subsets $S_i \subseteq \mathcal{S}$, $A \subseteq \mathcal{A}$, and $S_j \subseteq \mathcal{S}$, the tuple $(S_i, A, S_j)$ represents the set of transitions within the cross-product $S_i \times A \times S_j$. The associated rewards are: $$R(S_i, A, S_j) = \{R(s, a, s') | (s, a, s') \in S_i \times A \times S_j\},$$ and the expected reward over these transitions is denoted by 
$\mathbb{E}[R(S_i, A, S_j)]$. In the standard MDP formulation, the reward function is fully specified for all possible transitions including those that may be unrealizable. However, in many practical settings, such as offline RL \citep{levine2020offline, agarwal2020optimistic, chen2024opportunities}, we are often limited to datasets of reward samples that are only defined over a subset of observed realizable transitions. In this paper, a \textbf{reward sample} is defined as a restriction of $R$ to a subset $B \subseteq \mathcal{S \times A \times S}$, where $B$ consists of sampled transitions under a specified policy. We assume that rewards are defined for transitions in $B$, and are undefined for transitions not in $B$. Given a batch of sampled transitions $B$, the coverage distribution $\mathcal{D}(s, a, s')$ defines the probability distribution over transitions used to generate $B$. The sampled state space and action space are denoted by $S^{\mathcal{D}} \subseteq \mathcal{S}$ and $A^{\mathcal{D}} \subseteq \mathcal{A}$. The sets of all possible distributions over $\mathcal{A}$ and $\mathcal{S}$ are denoted by $\Delta \mathcal{A}$ and $\Delta \mathcal{S}$ respectively, and the individual distributions over states and actions are denoted by $\mathcal{D_S} \in \Delta \mathcal{S}$ and $\mathcal{D_A} \in \Delta \mathcal{A}$.

\subsection{Policy Invariance and Direct Reward Comparisons}

In direct reward comparisons, policy invariance is crucial as it ensures that reward functions that differ due to potential shaping are treated as equivalent, since they yield the same optimal policies \citep{Ng_et_al:invariance}. Formally, any shaped reward can be represented by the relationship: $R'(s, a, s') = R(s, a, s') + F(s, a, s')$, where $F$ is a shaping function. Potential shaping guarantees policy invariance, and takes the form: 
\begin{equation}
    R'(s, a, s') = R(s, a, s') + \gamma\phi(s') - \phi(s),
\end{equation}
\noindent where $R$ is the original reward function, and $\phi$ is a state-potential function. Reward functions $R$ and $R'$ are deemed equivalent as they yield the same optimal policies. To effectively compare reward functions that may differ numerically but induce the same optimal policies, the use of \textbf{pseudometrics} is highly important. Let \( X \) be a set, with $x, y, z \in X$, and let $d: X \times X \rightarrow [0, \infty)$ define a pseudometric. This pseudometric adheres to the following axioms: (premetric) $ d(x, x) = 0$ for all $x \in X$; (symmetry) $d(x, y) = d(y, x)$ for all $x, y \in X$; and (triangular inequality) $d(x, y) \leq d(x, z) + d(z, y)$ for all $x, y, z \in X$. Unlike a true metric, a pseudometric does not require that: $d(x, y) = 0 \implies  x = y$, making it ideal for identifying equivalent reward functions that might have different numerical values.

The EPIC pseudometric was introduced by \citet{gleave-et-al:differences}, as a direct reward comparison method that maintains policy invariance. To compute EPIC, reward functions are first transformed into a canonical form without potential shaping; and then, the Pearson distance is computed to differentiate the canonical rewards. The EPIC canonicalization function is defined as follows:
\begin{align}
\label{eq:epic_equation}
    C_{EPIC}(R)(s, a, s') & = R(s, a, s') + \mathbb{E}[\gamma R(s', A, S') - R(s, A, S')  - \gamma R(S, A, S')], 
\end{align}
\noindent where, $S \sim \mathcal{D_S}$, $S' \sim \mathcal{D_S}$, $A \sim \mathcal{D_A}$, with $\mathcal{D_S}$ and $\mathcal{D_A}$ being distributions over states and actions, respectively. Given a potentially shaped reward, $R'(s, a, s') = R(s, a, s') + \gamma \phi(s') - \phi(s)$, canonicalization yields: $C_{EPIC}(R')(s, a, s') = C_{EPIC}(R)(s, a, s') + \phi_{res}$, where $\phi_{res} = \gamma\mathbb{E}[\phi(S)] - \gamma\mathbb{E}[\phi(S')]$ is the remaining residual shaping. EPIC assumes that $S$ and $S'$ are identically distributed such that $\mathbb{E}[\phi(S)] =\mathbb{E}[\phi(S')]$, resulting in $\phi_{res} = 0$. This makes EPIC, invariant to potential shaping, since, $C_{EPIC}(R) = C_{EPIC}(R').$ Finally, the EPIC pseudometric between two reward functions $R_A$ and $R_B$ is computed as: 
\begin{equation}
    \begin{split}
        D_{\text{EPIC}}(R_A, R_B) = D_{\rho}(C_{EPIC}(R_A), C_{EPIC}(R_B)), 
    \end{split}
\end{equation}
\noindent where, for any random variables $X$ and $Y$, the Pearson distance $D_\rho$ is defined as:
\begin{equation}
    D_{\rho}(X, Y) = \sqrt{1 - \rho(X, Y)}/\sqrt{2}.
    \label{pearsond}
\end{equation}
The Pearson correlation coefficient is given by: $\rho(X, Y) = \mathbb{E}[(X - \mu_{X})(Y - \mu_{Y})] /{(\sigma_{X} \sigma_{Y})}$, 
where $\mu$ denotes the mean, $\sigma$ the standard deviation, and $\mathbb{E}[(X - \mu_{X})(Y - \mu_{Y}$)] is the covariance between $X$ and $Y$. The Pearson distance is defined over the range: $0 \le D_\rho(X, Y) \le 1$, where $D_\rho(X, Y) = 0$ indicates that $X$ and $Y$ are highly similar since $\rho(X, Y) = 1$ (perfect positive correlation), and $D_\rho(X, Y) = 1$ indicates that $X$ and $Y$ are maximally different since $\rho(X, Y) = -1$. The Pearson distance is scale and shift invariant since 
$D_\rho(aX + c, bY + d) = D_\rho(X, Y)$, 
where, $a, b, c, d \in \mathbb{R}$ are constants \citep{gleave-et-al:differences}. Therefore, the EPIC pseudometric is scale, shift and shaping invariant, which are policy-preserving transformations. Computing $C_{EPIC}$ requires access to all transitions in a reward function, making it feasible only for small environments. For reward functions with large or infinite state and action spaces, \citet{gleave-et-al:differences} introduced the sample-based EPIC approximation, denoted as $\hat{C}_{EPIC}$, and is computed as:
\begin{align}
\label{eq:epic_approximation}
    \hat{C}_{EPIC}(R)(s, a, s') &= R(s, a, s') + \frac{\gamma}{N_M}\sum_{(x, u) \in B_M} R(s', u, x) 
      - \frac{1}{N_M}\sum_{(x, u) \in B_M} R(s, u, x) \nonumber \\
&\quad\quad - \frac{\gamma}{N_M^{2}}\sum_{(x, \cdot) \in B_M} \sum_{(x', u) \in B_M} R(x, u, x'), 
\end{align}

\noindent where transitions are sampled from a batch $B_V$ of $N_V$ samples from a coverage distribution $\mathcal{D}$, and state-action pairs, are sampled from a batch $B_M$ of $N_M$ samples from the joint state and action distributions, $\mathcal{D}_S \times \mathcal{D}_A$. Each term in $\hat{C}_{EPIC}$ approximates the corresponding expectation term in $C_{EPIC}$ (Equation \ref{eq:epic_equation}), for example, $\frac{\gamma}{N_M}\sum_{(x, u) \in B_M} R(s', u, x)$ estimates $\mathbb{E}[\gamma R(s', A, S')]$.

\citet{wulfe-et-al:dard} observed that EPIC often depends on transitions that are physically unrealizable, as it requires all (both realizable and unrealizable) transitions from the space $S \times A \times S'$. The unrealizable transitions can introduce errors in reward comparisons since rewards for these transitions are often arbitrary and unreliable. To mitigate this challenge, the authors introduced DARD, which incorporates transition models to prioritize physically realizable transitions. The DARD canonicalization function is given by: 
\begin{equation}
    \label{eq:dard}
    \begin{split}
        C_{DARD}(R)(s, a, s') & = R(s, a, s') + \mathbb{E}[\gamma R(s', A, S'') - R(s, A, S')  - \gamma R(S', A, S'')],
    \end{split}
\end{equation}
where $A \sim \mathcal{D_A}$, $S' \sim T(s, A)$, $S'' \sim T(s', A)$, and $T$ is the transition model. DARD is invariant to potential shaping and generally improves upon EPIC by distinguishing between the subsequent states to $s$ (denoted by $S'$) and $s'$ (denoted by $S''$). Transitions $(s, A, S')$ and $(s', A, S'')$ are generally in-distribution with respect to the transition dynamics $T$ since $S'$ is distributed conditionally based on $(s, A)$, and $S''$ is distributed conditionally based on $(s', A)$. Therefore, these transitions naturally align with the dynamics of the sampled transitions, resulting in a lower likelihood of being unrealizable. However, transitions ($S', A, S''$) are more likely to be out-of-distribution with respect to the transition dynamics $T$, since $S''$ is not distributed conditionally according to $(S', A)$, but rather to $(s', A)$. Consequently, DARD can be sensitive to out-of-distribution transitions in $(S', A, S'')$, which have a higher likelihood of being unrealizable. Nonetheless, \cite{wulfe-et-al:dard} argue that these transitions are closer to being physically realizable since ($S', A, S''$) transitions are in close proximity to $s$ and $s'$, compared to transitions that are utilized in EPIC. Computing the exact DARD computation can also be impractical in large environments, hence, \cite{wulfe-et-al:dard} introduced a sample-based DARD approximation, denoted by $\hat{C}_{DARD}$ (see Appendix \ref{sample_approx_section}).
 
\subsection{Unsampled Transitions}

Consider a reward function $R: \mathcal{S \times A \times S} \rightarrow \mathbb{R}$, where $\mathcal{S}$ is the state space and $\mathcal{A}$ is the action space. A reward sample is generated according to a coverage distribution $\mathcal{D}$, and it spans a state space $S^\mathcal{D} \subseteq \mathcal{S}$ and an action space $A^\mathcal{D} \subseteq \mathcal{A}$. We define the following sets of transitions:

\textbf{Full Coverage Transitions ($\mathcal{T^D}$)} - The set of all theoretically possible transitions within the reward sample's state-action space. This set is represented as $\mathcal{T^D} = S^\mathcal{D} \times A^\mathcal{D} \times S^\mathcal{D} \subseteq \mathcal{S \times A \times S}$.

\textbf{Sampled Transitions} ($\mathcal{T}^S$) - The set of transitions that are actually present in the reward sample. Due to feasibility constraints and limited sampling, this set is a subset of the full coverage transitions: $\mathcal{T}^S \subseteq \mathcal{T^D}$.

\textbf{Unsampled Transitions} ($\mathcal{T}^U$) - The set of full coverage transitions that are not explored in the reward sample. These transitions can be both realizable and unrealizable, and $\mathcal{T}^U = \mathcal{T^D} \setminus \mathcal{T}^S.$

A major limitation of the EPIC and DARD pseudometrics is that they are designed to compare reward functions under high coverage, where $|\mathcal{T}^S| \approx |\mathcal{T^D}|$. As $|\mathcal{T}^U| \rightarrow |T^D|$, the performance of these pseudometrics significantly degrades due to an increase in the number of unsampled transitions. To illustrate this limitation, consider Equation \ref{eq:epic_approximation} used to approximate $C_{EPIC}$. To perform the computation, we need to estimate: $\mathbb{E}[R(s', A, S')]$ by dividing the sum of rewards from $s'$ to $S'$ by $N_M$ transitions; $\mathbb{E}[R(s, A, S')]$ by dividing the sum of rewards from $s$ to $S'$ by $N_M$ transitions; and $\mathbb{E}[R(S, A, S')]$ by dividing the sum of all rewards from $S$ to $S'$ by ${N_M}^2$ transitions, where $N_M \le |S^\mathcal{D} \times A^\mathcal{D}|$ is the size of the state-action pairs in the batch $B_M$. Every state $s\in S$ is ideally expected to have $N_M$ transitions to all other states $S'$, which can be impractical under transition sparsity when some transitions might be unsampled. Since reward summations are divided by large denominators due to $N_M$ (see Equation \ref{eq:epic_approximation}), when coverage is low, the number of sampled transitions needed to estimate the reward expectation terms will be fewer than expected, introducing significant error.

\begin{figure*}[] 
    \centering
    \includegraphics[width=\textwidth]{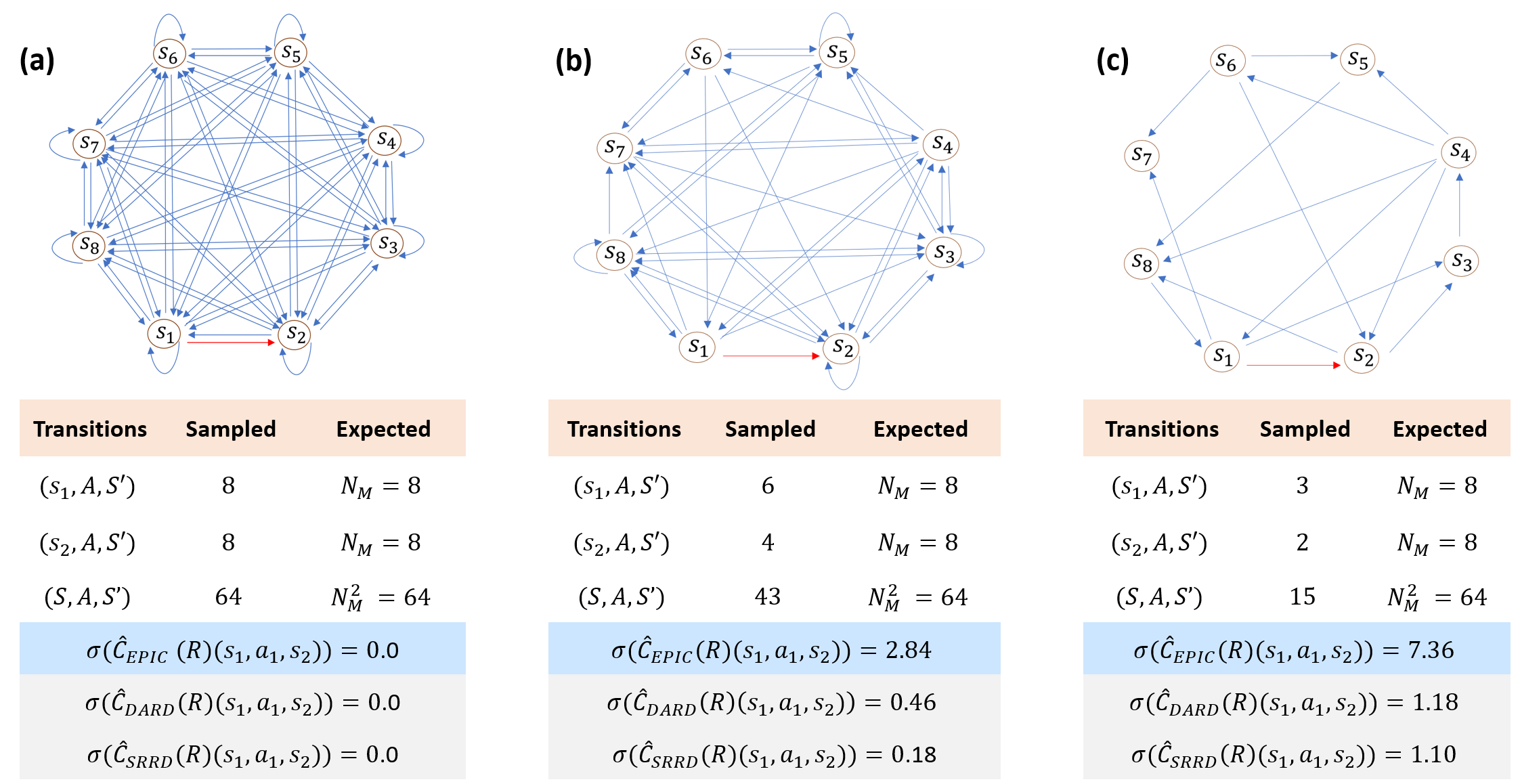}
    \caption{(Impact of unsampled transitions on canonicalizing $R(s_1, a_1, s_2)$) Sampled transitions are those explored in the reward sample, while expected transitions are those anticipated by $\hat{C}_{EPIC}$ assuming full coverage. As coverage decreases from (\textbf{a}) to (\textbf{c}), due to a reduction in the number of sampled transitions, the standard deviation of $\hat{C}_{EPIC}(R)(s_1, a, s_2)$ increases, indicating $\hat{C}_{EPIC}$'s increased instability to unsampled transitions. For comparison, $\hat{C}_{SRRD}$ and $\hat{C}_{DARD}$ have lower standard deviations, signifying higher stability.}
    \label{undersampling}
\end{figure*}

Figure \ref{undersampling} illustrates an example showing the effect of unsampled transitions on canonicalization across three reward samples spanning a state space $S = S'=\{s_1, ..., s_8\}$ and an action space, $A=\{a_1\}$, such that $N_M = 8$, under different levels of transition sparsity. Rewards are defined as $R(s_i, a_1, s_j) = 1 + \gamma \phi(s_j) - \phi(s_i)$, where $i, j \in \{1, ..., 8\}$, and state potentials are randomly generated such that: $|\phi(s)| \le 20$, with $\gamma = 0.5$. The task is to compute $\hat{C}_{EPIC}(R)(s_1, a_1, s_2)$ over $1000$ simulations. For all reward samples, the mean $\mu(\hat{C}_{EPIC}(R)(s_1, a_1, s_2)) \approx 0$, but the standard deviation $\sigma(\hat{C}_{EPIC}(R)(s_1, a_1, s_2))$ varies based on coverage. In Figure \ref{undersampling}a, the reward sample has high coverage ($100\%$), hence, the number of observed and expected transitions are equal. In this scenario, EPIC is highly effective and all shaped rewards are mapped to the same value ($\approx 0$), resulting in a standard deviation $\sigma(\hat{C}_{EPIC}(R)(s_1, a_1, s_2)) = 0$, highlighting consistent reward canonicalization. In Figure \ref{undersampling}b, the reward sample has moderate coverage and the fraction of unsampled transitions is approximately $33\%$. As a result, $\sigma(\hat{C}_{EPIC}(R)(s_1, a_1, s_2)) = 2.84$, which is relatively high, signifying EPIC's sensitivity to unsampled transitions. In Figure \ref{undersampling}c, the reward sample exhibits low coverage and the fraction of unsampled transitions is approximately $77\%$, indicating a significant discrepancy between the number of observed and expected transitions. Consequently, $\sigma(\hat{C}_{EPIC}(R)(s_1, a_1, s_2)) = 7.36$, highlighting EPIC's increased instability due to unsampled transitions. For comparison, we include the DARD and SRRD estimates, which exhibit lower standard deviations, signifying greater stability. DARD reduces the effect of unsampled transitions by relying mostly on transitions that are closer to $s$ and $s'$ ($S'$ and $S''$), which typically comprise a smaller subset of states compared to those required by EPIC. However, this localized focus can make DARD highly sensitive to variations in the composition of transitions between the reward samples under comparison, since it might lack the context of transitions further from $s$ and $s'$, potentially limiting its robustness. In this paper, we use DARD as an experimental baseline.

\section{Approach: Sparsity Resilient Reward Distance (SRRD)} 
\label{sec:sard_section}
The motivation behind SRRD is to establish a direct reward comparison technique that imposes minimal assumptions about the structure and distribution of transitions in reward samples, ensuring robustness under transition sparsity. To derive SRRD, we integrate key characteristics of $C_{DARD}$ and $C_{EPIC}$ as follows:
\begin{itemize}
    \item $C_{DARD}$ eliminates the requirement that the set of states that can be reached from $s$ and $s'$ must be similar, thereby increasing the flexibility of transition sample distributions considered. We will refer to states that can be reached from $s'$ as $S_1$, and states from $s$ as $S_2$. 
    \item In $C_{DARD}$, the transitions $(S', A, S'')$ are generally in close proximity to $s$ and $s'$ because $S' \sim T(s, A)$ and $S'' \sim T(s', A)$. These transitions may not capture the states that are further away from $s$ and $s'$, potentially lacking the overall context of the reward sample. To address this issue, similar to how $C_{EPIC}$ uses transitions for the entire sample, $(S, A, S')$, we utilize transitions for the entire sample, which we denote as: $(S_3, A, S_4)$, where $S_3$ encompasses all initial states from the sampled transitions, and $S_4$ is the set of all states that are subsequent to $S_3$.
\end{itemize}

These modifications reduce the impact of unsampled transitions, as reward expectations are computed based on the observed structure of the sampled transitions, without assuming full coverage. With these considerations, we derive the modified canonical equation as:
\begin{equation}
    \label{c1}
    \begin{split}
        C_1(R)(s, a, s') & = R(s, a, s') + \mathbb{E}[\gamma R(s', A, S_1) - R(s, A, S_2)  - \gamma R(S_3, A, S_4)], 
    \end{split}
\end{equation}
\noindent where: $S_1$ and $S_2$ are subsequent states to $s'$ and $s$, respectively; $S_3$ encompasses all initial states from all sampled transitions; and $S_4$ are subsequent states to $S_3$. Applying $C_1$ to a potentially shaped reward $R'(s, a, s') = R(s, a, s') + \gamma \phi(s') - \phi(s)$, we get: 
$C_1(R')(s, a, s') = C_1(R)(s, a, s') + \phi_{res1},$
where,     
\begin{equation}
    \phi_{res1} = \mathbb{E}[\gamma^2\phi(S_1) - \gamma^2\phi(S_4) + \gamma \phi(S_3) - \gamma\phi(S_2)].
\end{equation}
\noindent $C_1$ is not theoretically robust since it yields the residual shaping $\phi_{res1}$ (the remaining shaping after canonicalization). To cancel $\mathbb{E}[\phi(S_i)], \, $ $\forall_i \in \{1,...,4\}$ in $\phi_{res1}$, we can add rewards $R(S_i, A, k_i)$ to induce potentials $\gamma \phi(k_i) - \phi(S_i)$; where $k_i$ can be any arbitrary set of states. This results in the equation:
\begin{equation}
    \label{eq:srrd_create}
    \begin{split}
        C_2(R)(s, & a, s') = R(s, a, s') + \mathbb{E}[\gamma R(s', A, S_1) 
        - R(s, A, S_2)  - \gamma R(S_3, A, S_4)  
        \\ & + \gamma^2 R(S_1, A, k_1) - \gamma R(S_2, A, k_2) + \gamma R(S_3, A, k_3) - \gamma^2 R(S_4, A, k_4) ].
    \end{split}
\end{equation}
\noindent Applying $C_2$ to a potentially shaped reward, we get: 
$C_2(R')(s, a, s') = C_2(R)(s, a, s') + \phi_{res2},$
where, 
\begin{equation}
    \phi_{res2} = \mathbb{E}[\gamma^3\phi(k_1) - \gamma^3 \phi(k_4) + \gamma^2 \phi(k_3) - \gamma^2\phi(k_2)].
\end{equation}

\noindent \textit{See Appendix \ref{sec:residual_potentials} for derivations of $\phi_{res1}$ and $\phi_{res2}$.} 

The canonical form $C_2$ is preferable to $C_1$, since it enables the selection of $k_i$ to eradicate $\phi_{res2}$. A convenient solution is to ensure that: $k_1 = k_4$ and $k_2 = k_3$ such that $\mathbb{E}[\phi(k_1)] = \mathbb{E}[\phi(k_4)]$ and $\mathbb{E}[\phi(k_2)] = \mathbb{E}[\phi(k_3)]$, resulting in $\phi_{res2} = 0$. We choose the solution: $k_1 = k_4 = S_5$, and $k_2 = k_3 = S_6$; where $S_5$ are subsequent states to $S_1$, and $S_6$ are subsequent states to $S_2$. This leads to the following SRRD definition:

\begin{definition}[Sparsity Resilient Canonically Shaped Reward]
\label{def:SRRD}
Let $R: \mathcal{S \times A \times S} \rightarrow \mathbb{R}$ be a reward function. Given distributions $\mathcal{D_S} \in \Delta\mathcal{S}$ and $\mathcal{D_A} \in \Delta\mathcal{A}$ over states and actions, let $S_3$ be the set of states sampled according to $\mathcal{D_S}$, and let $A$ be the set of actions sampled according to $\mathcal{D_A}$. Furthermore, let $T(S_4 | S_3, A)$ be a transition model governing the conditional distribution over next states, where, $S_4$ are subsequent states to $S_3$. For each $s \in S_3$ and $s' \in S_4$, let $S_1$ be the set of states sampled according to $T(S_1|s',A)$, and $\tilde{S}_2$ be the set of states sampled according to $T(\tilde{S}_2|s, A)$. Let $S_2$ represent non-terminal states in $\tilde{S_2}$. Similarly, let $S_5$ and $S_6$ be set of states sampled according to $T(S_5|S_1, A)$ and $T(S_6|S_2, A)$, respectively. The Sparsity Resilient Canonically Shaped Reward is defined as:
\begin{equation}
    \label{eq:srrd}
    \begin{split}
        C_{SRRD}(R)(s, & a, s') = R(s, a, s') + \mathbb{E}[\gamma R(s', A, S_1) 
        - R(s, A, S_2)  - \gamma R(S_3, A, S_4)  
        \\ & + \gamma^2 R(S_1, A, S_5) - \gamma R(S_2, A, S_6) 
        + \gamma R(S_3, A, S_6) - \gamma^2 R(S_4, A, S_5) ].
    \end{split}
\end{equation}
\end{definition}
Note that in $C_{SRRD}$, we first sample $\tilde{S_2}$ as subsequent states from $s$, and then derive $S_2$ as non-terminal states in $\tilde{S_2}$. This modification ensures that $(S_2, A, S_6) \subseteq (S_3, A, S_6)$, which is crucial for SRRD's robustness in Theorem \ref{th:performance_guarantee}. In practice though, the difference between $\hat{S}_2$ and $S_2$ is generally minimal, especially in long-horizon problems where terminal states tend to be fewer compared to non-terminal states. The SRRD canonicalization function is invariant to potential shaping as described by Proposition \ref{prop:prop2}. 

\begin{proposition}
    \label{prop:prop2}
    (The Sparsity Resilient Canonically Shaped Reward is Invariant to Shaping) Let $R:\mathcal{S \times A \times S} \rightarrow \mathbb{R} $ be a reward function and $\phi:\mathcal{S} \rightarrow \mathbb{R}$ be a state potential function. Applying $C_{SRRD}$ to a potentially shaped reward $R'(s, a, s') = R(s, a, s') + \gamma \phi(s') - \phi(s)$ satisfies: $C_{SRRD}(R) = C_{SRRD}(R').$
\end{proposition}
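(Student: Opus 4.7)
The plan is a direct substitution-and-cancellation argument. I would apply the shaping rewrite $R'(x, a, y) = R(x, a, y) + \gamma\phi(y) - \phi(x)$ inside each of the eight reward occurrences in Definition \ref{def:SRRD}, then use linearity of expectation to split $C_{SRRD}(R')(s, a, s')$ into an exact copy of $C_{SRRD}(R)(s, a, s')$ plus a residual depending only on $\phi$. The proposition then reduces to showing that this residual vanishes.

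To verify the cancellation, I would enumerate the eight potential contributions and track each $\gamma$-power. The deterministic piece $\gamma\phi(s') - \phi(s)$ produced by $R'(s, a, s')$ is cancelled by the deterministic pieces $-\gamma\phi(s')$ and $+\phi(s)$ generated by $+\gamma R'(s', A, S_1)$ and $-R'(s, A, S_2)$ respectively. The random-state potentials from the first four reward terms assemble into $\phi_{res1} = \mathbb{E}[\gamma^2\phi(S_1) - \gamma^2\phi(S_4) + \gamma\phi(S_3) - \gamma\phi(S_2)]$, exactly as in the derivation preceding the definition. The four auxiliary terms $+\gamma^2 R'(S_1, A, S_5)$, $-\gamma R'(S_2, A, S_6)$, $+\gamma R'(S_3, A, S_6)$, $-\gamma^2 R'(S_4, A, S_5)$ are engineered so that each of $\phi(S_1), \phi(S_2), \phi(S_3), \phi(S_4)$ appears a second time with opposite sign and matching $\gamma$-coefficient, cancelling $\phi_{res1}$. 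The only new potentials, $\gamma^3\phi(S_5)$ and $\gamma^2\phi(S_6)$, each appear twice with opposite signs because the construction chose $k_1 = k_4 = S_5$ and $k_2 = k_3 = S_6$, so they too vanish under expectation.

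The main obstacle is notational rather than computational: I must ensure that the two occurrences of each random variable $S_i$ in the cancelling pair refer to the \emph{same} sampled quantity, so that their expected potentials are identical and cancel exactly. I would therefore begin by restating the joint sampling scheme from Definition \ref{def:SRRD} as a single joint distribution over $(A, S_1, S_2, S_3, S_4, S_5, S_6)$ conditioned on the arguments $s$ and $s'$, and take all expectations with respect to that distribution. Note that the restriction of $S_2$ to the non-terminal entries of $\tilde{S}_2$ is irrelevant for shaping invariance (it is needed only for Theorem \ref{th:performance_guarantee}), since the pair $-\gamma\phi(S_2) + \gamma\phi(S_2)$ cancels for any common set. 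Once the shared random variables are made explicit, linearity of expectation and the pairwise cancellation above immediately yield $C_{SRRD}(R') = C_{SRRD}(R)$.
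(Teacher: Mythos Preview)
Your proposal is correct and follows essentially the same approach as the paper: direct substitution of $R' = R + \gamma\phi(s') - \phi(s)$ into each of the eight terms of $C_{SRRD}$, followed by linearity of expectation and pairwise cancellation of the potential contributions. The paper's proof simply lists all the potential terms and groups them into vanishing pairs, whereas you organize the same cancellation via the intermediate residual $\phi_{res1}$; your additional remarks about fixing a single joint distribution over $(A, S_1,\dots,S_6)$ and about the irrelevance of the $\tilde S_2$ versus $S_2$ restriction are accurate and make explicit points the paper leaves implicit.
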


\textit{Proof.} See Appendix \ref{sec:resilient_shaping}.

Given reward functions $R_A$ and $R_B$, the SRRD pseudometric is computed as: 
\begin{equation}
    \begin{split}
        D_{\text{SRRD}}(R_A, R_B) = D_\rho(C_{SRRD}(R_A), C_{SRRD}(R_B)),
    \end{split}
\end{equation}
\noindent where $D_\rho$ is the Pearson distance. For robustness, we establish an upper bound on regret showing that as $D_{\text{SRRD}} \rightarrow 0$, the performance difference between the policies induced by the reward functions under comparison, $R_A$ and $R_B$, approaches $0$, as described by Theorem \ref{regret_bound_th}. Depending on the transition model dictating the composition of $\{S_1, ..., S_6\}$, $C_{SRRD}$ can be invariant to shaping across various transition distributions without requiring full coverage, provided that, for each set of transitions $(S_i, A, S_j)$ in the reward expectation terms, all transitions in the cross-product $S_i \times A \times S_j$ have defined reward values. This is guaranteed when the full reward function $R$ is available.

In practical scenarios, the full reward function may be unavailable, and $C_{SRRD}$ can be approximated from reward samples, resulting in $\hat{C}_{SRRD}$ (see Definition \ref{def:SRRD_approx}). In these settings, the transitions needed for each reward expectation term in $\hat{C}_{SRRD}$, might be unsampled due to transition sparsity. Despite this challenge, approximations for $\hat{C}_{SRRD}$ are robust due to the strategic choices of $k_i$ (see Equation \ref{eq:srrd_create} and \ref{eq:srrd}), $S_5$ and $S_6$, which ensure that the approximations are ideal for the following two reasons: First, for any reward sample, we can compute reliable expectation estimates for the first six terms, since for each set of transitions $(S_i, A, S_j)$, $S_j$ is distributed conditionally based on $(S_i, A)$; hence, these transitions naturally align with the transition dynamics that dictate the nature of the reward sample. However, for transitions in the last two terms, $(S_3, A, S_6)$ and $(S_4, A, S_5)$, $S_6$ is not distributed conditionally based on $(S_3, A)$ but on $(S_2, A)$, and $S_5$ is not distributed conditionally based on $(S_4, A)$ but on $(S_1, A)$; hence these transitions may not align well with the transition dynamics that dictate the structure of the reward sample, making these transitions highly susceptible to being unsampled. Second, while there might be significant fractions of unsampled transitions in $(S_4, A, S_5)$, and $(S_3, A, S_6)$, a minimal set of sampled transitions are likely to exist, because:
\begin{itemize}
    \item \textbf{Transitions $(S_1, A, S_5) \subseteq (S_4, A, S_5)$}:
    
    Since $S_1$ are subsequent states to $s'$, and $S_4$ are subsequent states for all sampled transitions. It follows that, $S_1 \subseteq S_4$ (see example in Appendix \ref{sec:srrd_definitions}), hence, $(S_1, A, S_5) \subseteq (S_4, A, S_5)$.
    
    \item \textbf{Transitions $(S_2, A, S_6) \subseteq (S_3, A, S_6).$}

    $S_2$ is the set of non-terminal subsequent states from $s$. Since $S_3$ encompasses all initial non-terminal states from all sampled transitions, it follows that: $S_2 \subseteq S_3$, hence, $(S_2, A, S_6) \subseteq (S_3, A, S_6).$
    
\end{itemize}

Therefore, we can get decent fractions of sampled transitions in $(S_3, A, S_6)$ and $(S_4, A, S_5)$, which typically reduces the extent and impact of unsampled transitions in SRRD, making it robust under transition sparsity, as described in Section 4.1, Theorem  \ref{th:performance_guarantee}.

In summary, SRRD is designed to improve reward function comparisons under transition sparsity. To achieve this, SRRD introduces additional reward expectation terms into canonicalization, to ensure that rewards are standardized (remove shaping) based on the observed distribution of sampled transitions, without assuming full transition coverage. Regarding computational complexity, when employing the double-batch sampling method that involves a batch $B_V$ of $N_V$ transitions, and another batch $B_M$ of $N_M$ state-action pairs for canonicalization (refer to Appendix \ref{sample_approx_section}); EPIC has a complexity of $O(\max(N_V N_M, N_M^2))$, and both DARD and SRRD have complexities of $O(N_V N_M^2)$ (see Appendix \ref{sec:complexity}). This paper adopts the double-batch sampling approach to maintain consistency with prior works, however, alternative sample-based approximations are also viable. Specifically, Appendix \ref{sec:unbiased_estimates} explores a sample-based approximation method that employs unbiased estimates, and Appendix \ref{sec:regression_approx}, explores the use of regression to infer reward values for unsampled transitions. Across all these approximation variants, SRRD consistently outperforms both DARD and EPIC under conditions of transition sparsity, confirming its robustness.

\subsection{Relative Shaping Errors}
\label{sec:relative_errors}

This section provides a theoretical evaluation on the robustness of the sample-based approximations: $\hat{C}_{SRRD}$, $\hat{C}_{DARD}$, and $\hat{C}_{EPIC}$, to unsampled transitions. We first discuss the relevant definitions and assumptions for the analysis, then present Theorem \ref{th:performance_guarantee} comparing the three methods. For a reward function $R$, the structure of an arbitrary reward canonicalization method (such as $C_{SRRD}$, $C_{DARD}$, and $C_{EPIC}$) takes the form:
\begin{equation}
    \label{eq:ab_canonical}
    C_S(R)(s, a, s') = R(s, a, s') + \mathbb{E}\sum_{i=1}^{n-1}\left[\alpha_i R(S_i, A, S'_i)\right],
\end{equation}
where, $\alpha_i$ is a constant, $|\alpha_i| \le 1$, and $i \in \{1, ...,n-1\}$ are indices denoting the state subsets, $S_i, S'_i \subseteq \mathcal{S}$. The sample-based approximation for $C_S$ is denoted by $\hat{C}_S$. Given a non-zero reward sample $R$, we can bound the maximum range of each canonical reward by defining the upper bound canonical reward as follows:

\begin{definition} (Upper Bound Canonical Reward)
\label{def:upper_bound_def} Let $R$ be a non-zero reward sample defined over a set of transitions $B$, and let $\hat{C}_S$ be an arbitrary sample-based canonicalization method. Furthermore, let $Z = \max_{(s, a, s') \in B} (\left| R(s, a, s') \right|)$ be the maximum absolute reward. The upper bound canonical reward is given by:
\begin{equation}
    U(\hat{C}_S(R)(s, a, s')) = nZ
\end{equation}

\end{definition}

The justification for Definition \ref{def:upper_bound_def} is that $C_S(R)(s, a, s')$ has $n$-terms with absolute values bounded by $Z$ (both $|R(s, a, s')| \le Z$ and $|\alpha_i \mathbb{E}[R(S_i, A, S'_i)]| \le Z$), hence, $U(\hat{C}_S(R)(s, a, s')) = nZ$. The non-zero assumption on $R$ guarantees that there exists at least one sampled transition with a non-zero reward, ensuring that $U(\hat{C}_S(R)(s, a, s')) \ge 0$. To quantify the performance bounds of the canonicalization methods, we define the relative shaping error as follows: 
 
\begin{definition}(Relative Shaping Error (RSE))
    \label{def:RSN}
    Let $R'$ be a shaped, non-zero reward sample defined over a set of transitions $B$, and let \( \hat{C}_S \) be a sample-based reward canonicalization method, such that: $\hat{C}_S(R') = \hat{C}_S(R) + \phi_{R}$, where \( \phi_{R}\) is the residual shaping term. Suppose that $\phi_{R}$ can be expressed as: $\phi_{R} = \phi_{\tilde{R}} + K_\phi$, where $K_\phi$ is a constant that does not vary with $(s, a, s')$, and $\phi_{\tilde{R}}$ is the effective residual shaping term. Furthermore, let $U(\hat{C}_S(R)(s, a, s')) = nZ$ represent the upper bound of the unshaped canonical reward, where, $Z = \max_{(s, a, s') \in B} (\left| R(s, a, s') \right|)$. The relative shaping error is defined as:
    \begin{equation}
        \label{eq:rse}
        RSE(\hat{C}_S(R)(s, a, s')) =  \frac{|\phi_{\tilde{R}}(s, a, s')|}{U(\hat{C}_S(R)(s, a, s'))} =  \frac{|\phi_{\tilde{R}}(s, a, s')|}{nZ}.
    \end{equation}
\end{definition}
The relative shaping error (RSE) is designed to theoretically quantify the impact of residual shaping in reward distance comparisons. The denominator, \( U(\hat{C}_S(R)(s, a, s')) = nZ\), represents an upper bound on the magnitude of the base (unshaped) canonical reward, and it serves to normalize the impact of shaping. A low RSE suggests that \( U(\hat{C}_S(R)(s, a, s')) \) is substantially larger than \( |\phi_{\tilde{R}}(s, a, s')| \), indicating that the impact of shaping is likely minimal. Conversely, a high RSE implies that \( U(\hat{C}_S(R)(s, a, s')) \) is relatively small compared to \( |\phi_{\tilde{R}}(s, a, s')| \), highlighting a more likely significant influence of the effective residual shaping. Note that in the RSE definition, the effective residual shaping term is $\phi_{\tilde{R}} = \phi_{R} - K_\phi$, where $K_\phi$ is a constant that does not impact the Pearson distance (since its shift invariant), and hence, the reward distances. For a comprehensive discussion on the derivation of the RSE definition, refer to Appendix \ref{ap:rse_motivation}.

With regard to reward samples, we define forward and non-forward transitions as follows:

\begin{definition}[Forward transitions] Given a reward sample that spans a state space $S$, and an action space $A$. Consider the state subsets $S_i, S_j \subseteq S$. Transitions $(S_i, A, S_j)$ are forward transitions if $S_j$ is distributed conditionally based on $(S_i, A)$, according to the underlying transition dynamics of the reward sample.
\label{def:forward}
\end{definition}

\begin{definition}[Non-forward transitions] Given a reward sample that spans a state space $S$, and an action space $A$. Consider the state subsets $S_i, S_j, S_k \subseteq S$. Transitions $(S_i, A, S_j)$ are non-forward transitions when the states in $S_j$ are not distributed conditionally based on $(S_i, A)$, but are instead based on $(S_k, A)$, according to the underlying transition dynamics of the reward sample.
\label{def:non-forward}
\end{definition}

In reward canonicalization methods, given transitions $(S_i, A, S_j)$ needed in computing reward expectations, both forward and non-forward transitions can have unsampled transitions since canonicalization methods typically require the cross-product of all transitions from $S_i$ to $S_j$. However, forward transitions are generally more robust to being unsampled since they are usually in-distribution with the underlying transition dynamics governing the reward samples, hence, they naturally align with the progression of rewards in the sample. In contrast, non-forward transitions are highly prone to being unsampled (and also unrealizable) as they may include a significant fraction of transitions that are out-of-distribution with the reward sample’s transition dynamics. Based on the rationale that forward transitions are more robust to being unsampled compared to non-forward transitions, to make our analysis more tractable, we will assume that the fraction of unsampled transitions in forward transitions is negligible, leading to Theorem \ref{th:performance_guarantee}:

\begin{theorem}
\label{th:performance_guarantee}
Consider the reward comparison task on two equivalent non-zero reward samples that differ due to potential-based shaping, and share the same set of sampled transitions, $B$. During canonicalization, consider the forward and non-forward transition sets needed to compute the reward expectation terms, and assume that the fraction of unsampled forward transitions in both reward samples is negligible. Each reward sample can be expressed as $R'_i(s, a, s') = R(s, a, s') + \gamma \phi_i(s') - \phi_i(s)$ for $i \in \{1, 2\}$, where, $R$ is the unshaped reward sample, $\gamma$ is a discount factor, and $\phi_i(s)$ is the potential shaping function for $R'_i$. Under transition sparsity, the upper bound of the Relative Shaping Error (RSE) for \( \hat{C}_{SRRD} \) is lower than that of \( \hat{C}_{DARD} \) and \( \hat{C}_{EPIC} \) respectively, in the order:
\[
\text{RSE}(\hat{C}_{SRRD}) \le \frac{M}{3Z}; \quad \text{RSE}(\hat{C}_{DARD}) \le \frac{2M}{3Z}; \text{ RSE}(\hat{C}_{EPIC}) \le \frac{M}{Z},
\]
where, $M = \max_{s \in S}(|\phi_i(s)|)$ is the maximum magnitude of potential shaping across all states for the reward samples, and $Z = \max_{(s, a, s') \in B} (\left| R(s, a, s') \right|)$ is the maximum absolute value of the unshaped reward sample.

\end{theorem}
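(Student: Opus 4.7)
The plan is to trace how potential-based shaping propagates through each sample-based canonicalization, identify the residual shaping $\phi_{R}$, extract its $(s, a, s')$-dependent component $\phi_{\tilde{R}}$ from the $(s, a, s')$-independent constant $K_\phi$, and bound $|\phi_{\tilde{R}}|$ in terms of $M$. Applying Definition \ref{def:RSN} with $U(\hat{C}_S(R)(s, a, s')) = nZ$, where $n=4$ for EPIC and DARD and $n=8$ for SRRD, then converts each absolute bound into the claimed RSE bound.

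First, for each method $\hat{C}_S \in \{\hat{C}_{EPIC}, \hat{C}_{DARD}, \hat{C}_{SRRD}\}$, I would apply $\hat{C}_S$ to $R'_i(s, a, s') = R(s, a, s') + \gamma \phi_i(s') - \phi_i(s)$ and expand term by term. Each reward term of the form $\alpha_i R(S_i, A, S_i')$ in the canonical equation contributes a shaping term $\alpha_i[\gamma \phi_i(S_i') - \phi_i(S_i)]$. Next, I would invoke the forward-sampled assumption: the sample-based expectations over forward transition sets closely approximate their true expectations, so by the cancellation structure of the canonicalizations (Proposition \ref{prop:prop2} for SRRD and the analogous derivations for EPIC and DARD), all forward shaping contributions cancel with one another and with the shaping inside the base term $R(s, a, s')$. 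Hence, under sparsity, only non-forward expectation terms contribute to $\phi_{R}$.

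I would then enumerate the non-forward terms per method --- three for EPIC ($(s', A, S')$, $(s, A, S')$, $(S, A, S')$), one for DARD ($(S', A, S'')$), and two for SRRD ($(S_3, A, S_6)$ and $(S_4, A, S_5)$) --- and compute their residual contributions. Using $|\phi_i(\cdot)| \le M$ and $\gamma \in [0,1]$, each non-forward shaping $\alpha_i[\gamma \phi_i(S_i') - \phi_i(S_i)]$ has absolute value at most $2|\alpha_i| M$. After decomposing $\phi_{R} = K_\phi + \phi_{\tilde{R}}$ and summing the uncanceled $(s, a, s')$-dependent shaping pieces, I would establish $|\phi_{\tilde{R}, \mathrm{EPIC}}| \le 4M$, $|\phi_{\tilde{R}, \mathrm{DARD}}| \le 8M/3$, and $|\phi_{\tilde{R}, \mathrm{SRRD}}| \le 8M/3$. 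Dividing by $U(\hat{C}_S(R)(s, a, s')) = nZ$ yields $\mathrm{RSE}(\hat{C}_{EPIC}) \le M/Z$, $\mathrm{RSE}(\hat{C}_{DARD}) \le 2M/(3Z)$, and $\mathrm{RSE}(\hat{C}_{SRRD}) \le M/(3Z)$.

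The hard part will be the bookkeeping in this last step: cleanly isolating the $(s, a, s')$-dependent component of $\phi_{R}$ and obtaining the tight numerical constant $8M/3$ (rather than naive bounds like $6M$) requires carefully accounting for internal cancellations among the non-forward shaping contributions, and recognizing which biased sample-mean quantities absorb into $K_\phi$. For SRRD in particular, the key structural facts are the overlap relations $(S_1, A, S_5) \subseteq (S_4, A, S_5)$ and $(S_2, A, S_6) \subseteq (S_3, A, S_6)$ established in Section \ref{sec:sard_section}, together with the identifications $k_1 = k_4 = S_5$ and $k_2 = k_3 = S_6$. These let each non-forward term inherit partial forward-sample structure, which is what allows SRRD to match DARD's absolute residual bound while benefiting from the larger denominator $n=8$, halving the relative shaping error from $2M/(3Z)$ to $M/(3Z)$.
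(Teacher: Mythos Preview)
Your proposal has the right high-level shape but misses the paper's actual mechanism, and the numerical constants you state ($4M$, $8M/3$, $8M/3$) are reverse-engineered rather than derived.

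First, a minor miscount: for EPIC, $(S, A, S')$ is \emph{forward} (since $S'$ is distributed conditionally on $(S,A)$); only $(s', A, S')$ and $(s, A, S')$ are non-forward. More importantly, you keep $n$ fixed at the full term count ($4$, $4$, $8$) throughout, and then assert the residual bounds needed to hit the target ratios. But there is no natural argument that yields $|\phi_{\tilde R,\mathrm{DARD}}| \le 8M/3$: DARD has exactly one non-forward term, $-\gamma R(S',A,S'')$, whose shaping contribution is bounded by $2M$, not $8M/3$.

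The paper's proof works differently. It introduces a sparsity parameter (fraction of sampled transitions) for each non-forward expectation term, then analyzes best, average, and \emph{worst} cases. In the worst case the non-forward terms contribute essentially zero to the approximation, so the effective $n$ in $U(\hat C_S(R)) = nZ$ \emph{drops}: to $n=2$ for EPIC, $n=3$ for DARD, and $n=6$ for SRRD. With the residual bounded by $2M$ in each worst case, this gives $2M/(2Z) = M/Z$, $2M/(3Z)$, and $2M/(6Z) = M/(3Z)$ directly. The subset relations $(S_1,A,S_5)\subseteq(S_4,A,S_5)$ and $(S_2,A,S_6)\subseteq(S_3,A,S_6)$ matter because they guarantee the sparsity fractions $p,q$ for SRRD's non-forward terms are bounded away from zero by forward-transition content; the paper then takes the limiting case $p,q\to 0$ to obtain the worst-case $n=6$. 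Your proposal never varies $n$ and never does the case split, so the constants you need cannot actually be established by the route you sketch.
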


\begin{proof}
    See Appendix \ref{ap:Appendix_A1}
\end{proof}

In more precise terms, Theorem \ref{th:performance_guarantee} evaluates the robustness \(\hat{C}_{SRRD} \), \(\hat{C}_{DARD} \), and \(\hat{C}_{EPIC} \) against residual shaping introduced by unsampled transitions from non-forward transitions, which are more likely to be out-of-distribution with the transition dynamics that generated the reward samples. The theorem assesses the upper bounds of the RSE across the three canonicalization methods to determine their sensitivity to residual shaping, assuming that the reward samples under comparison induce the same optimal policies, and share the same set of transitions. These assumptions ensure that we isolate the differences between the canonicalized reward samples to variations in residual shaping. A lower upper bound implies that a canonicalization method is likely to be less sensitive to the effects of residual shaping and, hence, it is more likely to reveal the actual similarity between the unshaped reward samples under comparison. As shown by the upper bounds of the relative shaping errors, the approximation for \(\hat{C}_{SRRD} \) is theoretically more robust, compared to those of \(\hat{C}_{EPIC} \) and \(\hat{C}_{DARD} \), since it has the smallest upper bound.

\section{Experiments}

To empirically evaluate SRRD, we examine the following hypotheses:
\begin{enumerate}[left=0pt]
\item[]
\begin{description}
    \item[H1:] SRRD is a reliable reward comparison pseudometric under high transition sparsity.
    \item[H2:] SRRD can enhance the task of classifying agent behaviors based on their reward functions.
\end{description}
\end{enumerate}
\noindent In these hypotheses, we compare the performance of  SRRD to both EPIC and DARD using sample-based approximations (see Appendix \ref{sample_approx_section}). In \textbf{H1}, we analyze SRRD's robustness under transition sparsity resulting from limited sampling and feasibility constraints. In \textbf{H2}, we investigate a practical use case to classify agent behaviors using their reward functions. Experiment 1 tests \textbf{H1} and Experiment 2 tests \textbf{H2}.

\paragraph{Domain Specifications} 
To conduct Experiment 1, we need the capability to vary the number of sampled transitions, since the goal is to test SRRD's performance under different levels of transition sparsity. Therefore, Experiment 1 is performed in the Gridworld and Bouncing Balls domains, as they provide the flexibility for parameter variation to control the size of the state and action spaces\footnote{Experiment 1 excludes the Drone Combat, MIMIC-IV, Robomimic, Montezuma's Revenge, and StarCraft II, since these domains have very large state and action spaces that hinder effective coverage computation.}. These two domains have also been studied in the EPIC and DARD papers, respectively. The Gridworld domain simulates agent movement from a given initial state to a specified terminal state under a static policy, within $200$ timesteps. States are defined by $(x, y)$ coordinates where $0 \le x < N$ and $ 0 \le y < M$ implying $|\mathcal{S}| = NM$. The action space consists of four cardinal directions (single steps), and the environment is stochastic, with a probability $\epsilon$ of transitioning to any random state irrespective of the selected action. When $\epsilon = 0$, a feasibility constraint is imposed, preventing the agent from making random transitions. The Bouncing Balls domain, adapted from \cite{wulfe-et-al:dard}, simulates a ball's motion from a starting state to a target state while avoiding randomly mobile obstacles. These obstacles add complexity to the environment since the ball might need to change its strategy to avoid obstacles (at a distance, $d = 3$). Each state is defined by the tuple  $(x, y, d)$, where $(x, y)$ indicates the ball's current location, and $d$ indicates the ball's Euclidean distance to the nearest obstacle, such that: $0 \le x < N$ and $0 \le y < M$. The action space includes eight directions (cardinals and ordinals), and we also define the stochasticity-level parameter $\epsilon$ for choosing random transitions. 

The objective for Experiment 2 is to test SRRD's performance in diverse and near-realistic domain settings, where we have no control over factors such as the nature of rewards and the level of transition sparsity. Therefore, in addition to the Gridworld and the Bouncing Balls domains used in Experiment 1 (but with fixed parameters), we extend our evaluation to the following testbeds: Drone Combat - a battlefield environment between two swarms, adapted from a predator-prey gym environment \citep{Anurag:2019}, Montezuma's Revenge - an Atari benchmark dataset with human demonstrations for the Montezuma's Revenge game \citep{atari_grand_challenge}, StarCraft II - a simulation of combat scenarios where a controlled multiagent team aims to defeat a default AI enemy team \citep{c:125}, Robomimic - an open source dataset of robotics manipulation tasks incorporating both human and simulated demonstrations \citep{robomimic2021}, and MIMIC-IV - a real-world de-identified electronic health dataset for patients admitted at an emergency or intensive care unit at Beth Israel Deaconess Medical Center in Boston, MA \citep{johnson2023mimic}. These domains resemble complex scenarios with large state and action spaces, enabling us to test SRRD's (as well as the other pseudometrics) generalization to near-realistic scenarios. Further details about these domains, including information about the state and action features are described in Appendix \ref{ap:Appendix_C.2.1}.

\paragraph{Reward Functions} Extrinsic reward functions are manually defined using a combination of state and action features. For the Drone Combat, Montezuma's Revenge, and StarCraft II domains, we use the default game engine scores as the reward function, and for Robomimic, rewards are based on task completion (see Appendix \ref{ap:Appendix_C.1.1}). For the Gridworld and Bouncing Balls domains, in each reward function, the reward values are derived from the decomposition of state and action features, where, $(s_{f1}, ..., s_{fn})$ is from the starting state $s$; $(a_{f1}, ..., a_{fm})$ is from the action $a$; and $(s'_{f1}, ..., s'_{fn})$ is from the subsequent state $s'$. For the Gridworld domain, these features are the $(x, y)$ coordinates, and for the Bouncing Balls domain, these include $(x, y, d)$, where $d$ is the distance of the obstacle nearest to the ball. For each unique transition, using randomly generated constants: $\{u_1,..., u_n\}$ for incoming state features; $\{w_1,..., w_m\}$ for action features; $\{v_1,...v_n\}$ for subsequent state features, we create polynomial and random rewards as follows:
\begin{align*}
    \text{Polynomial:} \quad & R(s, a, s') = u_1 s^{\alpha}_{f1} + \ldots + u_n s^{\alpha}_{fn} + w_1 a^{\alpha}_{f1} + \ldots + w_m a^{\alpha}_{fm} + v_1 s'^{\alpha}_{f1} + \ldots + v_n s'^{\alpha}_{fn}, \\
    & \text{where } \alpha \text{ is randomly generated from } 1\text{-}10, \text{ denoting the degree of the polynomial.} \\
    \text{Random:} \quad & R(s, a, s') = \beta, \\
    & \text{where } \beta \text{ is a randomly generated reward for each unique transition.}
\end{align*}
For the polynomial rewards, $\alpha$ is the same across the entire sample, but other constants vary between different transitions. The same reward relationships are used to model potential shaping functions. In addition, we also explore linear and sinusoidal reward models (see Appendix \ref{ap:Appendix_C.1.1}). For complex environments such as StarCraft II and MIMIC-IV, specifying reward functions can be challenging, hence we also incorporate IRL to infer rewards from demonstrated behavior. For IRL, we consider the following methods: Maximum Entropy IRL (Maxent) \citep{zeibart-et-al:maxent}; Adversarial IRL (AIRL) \citep{fu_luo:AIRL}; and Preferential-Trajectory IRL (PTIRL) \citep{c:119}. The algorithms are summarized in Appendix \ref{ap:Appendix_C.2.3}.

\subsection{Experiment 1: Transition Sparsity} 
\label{sec:experiment1}
\paragraph{Objective:} The goal of this experiment is to test SRRD's ability to identify similar reward samples under transition sparsity as a result of limited sampling and feasibility constraints.

\paragraph{Relevance:} The EPIC and DARD pseudometrics struggle in conditions of high transition sparsity since they are designed to compare reward functions under high coverage. SRRD is developed to be resilient under transition sparsity and this experiment tests SRRD's performance relative to both EPIC and DARD.

\paragraph{Approach:}
This experiment is conducted on a $20 \times 20$ Gridworld domain and a $20 \times 20$ Bouncing Balls domain. For all simulations, manual rewards are used since they enable the flexibility to vary the nature of the relationship between reward values and features, enabling us to test the performance of the pseudometrics on diverse reward values, which include polynomial and random reward relationships. We also vary the shaping potentials such that  $|R(s, a, s')| \le |\gamma\phi(s') - \phi(s)| \le 5|R(s, a, s')|$. For each domain, a ground truth reward function (\textit{GT}) and an equivalent potentially shaped reward function (\textit{SH}) are generated, both with full coverage ($100\%$). Using rollouts from a uniform policy, rewards $R$ and $R'$ are sampled from \textit{GT} and \textit{SH} respectively, and these might differ in transition composition. After sample generation, $R$ and $R'$ are canonicalized and reward distances are computed using common transitions between the reward samples, under varying levels of coverage due to limited sampling and feasibility constraints. The SRRD, DARD, and EPIC reward distances are computed, as well as DIRECT, which is the Pearson distance between the reward samples,  without canonicalization. Since $R$ and $R'$ are drawn from equivalent reward functions, an accurate pseudometric should yield distances close to the minimum Pearson distance, $D_\rho = 0$; and the least accurate should yield a distance close to the maximum, $D_\rho = 1$. DIRECT serves as a worst-case performance baseline, since it computes reward distances without canonicalization (needed to remove shaping). We perform $200$ simulation trials for each comparison task and record the mean reward distances.

\begin{figure}[t]
  \centering
  \includegraphics[width=1.0\textwidth]{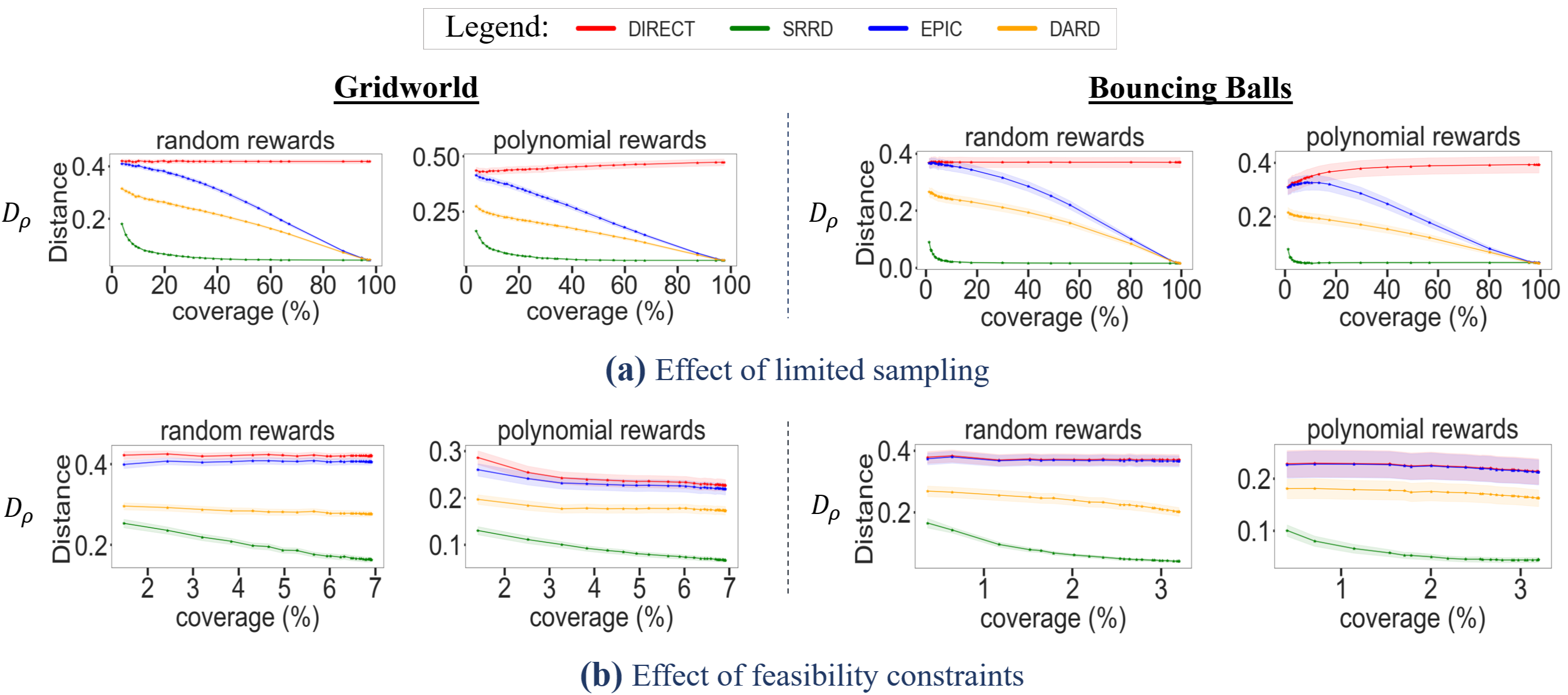}
    \caption{(Transition Sparsity).
    The figure illustrates the performance of reward comparison pseudometrics in identifying the similarity between potentially shaped reward functions under two conditions: (a) limited sampling and (b) feasibility constraints. A more accurate pseudometric yields a Pearson distance \( D_\rho \) close to $0$, indicating a high degree of similarity between shaped reward functions, while a less accurate pseudometric results in \( D_\rho \) close to $1$. In both experiments, transition coverage is calculated as the ratio of sampled transitions to the set of all theoretically possible transitions \( |S \times A \times S| \), including both feasible and unfeasible transitions. Each coverage data point represents an average over $200$ simulations at a constant policy rollout count, with coverage data points generated by varying the number of policy rollouts from $1$ to $2000$ (see Appendix \ref{ap:Appendix_C.1.4}). In panel (a), \textbf{EPIC} and \textbf{DARD} lag behind \textbf{SRRD} at low transition coverage due to limited sampling, but their performance gradually improves as coverage increases with higher rollout counts. In panel (b), movement restrictions significantly reduce transition coverage, regardless of rollout sampling frequency, which negatively impacts \textbf{EPIC's} performance (almost similar to \textbf{DIRECT}).
    }
    \label{fig:coverage}
\end{figure}

\paragraph {Simulations and Results: Limited Sampling:}
Using rollouts from a uniform policy, we sample $R$ and $R'$ from \textit{GT} and \textit{SH}, respectively, under a stochasticity-level parameter, $\epsilon = 0.1$. The number of transitions sampled is controlled by varying the number of policy rollouts from $1$ to $2000$. The corresponding coverage is computed as the number of sampled transitions over the number of all theoretically possible transitions ($=|\mathcal{S \times A \times S}|)$. Figure \ref{fig:coverage}a summarizes the variation of reward distances to transition coverage across different levels of transition sampling in the Gridworld and Bouncing Balls domains. As shown, SRRD outperforms other baselines as it converges towards $D_\rho = 0$ faster, even when coverage is low. DARD generally outperforms EPIC, however, it is highly prone to shaping compared to SRRD since it is more sensitive to unsampled transitions. All pseudometrics generally outperform DIRECT, illustrating the value of removing shaping via canonicalization. No significant differences are observed in the general trends of results between the two domains, and additional simulations are presented in Appendix \ref{ap:Appendix_C.1.3}. In conclusion, the proposed SRRD consistently outperforms both EPIC and DARD under limited sampling.

\paragraph{Simulations and Results: Feasibility Constraints:} 
Using rollouts (ranging from 1 to 2000) from a uniform policy, we sample $R$ and $R'$ from \textit{GT} and \textit{SH}, respectively. To impose feasibility constraints, we set the stochasticity-level parameter, $\epsilon = 0$, restricting random transitions between states such that only movement to adjacent states is permitted. These movement restrictions ensure that coverage is generally low ($<10\%$), even though the number of rollouts is similar to those in the first experiment. Figure \ref{fig:coverage}b summarizes the results for the variation of reward distances to transition coverage under the movement restrictions. As shown, SRRD significantly outperforms all the baselines indicating its high robustness.

\subsection{Experiment 2: Classifying Agent Behaviors}

\paragraph{Objective:} 
The goal of this experiment is to assess SRRD's effectiveness as a distance measure in classifying agent behaviors based on reward functions. If SRRD is robust, it should identify similarities among reward functions from the same agents while differentiating reward functions from distinct agents.

\paragraph{Relevance:}This experiment demonstrates how reward comparison pseudometrics can be used to interpret reward functions by relating them to agent behavior. In many real-world situations, samples of agent behaviors are available, and there is a need to interpret the characteristics of the agents that produced these behaviors. For example, several works have attempted to predict player rankings and strategies using past game histories \citep{c:115, yanai2022q}. This experiment takes a similar direction by attempting to classify the identities of agents from their unlabeled past trajectories using reward functions. The reliance on reward functions rather than the original trajectories is based on the premise that reward functions are ``succinct'' and ``robust'', hence a preferable means to interpret agent behavior \citep{Abbeel_Ng:IRL}.
\paragraph{Approach:} In this experiment, we train a $k$-nearest neighbors ($k$-NN) classifier to classify unlabeled agent trajectories by indirectly using computed rewards, to identify the agents that produced these trajectories. We examine the $k$-NN algorithm since it is one of the most popular distance-based classification techniques. The experiment is conducted across all domains, and since we want to maximize classification accuracy, we consider different IRL rewards, including: Maxent, AIRL, PTIRL as well as manual (extrinsic) rewards. For manual rewards: we utilize the default game score for the Drone Combat, StarCraft II, and Montezuma's Revenge domains; environmental sparse rewards for task completion in the Robomimic domain; and feature-based (for example, polynomial) rewards for the Gridworld, Bouncing Balls and MIMIC-IV domains, where we induce random potential shaping. For each domain, we examine SRRD, DIRECT, EPIC, and DARD as distance measures for a $k$-NN reward classification task. The steps for the approach are as follows:
\begin{enumerate}
    \item Create agents $X = \{x_1, ..., x_m\}$ with distinct behaviors.
    \item For each agent \( x_i \in X \), generate a collection of sets \( \{ \zeta^{x_i}_1, \dots, \zeta^{x_i}_p \} \), where each \( \zeta^{x_i}_j = \{ \tau^{x_i}_{j, 1}, \dots, \tau^{x_i}_{j, q} \} \) is a \( q \)-sized set of trajectories. Compute reward functions \( \{ R^{x_i}_1, \dots, R^{x_i}_p \} \) using IRL or manual specification based on each corresponding \( \zeta^{x_i}_j \).
    \item Randomly shuffle all the computed reward functions $R$ (from all agents), and split into the training $R_{train}$ and testing $R_{test}$ sets.
    \item Train a $k$-NN classifier using each pseudometric (as distance measure) on $R_{train}$ and test it on $R_{test}$.
\end{enumerate}

\begin{table}[H]
  \centering
      \caption{ The accuracy (\%) of different reward comparison distances in $k$-NN reward classification.}
  \label{tab:classification_accuracy}
  \small 
  \begin{tabular}{|l|l|rrrr|}
    \toprule
    \textbf{\scriptsize DOMAIN} & \textbf{\scriptsize REWARDS} & \textbf{\scriptsize DIRECT} & \textbf{\scriptsize EPIC} & \textbf{\scriptsize DARD} & \textbf{\scriptsize SRRD} \\
    \midrule
    \multirow{4}{*}{Gridworld} 
      & Manual & 69.8 & 69.3 & 70.0 & \textbf{75.8} \\
      & Maxent & 57.4 & 57.5 & 68.9 & \textbf{70.0} \\
      & AIRL   & 82.3 & 84.9 & 85.0 & \textbf{86.2} \\
      & PTIRL  & 82.2 & 84.2 & 83.4 & \textbf{86.0} \\
    \midrule
    \multirow{4}{*}{\begin{tabular}[c]{@{}l@{}}Bouncing \\Balls\end{tabular}} 
      & Manual & 46.5 & 47.3 & 52.0 & \textbf{55.2} \\
      & Maxent & 39.7 & 46.0 & \textbf{50.8} & 49.9 \\
      & AIRL   & 41.2 & 46.1 & 49.8 & \textbf{56.3} \\
      & PTIRL  & 70.3 & 71.1 & 69.5 & \textbf{72.4} \\
    \midrule
    \multirow{4}{*}{\begin{tabular}[c]{@{}l@{}}Drone \\Combat\end{tabular}} 
      & Manual & 67.1 & 67.2 & 66.2 & \textbf{73.9} \\
      & Maxent & 70.3 & \textbf{77.7} & 73.2 & 76.7 \\
      & AIRL   & 90.1 & 90.7 & 92.3 & \textbf{93.8} \\
      & PTIRL  & 52.5 & 63.7 & 65.1 & \textbf{78.3} \\
    \midrule
    \multirow{4}{*}{\begin{tabular}[c]{@{}l@{}}StarCraft II\end{tabular}} 
      & Manual & 65.5 & 67.4 & 69.5 & \textbf{76.5} \\
      & Maxent & 72.3 & 74.1 & 73.9 & \textbf{74.8} \\
      & AIRL   & 75.1 & 75.3 & \textbf{78.1} & 77.0 \\
      & PTIRL  & 77.2 & 78.1 & 77.6 & \textbf{79.8} \\
    \midrule
   \multirow{4}{*}{\begin{tabular}[c]{@{}l@{}}Montezuma's \\Revenge\end{tabular}} 
      & Manual & 66.4 & 70.1 & 68.3 & \textbf{73.5} \\
      & Maxent & 67.8 & 69.1 & 68.7 & \textbf{71.2} \\
      & AIRL   & 68.2 & 71.4 & 69.8 & \textbf{72.3} \\
      & PTIRL  & 68.2 & 69.6 & \textbf{70.6} & 70.2 \\
    \midrule
    \multirow{4}{*}{\begin{tabular}[c]
    {@{}l@{}}Robomimic\end{tabular}} 
      & Manual & 78.2 & 80.3 & 79.5 & \textbf{82.4} \\
      & Maxent & 82.3 & 86.8 & 79.5 & \textbf{89.8} \\
      & AIRL   & 85.9 & 87.1 & 86.3 & \textbf{91.8} \\
      & PTIRL  & 80.3 & 83.6 & 83.1 & \textbf{84.2} \\
    \midrule
    \multirow{4}{*}{\begin{tabular}[c]{@{}l@{}}MIMIC-IV \end{tabular}} 
      & Manual & 53.5 & 56.5 & 57.3 & \textbf{59.2} \\
      & Maxent & 57.8 & 59.1 & 53.9 & \textbf{60.2} \\
      & AIRL  & 56.5 & 60.7 & 57.6 & \textbf{63.3} \\
      & PTIRL  & 58.9 & \textbf{61.4} & 60.3 & 60.9 \\
    \bottomrule
  \end{tabular}
\end{table}

Across all domains, in step 1 and 2, fixed parameters are defined such that: $m$ - is the number of distinct agent policies; $p$ - is the number of trajectory sets per agent; and $q$ - is the number of trajectories per set in each IRL run (refer to Appendix \ref{ap:Appendix_C.2.2}). In step 1, different agent behaviors are controlled by varying the agents' policies. In step 4, to train the classifier, grid-search is used to identify candidate values for $k$ and $\gamma$, and twofold cross-validation (using $R_{train}$) is used to optimize hyper-parameters based on accuracy. Since we assume potential shaping, $\gamma$ is a hyperparameter as its value is unknown beforehand. To classify a reward function $R_i \in R_{test}$, we traverse reward functions $R_j \in R_{train}$, and compute the distance, $D_{\rho}(R_i, R_j)$ using the reward pseudometrics. We then identify the top $k$-closest rewards to $R_i$, and choose the label of the most frequent class. We select a training to test set ratio of $70:30$, and repeat this experiment $200$ times.

\paragraph{Simulations and Results:} Table 1 summarizes experimental results. As shown, SRRD generally achieves higher accuracy compared to DIRECT, EPIC and DARD across all domains, indicating SRRD effectiveness at discerning similarities between rewards produced by the same agents, and differences between those generated by different agents. This trend is more pronounced with manual rewards where SRRD significantly outperforms other baselines. This can be attributed to potential shaping, which is intentionally induced in manual rewards that SRRD is specialized to tackle. Therefore, SRRD proves to be a more effective distance measure at classifying rewards subjected to potential shaping. For IRL-based rewards such as Maxent, AIRL, and PTIRL, while we assume potential shaping, non-potential shaping could be present. This explains the reduction in SRRD's performance gap over EPIC and DARD, as well as the few instances where EPIC and DARD outperform SRRD, though SRRD is still generally dominant. We also observe that all the pseudometrics tend to perform better on AIRL rewards compared to other IRL-based rewards. This result is likely due to the formulation of the AIRL algorithm, which is designed to effectively mitigate the effects of unwanted shaping in reward approximation \citep{fu_luo:AIRL}, thus providing more consistent rewards. Overall, SRRD, EPIC, and DARD outperform DIRECT, emphasizing the importance of canonicalization at reducing the impact of shaping.

To verify the validity of results, Welch's t-tests for unequal variances are conducted across all domain and reward type combinations, to test the null hypotheses: (1) $\mu_{\text{SRRD}} \le \mu_{\text{DIRECT}}$, (2) $\mu_{\text{SRRD}} \le \mu_{\text{EPIC}}$, and (3) $\mu_{\text{SRRD}} \le \mu_{\text{DARD}}$; against the alternative: (1) $\mu_{\text{SRRD}} > \mu_{\text{DIRECT}}$, (2) $\mu_{\text{SRRD}} > \mu_{\text{EPIC}}$, and (3) $\mu_{\text{SRRD}} > \mu_{\text{DARD}}$, where $\mu$ represents the sample mean. We reject the null when the p-value $< 0.05$ (level of significance), and conclude that: (1) $\mu_{\text{SRRD}} > \mu_{\text{DIRECT}}$ for all instances; (2) $\mu_{\text{SRRD}} > \mu_{\text{EPIC}}$ for $22$ out of $28$ instances, and (3) $\mu_{\text{SRRD}} > \mu_{\text{DARD}}$ for $24$ out of $28$ instances. These tests are performed assuming normality as per central limit theorem, since the number of trials is $200$. For additional details about the tests, refer to Appendix \ref{welch_tests}. In summary, we conclude that SRRD is a more effective distance measure for classifying reward samples compared to its baselines. 

\section{Conclusion and Future Work}
This paper introduces SRRD, a reward comparison pseudometric designed to address transition sparsity, a significant challenge encountered when comparing reward functions without high transition coverage. Conducted experiments, and theoretical analysis, demonstrate SRRD's superiority over state-of-the-art pseudometrics, such as EPIC and DARD, under limited sampling and feasibility constraints. Additionally, SRRD proves effective as a distance measure for $k$-NN classification using reward functions to represent agent behavior. This implies that SRRD can find higher similarities between reward functions generated by the same agent and higher differences between reward functions that are generated from different agents. 

Most existing studies, including ours, have primarily focused on potential shaping, as it is the only additive shaping technique that guarantees policy invariance \citep{Ng_et_al:invariance, jenner:calculas}. Future research should consider the effects of non-potential shaping on SRRD (see Appendix \ref{ap:Appendix_B.4}) or random perturbations, as these might distort reward functions that would otherwise be similar. This could help to standardize and preprocess a wider range of rewards that might not necessarily be potentially shaped. Future studies should also explore applications of reward distance comparisons in scaling reward evaluations in IRL algorithms. For example, iterative IRL approaches such as MaxentIRL, often perform policy learning to assess the quality of the updated reward in each training trial. Integrating direct reward comparison pseudometrics to determine if rewards are converging, could help to skip the policy learning steps, thereby speeding up IRL. Finally, the development of reward comparison metrics has primarily aimed to satisfy policy invariance. A promising area to examine in the future is multicriteria policy invariance, where invariance might be conditioned to different criteria. For example, in the context of reward functions in Large Language Models (LLMs), it might be important to compute reward distance pseudometrics that consider different criteria such as bias, safety, or reasoning, to advance interpretability, which could be beneficial for applications such as reward fine-tuning and evaluation \citep{lambert2024rewardbench}.

\section*{Acknowledgments}
This research was funded in part by the Air Force Office of Scientific Research Grant No. FA9550-20-1-0032, the Office of Naval Research Grant No. N00014-19-1-2211, and Fulbright-CAPES, Grant No. 88881.625406/2021-01. We thank the anonymous reviewers for their valuable suggestions, which helped to improve the paper.

\bibliographystyle{tmlr}

\newpage 
\appendix
\section{Derivations, Theorems and Proofs}

\subsection{Sample-Based Approximations}
\label{sample_approx_section}

While the EPIC, DARD, and SRRD pseudometrics can be computed exactly in small environments, doing so in environments with large or infinite state-action spaces is generally infeasible because it requires evaluating rewards for all possible transitions. To address this computational barrier, we adopt sample-based approximations for scalability. Specifically, we employ the \textbf{double-batch sampling} technique introduced by \cite{gleave-et-al:differences} to estimate $C_{EPIC}$ and was later applied by \cite{wulfe-et-al:dard} to approximate $C_{DARD}$. This technique uses two separate batches of samples: $B_V$, a set of $N_V$ transitions; and $B_M$, a set of $N_M$ state-action pairs that are sampled from the joint state-action space. To ensure consistency across the three pseudometrics, we extend the double-batch sampling approach to estimate $C_{SRRD}$. The resulting sample-based approximations used in this paper are defined as follows:

\begin{definition}
\label{def:sample_epic}
(Sample-based EPIC) Given a batch $B_V$ of $N_V$ samples from the coverage distribution $\mathcal{D}$, and a batch $B_M$ of $N_M$ samples from the joint state and action distributions, $\mathcal{D}_S \times \mathcal{D}_A$. For each $(s, a, s') \in B_V$, the canonically shaped reward is approximated by taking the mean over $B_M$:
    \begin{equation}
    \label{eq:epic_approximation1}
    \begin{aligned}
        \hat{C}_{EPIC}(R)(s, a, s') &= R(s, a, s') + \frac{\gamma}{N_M}\sum_{(x, u) \in B_M} R(s', u, x) 
          - \frac{1}{N_M}\sum_{(x, u) \in B_M} R(s, u, x) \\
    &\quad\quad - \frac{\gamma}{N_M^{2}}\sum_{(x, \cdot) \in B_M} \sum_{(x', u) \in B_M} R(x, u, x')  
    \end{aligned}
    \end{equation}
\end{definition}

\begin{definition}
\label{def:sample_dard}
(Sample-based DARD) Given a batch $B_V$ of $N_V$ samples from the coverage distribution $\mathcal{D}$, and a batch $B_M$ of $N_M$ samples from the joint state and action distributions, $\mathcal{D}_S \times \mathcal{D}_A$.  For each $(s, a, s')$ transition in $B_V$, we derive sets $X', X'' \subseteq B_M$, where, $|X'| = N'$ and $|X''| = N''$. Here, for the set $X' = \{(x', u)\}$, $x'$ denotes the subsequent states for transitions starting from $s$; and in $X'' = \{(x'', u)\}$, $x''$ denotes subsequent states for transitions starting from $s'$, such that:
\begin{align}
        \hat{C}_{DARD}(R)(s, a, s') &= R(s, a, s') + \frac{\gamma}{N''}\sum_{(x'', u) \in X''} R(s', u, x'') 
          - \frac{1}{N'}\sum_{(x', u) \in X'} R(s, u, x') \nonumber \\
    &\quad\quad - \frac{\gamma}{N' N''}\sum_{(x', \cdot) \in X'} \sum_{(x'', u) \in X''} R(x', u, x'') 
\end{align}
\end{definition}

\begin{definition}
\label{def:SRRD_approx}
(Sample-based SRRD) Given a set of transitions $B_V$ from a reward sample, let $B_M$ be a batch of $N_M$ state-action pairs sampled from the explored states and actions. From $B_M$, derive sets $X_i \subseteq B_M$, for $i \in \{1, ..., 6\}$. Each $X_i$ is a set, $\{(x, u)\}$, where $x$ is a state and $u$ is an action. The magnitude, $|X_i| = N_i$. We define $X_3 = \{(x_3, u)\}$, where $x_3$ denotes all initial states for all transitions; $X_4 = \{(x_4, u)\}$, where $x_4$ denotes all subsequent states for all transitions. For each $(s, a, s')$ transition in $B_V$, we define $X_1 = \{(x_1, u)\}$, where $x_1$ denotes subsequent states for transitions starting from $s'$; $X_2 = \{(x_2, u)\}$, where $x_2$ denotes non-terminal subsequent states for transitions starting from $s$; $X_5 = \{(x_5, u)\}$, where $x_5$ denotes subsequent states to $X_1$; and $X_6 = \{(x_6, u)\}$, where $x_6$ denotes subsequent states to $X_2$. For each $(s, a, s')$ in $B_V$:
\begin{equation}
    \label{eq:SRRD_approximation}
    \begin{split}
        C_{SRRD}(R)(s, & a, s') \approx R(s, a, s') + \frac{\gamma}{N_1}\sum_{(x_1, u) \in X_1}R(s', u, x_1) 
        -\frac{1}{N_2}\sum_{(x_2, u) \in X_2}R(s, u, x_2) 
        \\ &
        -\frac{\gamma}{N_3 N_4} \sum_{(x_3, u) \in X_3} \sum_{(x_4, \cdot) \in X_4} R(x_3, u, x_4)  
        + \frac{\gamma^2}{N_1 N_5}\sum_{(x_1, u) \in X_1} \sum_{(x_5, \cdot) \in X_5}R(x_1, u, x_5)
        \\ & 
        - \frac{\gamma}{N_2 N_6}\sum_{(x_2, u) \in X_2} \sum_{(x_6, \cdot) \in X_6}R(x_2, u, x_6) 
        + \frac{\gamma}{N_3 N_6}\sum_{(x_3, u) \in X_3} \sum_{(x_6, \cdot) \in X_6}R(x_3, u, x_6) 
        \\ & 
        - \frac{\gamma^2}{N_4 N_5}\sum_{(x_4, u) \in X_4} \sum_{(x_5, \cdot) \in X_5}R(x_4, u, x_5).
    \end{split}
\end{equation}
\end{definition}
In each approximation, each term estimates the corresponding term in the original formula. For example, in $\hat{C}_{SRRD}$,$-\frac{\gamma}{N_3 N_4} \sum_{(x_3, u) \in X_3} \sum_{(x_4, \cdot) \in X_4} R(x_3, u, x_4)$ estimates $-\gamma \mathbb{E}[R(S_3, A, S_4)]$ and $\frac{\gamma}{N_1}\sum_{(x_1, u) \in X_1}R(s', u, x_1)$ estimates $\gamma \mathbb{E}[R(s', A, S_1)]$. These approximations utilize two batches: $B_V$ containing $N_V$ transitions and $B_M$ containing $N_M$ state-action pairs, to canonicalize rewards in $B_V$. Each transition in $B_V$ is canonicalized using state-action pairs in $B_M$. Depending on how $B_M$ is sampled (usually uniformly), it can help reduce bias from the sampling policy used to generate $B_V$, by including state-action pairs that may not be present in $B_V$; thereby providing a better representation of the entire state-action space. As a result, $B_M$ can help to provide broader coverage, and reduce bias and variance in results, using relatively fewer state-action pairs compared to enumerating all possible combinations. In the original $\hat{C}_{EPIC}$, and $\hat{C}_{DARD}$ approximations, it's unclear how duplicates would be handled. In our experiments, we implement all batches as sets, thereby removing duplicates to further reduce sampling bias.

\subsection{Relative Shaping Error Comparisons}
\label{ap:Appendix_A1}
This section provides the proof for Theorem \ref{th:performance_guarantee} which compares the upper bounds of the relative shaping errors for $\hat{C}_{SRRD}$, $\hat{C}_{DARD}$, and $\hat{C}_{EPIC}$. We organize the presentation as follows:
\begin{enumerate}
    \item Discuss the motivation for the relative shaping errors in quantifying the variation in the residual shaping normalized by the upper bound base (unshaped) canonical rewards (Appendix \ref{ap:rse_motivation}). 
    \item Proof of Theorem \ref{th:performance_guarantee} (Appendix \ref{ap:th2}).
\end{enumerate}
\subsubsection{Motivation for the Relative Shaping Error (RSE) Measure}
\label{ap:rse_motivation}
Consider the task of comparing two equivalent reward samples, \( R'_1\) and \( R'_2 \), which induce similar policies, share the same set of transitions $B$, but may differ due to potential shaping. To compare the reward samples, we first eliminate shaping by applying a sample-based canonicalization method, $\hat{C}_S$. After canonicalization, we compute the Pearson distance to quantify the difference between the canonical rewards as: 
\begin{equation}
\label{eq:rho_general}
D_\rho(\hat{C}_S(R'_1), \hat{C}_S(R'_2)) = \sqrt{1 - \rho(\hat{C}_S(R'_1), \hat{C}_S(R'_2))}/{\sqrt{2}},
\end{equation}
where $\rho$ is the Pearson correlation. Since $R'_1$ and $R'_2$ are assumed to have the same set of transitions, and only differ due to potential shaping, after canonicalization, we have: 
\begin{align*}
    \hat{C}_S(R'_1) = \hat{C}_S(R_1) + \phi_{R_1} = \hat{C}_S(R) + \phi_{R_1}, \\
    \hat{C}_S(R'_2) = \hat{C}_S(R_2) + \phi_{R_2}= \hat{C}_S(R) + \phi_{R_2},
\end{align*}
where $\phi_{R_1}$ and $\phi_{R_2}$ are residual shaping terms, and $\hat{C}_S(R)$ is the \textbf{base canonical reward}, which is similar between the reward samples since the sets of explored transitions are similar\footnote{We assume that the reward samples under comparison are equivalent (in terms of optimal policies) and share the same transitions, to simplify our analysis by isolating the differences between the reward samples to potential shaping.}, hence, $\hat{C_S}(R_1) = \hat{C_S}(R_2) = \hat{C_S}(R)$. When computing $D_\rho$ (Equation  \ref{eq:rho_general}), the Pearson correlation can thus be expressed as:
\begin{equation}
\label{eq:pearson_can}
    \rho(\hat{C}_S(R'_1), \hat{C}_S(R'_2)) = \rho(\hat{C}_S(R) + \phi_{R_1}, \hat{C}_S(R) + \phi_{R_2}).
\end{equation}

Analyzing Equation \ref{eq:pearson_can}:
\begin{itemize}
    \item When $|\phi_{R_i}|<< |\hat{C}_S(R)|$ for all $i \in \{1, 2\}$ then $\rho(\hat{C}_S(R'_1), \hat{C}_S(R'_2)) \approx \rho(\hat{C}_S(R), \hat{C}_S(R)) = 1$, since the residual shaping terms have negligible impact. Conversely, when either $|\phi_{R_1}| >> |\hat{C}_S(R)|$ or $|\phi_{R_2}| >> |\hat{C}_S(R)|,$ then the Pearson correlation is predominantly influenced by the residual shaping terms, and $\rho$ can become low or even negative, especially when $\phi_{R_1}$ and $\phi_{R_2}$ differ significantly. 

    \item Since \( \hat{C}_S(R) \) is the same across $\hat{C}_S(R'_1)$ and $\hat{C}_S(R'_2)$, variations in the Pearson correlation can be primarily attributed to variations in the residual shaping terms.

    \item To assess the impact of residual shaping, it is essential to also consider the magnitude \( \hat{C}_S(R) \) as a normalizing factor. If \( |\hat{C}_S(R)| \) is significantly larger than the residual shaping terms, the influence of the shaping terms diminishes, resulting in a higher Pearson correlation closer to $1$. Conversely, if \( |\hat{C}_S(R)| \) is significantly smaller than the residual shaping terms, these shaping terms will have a higher influence on the correlation, potentially reducing it.
\end{itemize} 

To generalize these insights, let's consider the canonicalization task on a reward sample $R'$ such that:
\begin{equation}
\label{eq:ab_canon}
\hat{C}_S(R') = \hat{C}_S(R) + \phi_R.
\end{equation}

To quantify the influence of the residual shaping term $\phi_R$ relative to the base canonical reward $\hat{C}_S(R)$, let's establish the Relative Shaping Error (RSE) as follows:
\begin{equation}
    \label{eq:rse}
    RSE(\hat{C}_S(R)(s, a, s')) =  \frac{|\phi_{R}(s, a, s')|}{U(\hat{C}_S(R)(s, a, s'))} =  \frac{|\phi_{R}(s, a, s')|}{nZ}.
\end{equation}
A low RSE indicates that \( U(\hat{C}_S(R)(s, a, s')) \) is substantially large relative to \( |\phi_{R}(s, a, s')| \), suggesting that the residual shaping has minimal impact on $D_\rho$. Conversely, a high RSE implies that \( U(\hat{C}_S(R)(s, a, s')) \) is small relative to \( |\phi_{R}(s, a, s')| \), highlighting a more significant influence of residual shaping. By normalizing with the upper bound of the RSE, we obtain a conservative measure to quantify the impact of shaping in extreme scenarios. This formulation helps assess the robustness of reward canonicalization methods in mitigating shaping effects during reward comparisons. From Equation \ref{eq:ab_canon}, suppose we can express: $$\hat{C}_S(R)(s, a, s') = \hat{C}_S(\tilde{R})(s, a, s') + K_R,$$ $$\phi_R(s, a, s') = \phi_{\tilde{R}}(s, a, s') + K_\phi,$$ where $K_R$ and $K_\phi$ are constants that do not vary with $(s, a, s')$, and $\phi_{\tilde{R}}$ is the \textbf{effective residual shaping}. When comparing reward samples $R'_1$ and $R'_2$, in computing $D_\rho$, the constants $K_R$ and $K_\phi$ do not affect the Pearson correlation since it is shift invariant. Therefore: 
\begin{align}
        \rho(\hat{C}_S(R'_1), \hat{C}_S(R'_2)) &= 
      \rho(\hat{C}_S(R_1) + \phi_{R_1},  \hat{C}_S(R_2) + \phi_{R_2}) \nonumber \\ &= \rho(\hat{C}_S(\tilde{R_1}) + K_{R_1} + \phi_{\tilde{R_1}} + K_{\phi_1},  \hat{C}_S(\tilde{R_2}) + K_{R_2} + \phi_{\tilde{R_2}} + K_{\phi_2}) \nonumber \\
        &= \rho(\hat{C}_S(\tilde{R}) + \phi_{\tilde{R_1}},  \hat{C}_S(\tilde{R}) + \phi_{\tilde{R_2}})
\end{align}

Relating this to the RSE in Equation \ref{eq:rse}, note that $|\phi_R|$ serves as a measure of reward variation due to shaping, and $U(\hat{C}_S(R))$ acts as a normalizing term. For the term $\phi_R$, we can omit $K_\phi$ to get: 
\begin{equation}
\label{eq:k_phi}
\phi_{\tilde{R}} = \phi_{R} - K_{\phi},
\end{equation}
since, $K_\phi$ does not affect the variation in the Pearson correlation. However, for the denominator $U(\hat{C}_S(R))$, we cannot omit $K_R$ (hence no change to the denominator) because the denominator serves to normalize the residual shaping. When $K_R$ is large, even though it does not affect the Pearson correlation, it lowers the impact of $|\phi_{\tilde{R}}|$ and vice versa.
This leads to the final RSE equation below:  
\begin{equation}
    \label{eq:rse5}
    RSE(\hat{C}_S(R)(s, a, s')) =  \frac{|\phi_{\tilde{R}}(s, a, s')|}{U(\hat{C}_S(R)(s, a, s'))} =  \frac{|\phi_{\tilde{R}}(s, a, s')|}{nZ}.
\end{equation}

\subsubsection{Proof of Theorem \ref{th:performance_guarantee}}
\label{ap:th2}
Theorem \ref{th:performance_guarantee} aims to compare the upper bounds of the RSEs for $\hat{C}_{EPIC}$, $\hat{C}_{DARD}$, and $\hat{C}_{SRRD}$.

\begin{proof}
Assuming a finite reward sample that spans a state space $S^\mathcal{D}$ and an action space $A^\mathcal{D}$, where $\mathcal{D}$ is the coverage distribution. The following subsets are defined: for $\hat{C}_{EPIC}$, $S \subseteq S^\mathcal{D}$ and $S' \subseteq S^\mathcal{D}$; for $\hat{C}_{DARD}$, $S'' \subseteq S^\mathcal{D}$ and $S' \subseteq S^\mathcal{D}$; and for $\hat{C}_{SRRD}$, each $S_i \subseteq S^\mathcal{D}$, where $i \in \{1, \ldots, 6\}$. Let’s define:
$$
M = \max_{s \in S^\mathcal{D}} (|\phi(s)|),
$$
as the maximum absolute shaping for states in \( S^\mathcal{D} \), where \( M \in \mathbb{R} \). Then, for all shaping expectations:
\[
|\mathbb{E}[\phi(S)]| \le M, \quad |\mathbb{E}[\phi(S')]| \le M, \quad |\mathbb{E}[\phi(S'')]| \le M, \quad \text{and} \quad |\mathbb{E}[\phi(S_i)]| \le M \quad \text{for all} \quad i \in \{1, \ldots, 6\}.
\]
In the following analysis, we impose the assumption that unsampled forward transitions are negligible, based on the rationale that forward transitions are generally consistent, or in-distribution with the transition dynamics of reward samples, compared to non-forward transitions which have a higher chance of being out-of-distribution with respect to the reward samples' transition dynamics. For the analysis, we find the upper bound of the RSEs under different scenarios of transition sparsity from the best case (no unsampled transitions) to the worst case (high levels of unsampled non-forward transitions).

\textbf{Analysis of EPIC:}

Considering Equation \ref{eq:epic_equation} for $C_{EPIC}$, the transitions $(S, A, S')$ are forward transitions since by definition, $S'$ is created based on $(S, A)$. However, the transitions: $(s, A, S')$ and $(s', A, S')$ are non-forward transitions since $S'$ is not created based on $(s, A)$ or $(s',A)$, but $(S, A)$. Since we only have unsampled non-forward transitions, let the fraction of randomly sampled transitions in $(s', A, S')$ and $(s, A, S')$ be $u$ and $v$ respectively, where $0 \le u, v \le 1$. Note that $u$ and $v$ vary based on $(s, a, s')$; however, for visual clarity, we will denote them as $u$ and $v$. Incorporating $u$ and $v$ into $\hat{C}_{EPIC}$:
\begin{align}
    \label{eq:approx_epic1}
    \hat{C}_{EPIC}(R)(s, a, s') & = R(s, a, s') + \mathbb{E}[u\gamma R(s', A, S') - vR(s, A, S')  - \gamma R(S, A, S')],
\end{align}
Applying $\hat{C}_{EPIC}$ (Equation \ref{eq:approx_epic1}) to a shaped reward $R'(s, a, s')$, we get the residual shaping:
\begin{align}
\label{eq:epic_shaping1}
    \phi_{epic} &= (\gamma - \gamma u)\phi(s') + (v-1) \phi(s) + \mathbb{E}[(\gamma^2 u - \gamma^2)\phi(S')] 
    + \mathbb{E}[\gamma \phi(S)] -  \mathbb{E}[\gamma v\phi(S')] 
\end{align}
\noindent\textbf{Best Case:} In this scenario there are no unsampled transitions such that: $u, v = 1$. Applying $u, v = 1$ into Equation \ref{eq:epic_shaping1}, we get: $\phi_{epic} = \mathbb{E}[\gamma \phi(S)] -  \mathbb{E}[\gamma\phi(S')]$. 

In $\hat{C}_{EPIC}$, all transitions are canonicalized using the same state subsets $S$ and $S'$ such that the shaping terms $\phi(S)$ and $\phi(S')$ do not vary with changes in $(s,a, s')$. Therefore, in $\phi_{epic}$, the constant $K_\phi = \mathbb{E}[\gamma \phi(S)] -  \mathbb{E}[\gamma\phi(S')]$ does not vary with changes in $(s, a, s')$, and based on Equation \ref{eq:k_phi}, $\phi_{\tilde{epic}} = 0$, such that: $$RSE(\hat{C}_{EPIC}(R)) = 0.$$
\noindent\textbf{Average Case:} In this scenario, $0 < u, v < 1$. From $\phi_{epic}$ (Equation \ref{eq:epic_shaping1}), we can extract the constant term $K_\phi = -\mathbb{E}[\gamma^2 \phi(S')] + \mathbb{E}[\gamma \phi(S)]$, which does not vary with $(s, a, s')$. Therefore, the effective residual shaping (see Equation \ref{eq:k_phi}) is given by: 
\begin{align}
    \phi_{\tilde{epic}} &= (\gamma - \gamma u)\phi(s') + (v-1) \phi(s) + \mathbb{E}[(\gamma^2 u -\gamma v)\phi(S')], 
\end{align}
such that: 
\begin{align*}
   |\phi_{\tilde{epic}}| &\le (\gamma - \gamma u)|\phi(s')| + (1-v)|(-\phi(s))| + |\gamma^2 u - \gamma v||\mathbb{E}[\phi(S')]| \nonumber \\
    &\le M(\gamma - \gamma u + 1-v + |\gamma^2 u - \gamma v|), \quad\quad \text{\big(since } M = \max_{s \in S^\mathcal{D}} |\phi(s)|\big). \nonumber\\
    &\le M(2 - u -v + |u - v|)   \quad\quad\quad (\text{when $\gamma = 1$}) \\
    &\le 2M
\end{align*}
From Equation \ref{eq:approx_epic1}, $\hat{C}_{EPIC}$ has four reward terms. Following Definition \ref{def:upper_bound_def}, the upper bound of $\hat{C}_{EPIC}(R)$ is $U(\hat{C}_{EPIC}(R)) = 4Z$, hence:
$$RSE(\hat{C}_{EPIC}(R)) =   
\frac{|\phi_{\tilde{epic}}|}{U(\hat{C}_{EPIC}(R))}
\le \frac{2M}{4Z} = \frac{M}{2Z}.$$   
\textbf{Worst Case:} In this scenario, $u, v = 0$. Applying $u$ and $v$ into $\hat{C}_{EPIC}$ (Equation \ref{eq:approx_epic1}), we get:
\begin{equation}
    \label{eq:worst_case_epic}
    \begin{split}
        \hat{C}_{EPIC}(R)(s, & a, s') = R(s, a, s') - \mathbb{E}[\gamma R(S, A, S')],
    \end{split}
\end{equation}
and $\phi_{epic} = \gamma\phi(s') -\phi(s) -\mathbb{E}[\gamma^2 \phi(S')] + \mathbb{E}[\gamma \phi(S)]$. Since $S$ and $S'$ are the same across all transitions, $K_\phi = -\mathbb{E}[\gamma^2 \phi(S')] + \mathbb{E}[\gamma \phi(S)]$ is a constant, hence, $\phi_{\tilde{epic}} = \gamma\phi(s') -\phi(s)$ such that:
\begin{align*}
    |\phi_{\tilde{epic}}| &\le \left|\gamma \phi(s')| + |(-\phi(s)) \right|
    \le \gamma M + M 
    \le 2M
\end{align*}
From Equation \ref{eq:worst_case_epic}, $\hat{C}_{EPIC}$ has two reward terms. Following Definition \ref{def:upper_bound_def}, $U(\hat{C}_{EPIC}(R)) = 2Z$, hence:
$$RSE(\hat{C}_{EPIC}(R)) =   
\frac{|\phi_{\tilde{epic}}|}{U(\hat{C}_{EPIC}(R))}
\le \frac{M}{Z}.$$   

\(\therefore\) Based on all the three cases, we have:
\begin{align}
\label{eq:rse1}
RSE(\hat{C}_{EPIC}(R))\le \frac{M}{Z}.  
\end{align}

\textbf{Analysis of DARD:}

Considering Equation \ref{eq:dard} for $C_{DARD}$, the state subset $S''$ is subsequent to $s'$, and $S'$ is subsequent to $s$. For approximations, the transitions $(S', A, S'')$ are non-forward transitions since $S''$ is created based on $(s', A)$ rather than $(S', A)$. Let the fraction of randomly sampled transitions in $(S', A, S'')$ be  $w$, where $0 \le w \le 1$. Note that $w$ also depends on $(s, a, s')$; however, for visual clarity we denote it as simply $w$. Incorporating $w$ into $\hat{C}_{DARD}$: 
\begin{equation}
    \label{approx_dard1}
    \begin{split}
        \hat{C}_{DARD}(R)(s, a, s') & = R(s, a, s') + \mathbb{E}[\gamma R(s', A, S'') - R(s, A, S')  - w\gamma R(S', A, S'')],
    \end{split}
\end{equation}
Applying $\hat{C}_{DARD}$ to a shaped reward $R'$, we get the residual shaping:
\begin{equation}
    \phi_{dard} = \mathbb{E}[(\gamma^2 - w\gamma^2) \phi(S'') + (w\gamma - \gamma) \phi(S')]
\end{equation}
\noindent\textbf{Best Case:} In this scenario, $w = 1$, such that $|\phi_{dard}| = 0$, hence, $RSE(\hat{C}_{DARD}) = 0$.

\noindent\textbf{Average Case:} For this scenario, $0 < w < 1$: 
\begin{align*}
   |\phi_{dard}| &\le 
   (\gamma^2 - w\gamma^2)|\mathbb{E}[\phi(S'')]| + (\gamma - w \gamma)|(-\mathbb{E}[\phi(S')])| \le 
   (\gamma^2 - w\gamma^2)M + (\gamma - w\gamma)M \le 2M
\end{align*}
In $\hat{C}_{DARD}$, the state subsets $S'$ and $S''$ vary due to $(s, a, s')$; hence, $K_\phi = 0$. Following Definition \ref{def:RSN}, and applying it to $\hat{C}_{DARD}$ (Equation \ref{approx_dard1}), $U(\hat{C}_{DARD}(R)) = 4Z, $ such that: $$RSE(\hat{C}_{DARD}(R)) =   
\frac{|\phi_{dard}|}{U(\hat{C}_{DARD}(R))}
\le \frac{2M}{4Z} = \frac{M}{2Z}.$$   
\noindent\textbf{Worst Case:} In this scenario, $w = 0$, hence we eliminate terms with $w$ in Equation \ref{approx_dard1} to get:
\begin{equation}
    \label{approx_dard2}
    \begin{split}
        \hat{C}_{DARD}(R)(s, a, s') &= R(s, a, s') + \mathbb{E}[\gamma R(s', A, S'') - R(s, A, S')], 
    \end{split}
\end{equation}
which corresponds to: $\phi_{dard} = \mathbb{E}[\gamma^2 \phi(S'') + \gamma\phi(S')]$, such that:
\begin{align*}
   |\phi_{dard}| &\le 
   |\mathbb{E}[\gamma^2\phi(S'')| + |\mathbb{E}[\gamma \phi(S')]| \le |\gamma^2 M| + |\gamma M| \le 2M
\end{align*}

Following Definition \ref{def:upper_bound_def}, and applying it to $\hat{C}_{DARD}$ (Equation \ref{approx_dard2}): $U(\hat{C}_{DARD}(R)) = 3Z, $ such that: $$RSE(\hat{C}_{DARD}(R)) =   
\frac{|\phi_{dard}|}{U(\hat{C}_{DARD}(R))}
\le \frac{2M}{3Z}.$$   

\(\therefore\) Based on all the three cases, we have:
\begin{align}
\label{eq:rse2}
RSE(\hat{C}_{DARD}(R))\le \frac{2M}{3Z}.  
\end{align}

\textbf{Analysis of SRRD}

Considering $C_{SRRD}$ in Definition \ref{def:SRRD}, the first six reward terms are forward transitions, since for each set of transitions $(S_i, A, S_j)$, $S_j$ is distributed conditionally based on $(S_i, A)$. The last two terms violate this condition; hence, they are non-forward transitions. As mentioned in Section \ref{sec:sard_section}, SRRD is designed such that: 
\begin{itemize}
    \item \textbf{Transitions $(S_1, A, S_5) \subseteq (S_4, A, S_5)$}:

    Since $S_1$ are subsequent states to $s'$, and $S_4$ are subsequent states for all sampled transitions. It follows that $S_1 \subseteq S_4$, hence, $(S_1, A, S_5) \subseteq (S_4, A, S_5)$.
    
    \item \textbf{Transitions $(S_2, A, S_6) \subseteq (S_3, A, S_6)$}:

    $S_2$ is the set of non-terminal subsequent states to $s$. Since $S_3$ encompasses all initial states from all sampled transitions, it follows that $S_2 \subseteq S_3$, hence, $(S_2, A, S_6) \subseteq (S_3, A, S_6)$.
    
\end{itemize}

For $\hat{C}_{SRRD}$, let the fraction of the randomly sampled transitions for $(S_4, A, S_5)$ and  $(S_3, A, S_6)$ be $p$ and $q$, respectively. Considering that the number of unsampled forward transitions are negligible, we establish minimal thresholds $m_1, m_2 > 0$ such that:
$p = m_1$ when $(S_4, A, S_5)$ only contains sampled transitions from $ (S_1, A, S_5)$, and 
$q = m_2$ when $(S_3, A, S_6)$ only contains sampled transitions from $(S_2, A,  S_6)$. Therefore: $m_1 \le p \le 1$ and $m_2 \le q \le 1$. Note that $p$ and $q$ vary based on $(s, a, s')$, but for visual clarity, we express them as $p$ and $q$. Incorporating $p$ and $q$, we get:
\begin{equation}
    \label{eq:proof_cSRRD}
    \begin{split}
        \hat{C}_{SRRD}(R)(s, & a, s') = R(s, a, s') + \mathbb{E}[\gamma R(s', A, S_1) 
        - R(s, A, S_2)  - \gamma R(S_3, A, S_4)  
        + \gamma^2 R(S_1, A, S_5) 
        \\ & 
        - \gamma R(S_2, A, S_6) 
        + q \gamma R(S_3, A, S_6) - p \gamma^2 R(S_4, A, S_5)]
    \end{split}
\end{equation}
\noindent Applying $\hat{C}_{SRRD}$ (Equation \ref{eq:proof_cSRRD}) to a shaped reward $R'(s, a, s')$, we get the residual shaping:
\begin{equation}
\label{eq:srrd_shaping}
    \phi_{srrd} = \mathbb{E}[(\gamma - q\gamma) \phi(S_3) + (p\gamma^2 - \gamma^2) \phi(S_4) + (\gamma^3 - p\gamma^3) \phi(S_5) + (q \gamma^2-\gamma^2) \phi(S_6)] 
\end{equation}

\noindent\textbf{Best Case:} In this scenario there are no unsampled transitions such that: $p, q = 1$. Applying $p$ and $q$ into Equation \ref{eq:srrd_shaping}, we get: $\phi_{srrd} = 0$, hence, $RSE(\hat{C}_{SRRD}(R)) = 0.$

\noindent\textbf{Average Case:} In this scenario, $m_1 < p < 1$ and $m_2 < q < 1$. Note that in $\hat{C}_{SRRD}$, $S_3$ and $S_4$ are the same for all transitions, since $S_3$ encompasses all initial states, and $S_4$ includes all subsequent states to $S_3$. Therefore, $\phi(S_3)$ and $\phi(S_4)$ do not vary with $(s, a, s')$. From $\phi_{srrd}$ (Equation \ref{eq:srrd_shaping}), we can extract the constant $K_\phi = \mathbb{E}[\gamma \phi(S_3) - \gamma^2 \phi(S_4)]$. Based on Equation \ref{eq:k_phi}, the effective residual shaping is given by: 
\begin{align*}
    \phi_{\tilde{srrd}} = \mathbb{E}[- q\gamma \phi(S_3) + p\gamma^2 \phi(S_4) + (\gamma^3 - p\gamma^3) \phi(S_5) + (q \gamma^2-\gamma^2) \phi(S_6)] 
\end{align*}
such that: 
\begin{align*}
    \phi_{\tilde{srrd}} &\le q \gamma |(-\mathbb{E}[\phi(S_3)])| + p \gamma^2 |\mathbb{E}[\phi(S_4)]| + (\gamma^3 - p\gamma^3) |\mathbb{E}[\phi(S_5)]| +(\gamma^2 - q \gamma^2)|(- \mathbb{E}[\phi(S_6)])| \\
    &\le M(q\gamma + p\gamma^2 + \gamma^3 - p\gamma^3 + \gamma^2 - q \gamma^2) \\
    &\le M(q + p + 1 - p + 1 - q)  \quad\quad \text{  (when $\gamma = 1$)} \\
    &\le 2M
\end{align*}

Since $\hat{C}_{SRRD}$ has 8 terms, following Definition \ref{def:RSN} and applying it to Equation \ref{eq:proof_cSRRD}, we get $U(\hat{C}_{SRRD}(R)) = 8Z, $ Therefore: $$RSE(\hat{C}_{SRRD}(R)) =   
\frac{|\phi_{\tilde{srrd}}|}{U(\hat{C}_{SRRD}(R))}
\le \frac{M}{4Z}.$$   
\textbf{Worst Case:} In this scenario, $p = m_1$ (when $(S_4, A, S_5)$ only has sampled transitions from $(S_1, A, S_5)$) and $q = m_2$ (when $(S_3, A, S_6)$ only has sampled transitions from $(S_2, A, S_6)$). Consider a situation where $|(S_4, A, S_5)| >> |(S_1, A, S_5)|$, and $|(S_3, A, S_6)| >> |(S_2, A, S_6)|$, such that: $m_1 \rightarrow 0$ and $m_2 \rightarrow 0$. Therefore: 
\begin{align}
\label{eq:csrd2}
    \hat{C}_{SRRD}(R)(s, & a, s') \approx R(s, a, s') + \mathbb{E}[\gamma R(s', A, S_1) 
    - R(s, A, S_2)  - \gamma R(S_3, A, S_4)  
    + \gamma^2 R(S_1, A, S_5) \nonumber
    \\ & 
    - \gamma R(S_2, A, S_6)]
\end{align}
with the corresponding residual shaping:
\begin{align}
    \label{eq:shaping_phi_srrd}
    \phi_{srrd} &\approx \mathbb{E}[\gamma \phi(S_3) -\gamma^2\phi(S_4) +\gamma^3\phi(S_5) - \gamma^2 \phi(S_6)].
\end{align}

From $\phi_{srrd}$ (Equation \ref{eq:shaping_phi_srrd}), we can extract the constant $K_\phi = \mathbb{E}[\gamma \phi(S_3)- \gamma^2 \phi(S_4)]$ since it does not vary with $(s, a, s')$. Therefore, the effective residual shaping is given by: 
\begin{align*}
    \phi_{\tilde{srrd}} \approx \mathbb{E}[\gamma^3 \phi(S_5) - \gamma^2 \phi(S_6)],
\end{align*}
hence:
\begin{align}
    |\phi_{\tilde{srrd}}| \le \gamma^3 |\mathbb{E}[\phi(S_5)]| + \gamma^2 |(-\mathbb{E}[\phi(S_6)])|
    \le \left|\gamma^3 M + \gamma^2 M\right| 
    \le 2M
\end{align}

Following Definition \ref{def:upper_bound_def}, and applying it to $\hat{C}_{SRRD}$ (Equation \ref{eq:csrd2}): $U(\hat{C}_{SRRD}(R)) = 6Z$, such that: $$RSE(\hat{C}_{SRRD}(R)) =   
\frac{|\phi_{\tilde{srrd}}|}{U(\hat{C}_{SRRD}(R)}
\le \frac{M}{3Z}.$$   

\(\therefore\) Based on all the three cases, we have:
\begin{align}
\label{eq:rse3}
RSE(\hat{C}_{SRRD}(R))\le \frac{M}{3Z}.  
\end{align}

\textbf{Conclusion}

Based on the upper bounds of the RSE values aggregated from the best, average and worst case scenarios of transition sparsity (Equation \ref{eq:rse3}, \ref{eq:rse2}, \ref{eq:rse1}), we can conclude that: 
\[
\text{RSE}(\hat{C}_{SRRD}) \le \frac{M}{3Z}; \quad \text{RSE}(\hat{C}_{DARD}) \le \frac{2M}{3Z}; \text{ RSE}(\hat{C}_{EPIC}) \le \frac{M}{Z},
\]

\end{proof}

The RSE serves as a theoretical measure to evaluate the robustness of the pseudometrics by quantifying the influence of residual shaping errors relative to the rewards. It's important to acknowledge that the RSE is a conservative measure, since the residual shaping is normalized by $U(C_S(R))$. Therefore, in practice, it may not guarantee the order of performance predicted in Theorem \ref{th:performance_guarantee}, as the impact of residual shaping could be more pronounced when normalized by smaller reward values. Despite this limitation, the RSE still offers a meaningful theoretical measure on the robustness of the pseudometrics.

\subsection{Residual Shaping}
\label{sec:residual_potentials}
\paragraph{Derivation of $\phi_{res1}$:} Applying $C_1$ (Equation \ref{c1}) to a shaped reward $R'(s, a, s') = R(s, a, s') + \gamma \phi(s') - \phi(s)$:
\begin{equation*}
    \begin{split}
        C_1(R')(s, a, s') & = R'(s, a, s') + \mathbb{E}[\gamma R'(s', A, S_1) 
        - R'(s, A, S_2)  - \gamma R'(S_3, A, S_4)] \\
 & = R(s, a, s') + \gamma \phi(s') - \phi(s) + \mathbb{E}[\gamma (R(s', A, S_1) +          \gamma \phi(S_1) - \phi(s'))
\\ & \text{\quad}
        - (R(s, A, S_2) + \gamma \phi(S_2) - \phi(s)) - \gamma (R(S_3, A, S_4) + \gamma \phi(S_4) - \phi(S_3))]  \\
        & = C_1(R)(s, a, s') + \mathbb{E}[\gamma^2\phi(S_1) - \gamma^2\phi(S_4) + \gamma \phi(S_3) - \gamma\phi(S_2)]
    \end{split}
\end{equation*}
\noindent Hence, $C_1(R)(s, a, s')$ yields the residual shaping: $$\phi_{res1} = \mathbb{E}[\gamma^2\phi(S_1) - \gamma^2\phi(S_4) + \gamma \phi(S_3) - \gamma\phi(S_2)].$$ 

\vspace{0.2cm}

\paragraph{Derivation of $\phi_{res2}$:} Applying $C_2$ (Equation \ref{eq:srrd_create}) to shaped reward $R'(s, a, s') = R(s, a, s') + \gamma \phi(s') -\phi(s)$:
\begin{align*}
    C_2(R')(s, a, s') &= R(s, a, s') + \gamma \phi(s') - \phi(s) + \mathbb{E}[\gamma (R(s', A, S_1) + \gamma \phi(S_1) - \phi(s'))
    \\ & \text{\quad}
    - (R(s, A, S_2) + \gamma \phi(S_2) - \phi(s))
     - \gamma (R(S_3, A, S_4) 
     + \gamma \phi(S_4) - \phi(S_3)) 
\\ & \text{\quad}
    + \gamma^2 (R(S_1, A, k_1) + \gamma \phi(k_1) - \phi(S_1)) - \gamma (R(S_2, A, k_2) 
    + \gamma \phi(k_2) - \phi(S_2)) 
    \\ & \text{\quad} + \gamma (R(S_3, A, k_3) + \gamma \phi(k_3) - \phi(S_3)) - \gamma^2 (R(S_4, A, k_4) + \gamma \phi(k_4) - \phi(S_4))].
    \\ 
    &= C_2(R)(s, a, s') + \mathbb{E}[\gamma^3\phi(k_1) - \gamma^3 \phi(k_4) + \gamma^2 \phi(k_3) - \gamma^2\phi(k_2)]
\end{align*}
\noindent Hence, $C_2(R)(s,a,s')$ yields the residual shaping: $$\phi_{res2} = \mathbb{E}[\gamma^3\phi(k_1) - \gamma^3 \phi(k_4) + \gamma^2 \phi(k_3) - \gamma^2\phi(k_2)].$$

\subsection{The Sparsity Resilient Canonically Shaped Reward is Invariant to Shaping}
\label{sec:resilient_shaping}
\textbf{Proposition \ref{prop:prop2}.} \textit{(The Sparsity Resilient Canonically Shaped Reward is Invariant to Shaping) Let $R:\mathcal{S \times A \times S} $ be a reward function and $\phi:\mathcal{S} \rightarrow \mathbb{R}$ be a state potential function. Applying $C_{SRRD}$ to a potentially shaped reward $R'(s, a, s') = R(s, a, s') + \gamma \phi(s') - \phi(s)$ satisfies: $C_{SRRD}(R) = C_{SRRD}(R').$}

\begin{proof}
Let's apply $C_{SRRD}$, Definition \ref{def:SRRD}, to a shaped reward $R'(s, a, s') = R(s, a, s') + \gamma \phi(s') - \phi(s)$:
\begin{equation*}
    \begin{split}
        C_{SRRD}(R')(s, a, s') & = R(s, a, s') + \gamma \phi(s') - \phi(s) + \mathbb{E} [ \gamma [R(s', A, S_1) +\gamma \phi(S_1) - \phi(s')]
        \\ &\text{\quad}         
        - [R(s, A, S_2) + \gamma \phi(S_2) - \phi(s)] - \gamma [R(S_3, A, S_4) + \gamma \phi(S_4) - \phi(S_3)] 
        \\ &\text{\quad}         
        + \gamma^2 [R(S_1, A, S_5) +\gamma \phi(S_5) - \phi(S_1)] - \gamma [R(S_2, A, S_6) + \gamma \phi(S_6) - \phi(S_2)]  
        \\ &\text{\quad}         
        + \gamma [R(S_3, A, S_6) + \gamma \phi(S_6) - \phi(S_3)] -\gamma^2[R(S_4, A, S_5)+ \gamma \phi(S_5) - \phi(S_4)] ],
    \end{split}
\end{equation*}
\noindent Regrouping the reward terms and the potentials, this reduces to: 
\begin{equation*}
    \begin{split}
        C_{SRRD}(R')(s, a, s') & = C_{SRRD}(R)(s, a, s') + 
        (\gamma \phi(s')- \gamma\mathbb{E}[\phi(s')]) +
        (- \phi(s) + \mathbb{E}[\phi(s)])
        \\ &\text{\quad}
        + \mathbb{E}[ \gamma^2 (\phi(S_1) -\phi(S_1))] 
        + \mathbb{E}[ \gamma (-\phi(S_2) +\phi(S_2))]+
        \mathbb{E}[ \gamma (\phi(S_3) -\phi(S_3))]
        \\ &\text{\quad}
        + \mathbb{E}[\gamma^2 (-\phi(S_4) + \phi(S_4))] +
        \mathbb{E}[\gamma^3 (\phi(S_5) - \phi(S_5))] + 
        \mathbb{E}[\gamma^2 (-\phi(S_6) + \phi(S_6))]
        \end{split}
\end{equation*}
    
    \noindent Since $\mathbb{E}[\gamma \phi(s')] = \gamma \phi(s')$ and $\mathbb{E}[\phi(s)] = \phi(s)$, this leads to: 
    \begin{equation*}
        \begin{split}
        C_{SRRD}(R')(s, a, s') = C_{SRRD}(R)(s, a, s').
        \end{split}
    \end{equation*}
\end{proof}

\subsection{Invariance under the Sample-Based SRRD Approximation}
\label{sec:sampled_based_srrd_proof}

For the sample-based SRRD canonical reward (Definition \ref{def:SRRD_approx}), under transition sparsity, some rewards may be undefined since the associated transitions may be unsampled. However, \textit{when all the necessary rewards are defined, policy invariance can be achieved.} To show this, we will first derive Lemma \ref{lem:potentials}, then present the proof that assumes all the necessary rewards are available. 
\begin{lemma}
    \label{lem:potentials}
    Let $\phi: S \rightarrow \mathbb{R}$ be a potential function. Given states $x_i \in S$ and $x_j \in S$, then: 
    $$\frac{1}{n_1 n_2} \sum_{i=1}^{n_1} \sum_{j=1}^{n_2} (\gamma \phi(x_i) - \phi(x_j)) = \frac{\gamma}{n_1} \sum_{i=1}^{n_1} \phi(x_i) - \frac{1}{n_2} \sum_{j=1}^{n_2} \phi(x_j).$$
\end{lemma}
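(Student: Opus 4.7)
The plan is straightforward and rests on the observation that the summand $\gamma \phi(x_i) - \phi(x_j)$ is separable: each of its two terms depends on exactly one of the summation indices. First, I would apply linearity of summation to split the left-hand side into
$$\frac{1}{n_1 n_2} \sum_{i=1}^{n_1} \sum_{j=1}^{n_2} \gamma \phi(x_i) \;-\; \frac{1}{n_1 n_2} \sum_{i=1}^{n_1} \sum_{j=1}^{n_2} \phi(x_j).$$

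Next, in the first double sum, the quantity $\gamma \phi(x_i)$ is constant with respect to $j$, so the inner sum over $j$ contributes a factor of $n_2$. Cancelling this with the $n_2$ in the denominator yields $\frac{\gamma}{n_1}\sum_{i=1}^{n_1}\phi(x_i)$. Symmetrically, in the second double sum, $\phi(x_j)$ is constant with respect to $i$, so the inner sum over $i$ contributes a factor of $n_1$, which cancels with the $n_1$ in the denominator to give $\frac{1}{n_2}\sum_{j=1}^{n_2}\phi(x_j)$.

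Combining these two simplifications gives precisely the right-hand side of the claim. There is no meaningful obstacle here: the lemma is essentially the Fubini-style observation that the average of a separable expression over a product index set equals the difference of the two marginal averages, and the proof will consist solely of factoring constants out of the appropriate inner sum in each term.
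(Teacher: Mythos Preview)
Your proposal is correct and follows essentially the same approach as the paper: split the double sum by linearity, use that each term depends on only one index so the inner sum contributes a factor of $n_2$ (respectively $n_1$), and cancel with the denominator to obtain the two single averages.
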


\begin{proof}
\begin{align*}
    \frac{1}{n_1 n_2} \sum_{i=1}^{n_1} \sum_{j=1}^{n_2} (\gamma \phi(x_i) - \phi(x_j))  &= \frac{1}{n_1 n_2} \left(\gamma \sum_{i=1}^{n_1} \left(\sum_{j=1}^{n_2} \phi(x_i) \right) - \sum_{j=1}^{n_2} \left(\sum_{i=1}^{n_1} \phi(x_j) \right) \right) \\ 
    &\text{Notice that } \phi(x_i) \text{ is independent of } j \text{ and } \phi(x_j) \text{ is independent of } i, \text{thus,} \\  
    &= \frac{\gamma}{n_1 n_2} \sum_{i=1}^{n_1} n_2 \phi(x_i)  - \frac{1}{n_1 n_2} \sum_{j=1}^{n_2} n_1 \phi(x_j) \\ &= \frac{\gamma}{n_1} \sum_{i=1}^{n_1} \phi(x_i) - \frac{1}{n_2} \sum_{j=1}^{n_2} \phi(x_j).
\end{align*}
\end{proof}

\begin{proposition}
\label{prop:sample_approximate} Given transition sets $(S_i, A, S_j)$ needed to compute each reward expectation term in $\hat{C}_{SRRD}$, if the transitions fully cover the cross-product $S_i \times A \times S_j$, then for a shaped reward $R'(s, a, s') = R(s, a, s') + \gamma \phi(s') - \phi(s)$, the sample-based SRRD approximation is invariant to shaping.
\end{proposition}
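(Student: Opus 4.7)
The plan is to mirror the proof of Proposition~\ref{prop:prop2}, but at the sample level: substitute $R'(x, u, y) = R(x, u, y) + \gamma\phi(y) - \phi(x)$ into each of the eight summands of $\hat{C}_{SRRD}$ in Definition~\ref{def:SRRD_approx}, split every summand into a base-reward part that reassembles into $\hat{C}_{SRRD}(R)(s,a,s')$ and a residual shaping part, and then show that the eight residual shaping contributions cancel in pairs. The hypothesis that transitions fully cover each cross-product $S_i \times A \times S_j$ is what guarantees that every reward appearing in the sums is defined, so that the substitution is literally valid term by term (this is precisely the premise that fails under transition sparsity and that Section~\ref{sec:relative_errors} is designed to analyze).

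For the two single-index summands (Terms 2 and 3 of $\hat{C}_{SRRD}$), since $s$ and $s'$ are constant with respect to the averaging index, the contributions of $\phi(s)$ and $\phi(s')$ come out unchanged, yielding residuals $\gamma^2 \bar{\phi}(X_1) - \gamma\phi(s')$ and $-\gamma\bar{\phi}(X_2) + \phi(s)$ respectively, where I write $\bar{\phi}(X_k) := \tfrac{1}{N_k}\sum_{(x,u)\in X_k}\phi(x)$. For the six double-index summands (Terms 4 through 8), I would invoke Lemma~\ref{lem:potentials} to collapse each expression of the form $\tfrac{1}{N_i N_j}\sum\sum(\gamma\phi(x_j) - \phi(x_i))$ into $\gamma\bar{\phi}(X_j) - \bar{\phi}(X_i)$, producing the six residuals $-\gamma^2\bar{\phi}(X_4)+\gamma\bar{\phi}(X_3)$, $\gamma^3\bar{\phi}(X_5)-\gamma^2\bar{\phi}(X_1)$, $-\gamma^2\bar{\phi}(X_6)+\gamma\bar{\phi}(X_2)$, $\gamma^2\bar{\phi}(X_6)-\gamma\bar{\phi}(X_3)$, and $-\gamma^3\bar{\phi}(X_5)+\gamma^2\bar{\phi}(X_4)$.

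The final step is a bookkeeping check: adding the eight residuals together, the $\pm\gamma\phi(s')$ cancel between Terms~1 and~2, the $\mp\phi(s)$ cancel between Terms~1 and~3, and the six mean-potential terms pair up as $\bar{\phi}(X_1)$ between Terms~2 and~5, $\bar{\phi}(X_2)$ between Terms~3 and~6, $\bar{\phi}(X_3)$ between Terms~4 and~7, $\bar{\phi}(X_4)$ between Terms~4 and~8, $\bar{\phi}(X_5)$ between Terms~5 and~8, and $\bar{\phi}(X_6)$ between Terms~6 and~7, each pair appearing with opposite signs and identical powers of $\gamma$. This cancellation is not incidental; it is the sample-level realization of the design choice $k_1 = k_4 = S_5$, $k_2 = k_3 = S_6$ made in Equation~\ref{eq:srrd_create} to force $\phi_{res2}=0$.

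The main obstacle is purely bookkeeping, with sixteen potential contributions to track and pair off; there is no analytic subtlety once Lemma~\ref{lem:potentials} is in hand. The one conceptual point I would emphasize in writing is that the full-coverage hypothesis is used exactly to justify applying Lemma~\ref{lem:potentials} to each double sum: without it, some $\phi(x_i)$ or $\phi(x_j)$ terms would appear with mismatched multiplicities (because the outer and inner sums would range over different effective index sets), breaking the clean cancellation described above. This is the precise mechanism behind the robustness analysis in Theorem~\ref{th:performance_guarantee}, where the failure of full coverage manifests as the residual-shaping upper bounds $M/(3Z)$, $2M/(3Z)$, and $M/Z$.
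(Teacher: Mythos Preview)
Your proposal is correct and follows essentially the same approach as the paper's own proof: substitute $R'$ into each summand of Definition~\ref{def:SRRD_approx}, separate into $\hat{C}_{SRRD}(R)$ plus a residual, apply Lemma~\ref{lem:potentials} to collapse each double sum of shaping terms into a difference of averages $\bar{\phi}(X_i)$, and then verify that the resulting sixteen potential contributions cancel in pairs. The only slip is cosmetic (you write ``six double-index summands'' for Terms~4--8 when there are five), but your listed residuals and pairings are all correct and match the paper's Equation~\ref{eq:SRRD_approximation_3} and its subsequent simplification.
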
 

\begin{proof}
\begin{equation*}
    \begin{split}
        \hat{C}_{SRRD}(R')(s, & a, s') \approx R(s, a, s') + \gamma \phi(s') - \phi(s) 
         \\ & + \frac{\gamma}{N_1}\sum_{(x_1, u) \in X_1} [R(s', u, x_1) + \gamma \phi(x_1) - \phi(s')] 
        -\frac{1}{N_2}\sum_{(x_2, u) \in X_2}[R(s, u, x_2) + \gamma \phi(x_2) - \phi(s)]
        \\ & -\frac{\gamma}{N_3 N_4} \sum_{(x_3, u) \in X_3} \sum_{(x_4, \cdot) \in X_4} [R(x_3, u, x_4) + \gamma \phi(x_4) - \phi(x_3)]
        \\ & + \frac{\gamma^2}{N_1 N_5}\sum_{(x_1, u) \in X_1} \sum_{(x_5, \cdot) \in X_5}[R(x_1, u, x_5) + \gamma \phi(x_5) - \phi(x_1)]
        \\ & - \frac{\gamma}{N_2 N_6}\sum_{(x_2, u) \in X_2} \sum_{(x_6, \cdot) \in X_6}[R(x_2, u, x_6) + \gamma \phi(x_6) - \phi(x_2)]
        \\ & + \frac{\gamma}{N_3 N_6}\sum_{(x_3, u) \in X_3} \sum_{(x_6, \cdot) \in X_6}[R(x_3, u, x_6)+ \gamma \phi(x_6) - \phi(x_3)]
        \\ & - \frac{\gamma^2}{N_4 N_5}\sum_{(x_4, u) \in X_4} \sum_{(x_5, \cdot) \in X_5}[R(x_4, u, x_5) + \gamma \phi(x_5) - \phi(x_4)].
    \end{split}
\end{equation*}
\noindent Rearranging terms, the above equation can be written as:
\begin{equation*}
    \hat{C}_{SRRD}(R')(s, a, s') = \hat{C}_{SRRD}(R)(s, a, s') + \phi_{\mbox{\textit{residuals}}},
\end{equation*}
\noindent where:
\begin{equation}
    \label{eq:SRRD_approximation_3}
    \begin{split}
        \phi_{\mbox{\textit{residuals}}} & = \gamma \phi(s') - \phi(s) 
        + \frac{\gamma}{N_1}\sum_{(x_1, u) \in X_1} [\gamma \phi(x_1) - \phi(s')] 
        -\frac{1}{N_2}\sum_{(x_2, u) \in X_2}[\gamma \phi(x_2) - \phi(s)]            \\ & 
        -\frac{\gamma}{N_3 N_4} \sum_{(x_3, u) \in X_3} \sum_{(x_4, \cdot) \in X_4} [\gamma \phi(x_4) - \phi(x_3)]
        + \frac{\gamma^2}{N_1 N_5}\sum_{(x_1, u) \in X_1} \sum_{(x_5, \cdot) \in X_5}[\gamma \phi(x_5) - \phi(x_1)]
        \\ &  - \frac{\gamma}{N_2 N_6}\sum_{(x_2, u) \in X_2} \sum_{(x_6, \cdot) \in X_6}[\gamma \phi(x_6) - \phi(x_2)]
        + \frac{\gamma}{N_3 N_6}\sum_{(x_3, u) \in X_3} \sum_{(x_6, \cdot) \in X_6}[\gamma \phi(x_6) - \phi(x_3)]
        \\ & - \frac{\gamma^2}{N_4 N_5}\sum_{(x_4, u) \in X_4} \sum_{(x_5, \cdot) \in X_5}[\gamma \phi(x_5) - \phi(x_4)].
    \end{split}
\end{equation}

\noindent Applying Lemma \ref{lem:potentials} to Equation \ref{eq:SRRD_approximation_3} and simplifying terms, we get:
\begin{equation*}
    \begin{split}
        \phi_{\mbox{\textit{residuals}}} & = \gamma \phi(s') - \phi(s) 
        + \frac{\gamma^2}{N_1}\sum_{(x_1, u) \in X_1}[\phi(x_1)] - \gamma \phi(s') 
        -\frac{\gamma}{N_2}\sum_{(x_2, u) \in X_2} [\phi(x_2)] + \phi(s)
        \\ &  -\frac{\gamma^2}{N_4} \sum_{(x_4, u) \in X_4} [\phi(x_4)] 
        +\frac{\gamma}{N_3} \sum_{(x_3, u) \in X_3} [\phi(x_3)]
         +\frac{\gamma^3}{N_5} \sum_{(x_5, u) \in X_5} [\phi(x_5)] 
        -\frac{\gamma^2}{N_1} \sum_{(x_1, u) \in X_1} [\phi(x_1)]
        \\ &  -\frac{\gamma^2}{N_6} \sum_{(x_6, u) \in X_6} [\phi(x_6)] 
        +\frac{\gamma}{N_2} \sum_{(x_2, u) \in X_2} [\phi(x_2)]
         +\frac{\gamma^2}{N_6} \sum_{(x_6, u) \in X_6} [\phi(x_6)] 
        -\frac{\gamma}{N_3} \sum_{(x_3, u) \in X_3} [\phi(x_3)]
        \\ &  -\frac{\gamma^3}{N_5} \sum_{(x_5, u) \in X_5} [\phi(x_5)] 
        +\frac{\gamma^2}{N_4} \sum_{(x_4, u) \in X_4} [\phi(x_4)]
        = 0 
    \end{split}
\end{equation*}

\noindent Therefore:   $\hat{C}_{SRRD}(R') = \hat{C}_{SRRD}(R).$
\end{proof}

\subsection{Example: SRRD State Definitions}
\label{sec:srrd_definitions}

Figure \ref{mdp_sample} illustrates a transition graph from a reward sample with $10$ states $S^\mathcal{D} = \{x_0, ..., x_9\}$, and a single action $A^\mathcal{D} = \{a_1\}$. This example illustrates how the sets $\{S_1,...,S_6\}$ are defined in SRRD, along with their relationships: $(S_1 \subseteq S_4)$ and $(S_2 \subseteq S_3)$, which contribute to SRRD's robustness to unsampled transitions.

\begin{figure}[H]
  \centering
  \includegraphics[width=.4\textwidth]{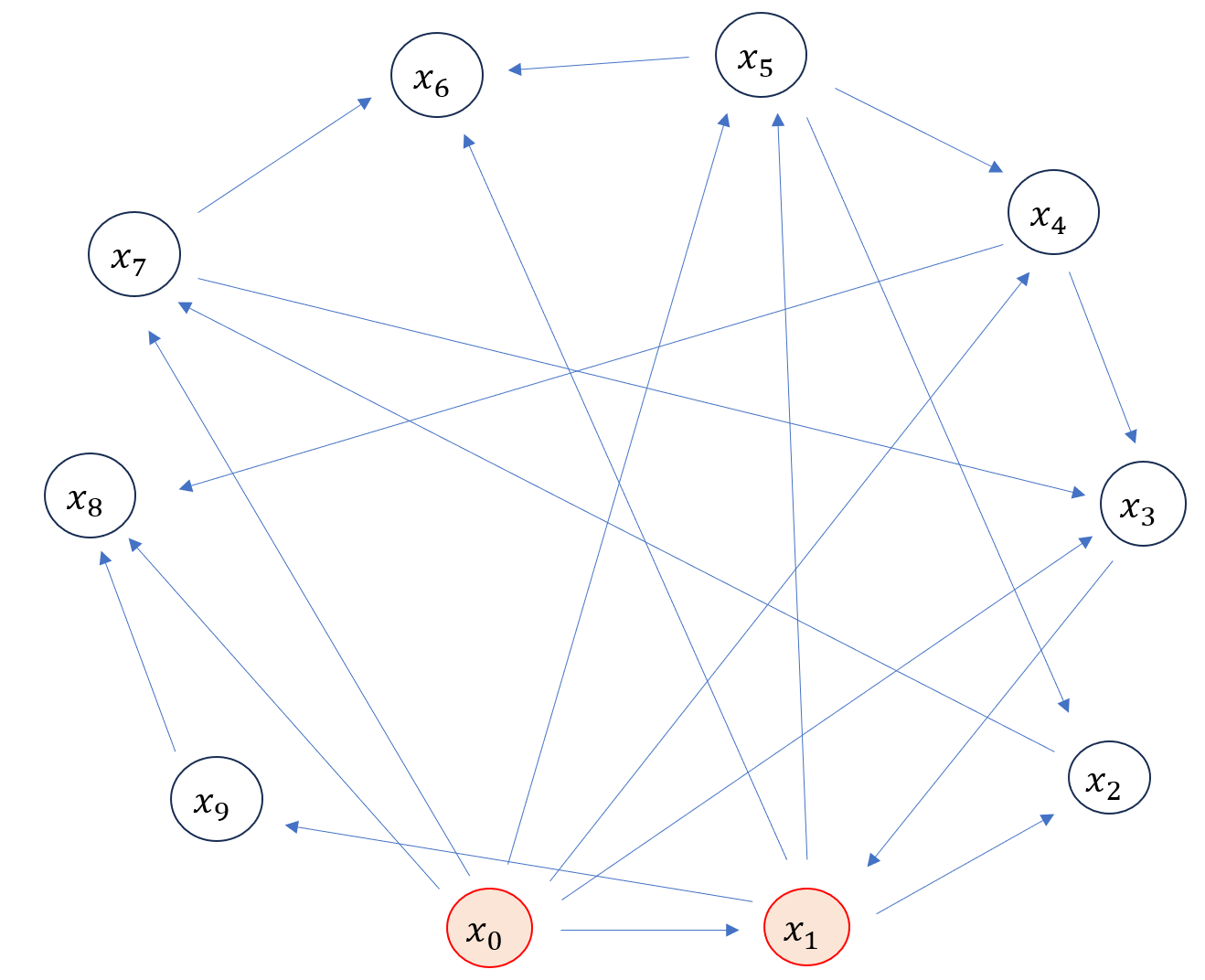}
  \caption{A transition graph with $10$ states $\{x_0, ...x_9\}$, and a single action $\{a_1\}$. State subsets are defined based on the transition: $(x_0, a_1, x_1)$.}
  \label{mdp_sample}
\end{figure}

\noindent The Sparsity Resilient Canonically Shaped Reward is given by: 
\begin{equation*}
    \begin{split}
        C_{SRRD}(R)(s, & a, s') = R(s, a, s') + \mathbb{E}[\gamma R(s', A, S_1) 
        - R(s, A, S_2)  - \gamma R(S_3, A, S_4)  
        + \gamma^2 R(S_1, A, S_5) 
         \\ &
        - \gamma R(S_2, A, S_6) 
        + \gamma R(S_3, A, S_6) - \gamma^2 R(S_4, A, S_5) ], 
    \end{split}
\end{equation*}

where: $S_1$ are subsequent states to $s'$, and $S_2$ are subsequent non-terminal states to $s$. $S_3$ encompasses all initial states from all transitions; $S_4$, $S_5$, and $S_6$ are subsequent states to $S_3$, $S_1$ and $S_2$, respectively.
\newpage
\textbf{Following the SRRD definition, the states in Figure \ref{mdp_sample}, are defined as follows:} \\

$s$: state $x_0$. \\

$s'$: state $x_1$. \\

$S_1$ (subsequent to $s'$): $\{x_2, x_5, x_6, x_9\}$. \\

$S_2$ (subsequent non-terminal states to $s$): $\{x_1, x_3, x_4, x_5, x_7\}$. \\

$S_3$ (initial states from all transitions): $\{x_0, x_1, x_2, x_3, x_4, x_5, x_7, x_9\}$.  \quad \textcolor{red}{terminal states $x_6$ and $x_8$ not included} \\

$S_4$ (subsequent states to $S_3$): $\{x_1, x_2, x_3, x_4, x_5, x_6, x_7, x_8, x_9\}$.  \quad \textcolor{red}{starting state $x_0$ not included} \\

$S_5$ (subsequent states to $S_1$): $\{x_2, x_4, x_6, x_7, x_8\}$ \\

$S_6$ (subsequent states to $S_2$): $\{x_1, x_2, x_3, x_4, x_5, x_6, x_8, x_9\}$ \\

\textbf{Transition Relationships (see Section 4 for reference)}
\begin{enumerate}
\item $S_1 \subseteq S_4$, therefore, $(S_1,A, S_5) \subseteq (S_4, A, S_5)$.
\item $S_2 \subseteq S_3$, therefore,  $(S_2, A, S_6) \subseteq (S_3, A, S_6)$.
\end{enumerate}

\subsection{Pseudometric Equivalence Under Full Coverage}
\label{ap:Appendix_A6}
\begin{proposition}
\label{prop:prop3}
Given the state subset $S \subseteq \mathcal{S}$, the SRRD, DARD, and EPIC canonical rewards are equivalent under full coverage when: $S = S_1 = ...= S_6$ for SRRD; $S = S' = S''$ for DARD, and $S =  S'$ for EPIC.
\end{proposition}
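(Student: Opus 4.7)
The plan is to substitute the stated equalities between state subsets into each of the three canonical reward definitions and verify that all three reduce to the same expression. The argument is essentially bookkeeping: collecting terms whose arguments coincide and checking that coefficients cancel or combine as required. I would proceed in three short stages.

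First, I would rewrite $C_{EPIC}(R)(s,a,s')$ from its definition with $S = S'$, giving the canonical form
\[
C_{EPIC}(R)(s,a,s') = R(s,a,s') + \mathbb{E}[\gamma R(s', A, S) - R(s, A, S) - \gamma R(S, A, S)],
\]
which I will treat as the target expression. Second, I would substitute $S = S' = S''$ into $C_{DARD}$ from Equation~\ref{eq:dard}; each of the three expectation terms $\gamma R(s', A, S'')$, $R(s, A, S')$, and $\gamma R(S', A, S'')$ collapses to the EPIC form directly, so $C_{DARD} = C_{EPIC}$ follows immediately.

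Third, and this is the only step requiring arithmetic, I would substitute $S = S_1 = \cdots = S_6$ into $C_{SRRD}$ in Definition~\ref{def:SRRD}. The first three expectation terms reproduce the EPIC terms $\gamma R(s', A, S) - R(s, A, S) - \gamma R(S, A, S)$ verbatim. The remaining four terms all have arguments $(S, A, S)$, so they combine as
\[
\gamma^2 R(S, A, S) - \gamma R(S, A, S) + \gamma R(S, A, S) - \gamma^2 R(S, A, S) = 0,
\]
and the extra structure introduced by SRRD vanishes. What was engineered to cancel residual shaping across diverse subsets cancels identically here because all subsets coincide.

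I do not anticipate a genuine obstacle: the proof is a direct computation once the substitutions are made. The only point requiring a little care is keeping track of the signs and $\gamma$-powers on the four additional SRRD terms to ensure that the two pairs $(+\gamma^2, -\gamma^2)$ and $(-\gamma, +\gamma)$ pair up correctly; this is the design choice $k_1 = k_4$, $k_2 = k_3$ that was used in Section~\ref{sec:sard_section} to eliminate $\phi_{res2}$, and it is exactly what forces the collapse here. Concluding, all three canonical rewards equal the common expression above, which establishes the proposition.
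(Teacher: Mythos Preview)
Your proposal is correct and mirrors the paper's own proof: both substitute the subset equalities into the three canonical forms and observe that EPIC and DARD reduce immediately while the four extra SRRD terms cancel pairwise as $(\gamma^2 - \gamma^2)$ and $(-\gamma + \gamma)$. The paper's version is terser (it states the common expression without writing out the cancellation), but the argument is the same.
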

\begin{proof} 
\begin{align*}
C_{EPIC}(R)(s, a, s’) &= R(s, a, s’) + \mathbb{E}[\gamma R(s’, A, S’) - R(s, A, S’) - \gamma R(S, A, S’)] 
\\ 
C_{DARD}(R)(s, a, s’) &= R(s, a, s’) + \mathbb{E}[\gamma R(s’, A, S'') - R(s, A, S’) - \gamma R(S', A, S'')] 
\\ 
C_{SRRD}(R)(s, a, s’) &= R(s, a, s’) + \mathbb{E}[ \gamma R(s’, A, S_1) - R(s, A, S_2) - \gamma R(S_3, A, S_4) \\
&\quad + \gamma^2 R(S_1, A, S_5) - \gamma R(S_2, A, S_6) 
 + \gamma R(S_3, A, S_6) - \gamma^2 R(S_4, A, S_5) ] 
\end{align*}
\noindent Under full coverage, every state $s\in S$ transitions to every $s' \in S$ under every action $a \in A$. Consequently, all relative subsets are equal: $S = S' = S'' = S_1 = S_2 = S_3 = S_4 = S_5 = S_6$, such that: 
\begin{equation}
    \label{eq:equiv}
    C_{EPIC} = C_{DARD} = C_{SRRD} = R(s, a, s') + \mathbb{E}[\gamma R(s', A, S) - R(s, A, S)  - \gamma R(S, A, S)].
\end{equation} 
\end{proof}
\subsection{Repeated Canonicalization Under Full Coverage}
\begin{proposition} 
     \label{prop:prop5}
    Given the state subset $S \subseteq \mathcal{S}$, under full coverage when: $S = S_1 = ...= S_6$ for SRRD; $S = S' = S''$ for DARD, and $S =  S'$ for EPIC. Then: $C_{SRRD}$, $C_{EPIC}$, and $C_{DARD}$ cannot be further canonicalized.
\end{proposition}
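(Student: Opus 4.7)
} The plan is to reduce all three ``cannot be further canonicalized'' claims to a single idempotence statement for one common operator (using Proposition \ref{prop:prop3}), and then verify that idempotence by a short expectation calculation that exploits the fact that the canonical form is a potential-shaped reward whose potential has constant mean.

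By Proposition \ref{prop:prop3}, the full-coverage hypothesis collapses $C_{SRRD}$, $C_{DARD}$, and $C_{EPIC}$ into the same operator
$$C(R)(s, a, s') \;=\; R(s, a, s') + \mathbb{E}\bigl[\gamma R(s', A, X) - R(s, A, X) - \gamma R(Y, A, X)\bigr],$$
where $A \sim \mathcal{D}_A$ and $X, Y \sim \mathcal{D}_S$ are independent. Moreover each of $C_{SRRD}$, $C_{DARD}$, $C_{EPIC}$ applied to any input (in particular to $C(R)$ itself) likewise reduces to $C$ under full coverage. Hence the three statements all collapse to the single claim $C(C(R)) = C(R)$, which is what I would prove.

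Next I would introduce $\phi(s) := \mathbb{E}[R(s, A, X)]$ and $\bar\phi := \mathbb{E}[\phi(Y)] = \mathbb{E}[R(Y, A, X)]$, so that $C(R)(s, a, s') = R(s, a, s') + \gamma \phi(s') - \phi(s) - \gamma \bar\phi$. To apply $C$ a second time, I substitute this expression into each inner reward term and use linearity:
\begin{align*}
\mathbb{E}\bigl[C(R)(s', A, X)\bigr] &= \phi(s') + \gamma \bar\phi - \phi(s') - \gamma \bar\phi = 0,\\
\mathbb{E}\bigl[C(R)(s, A, X)\bigr] &= \phi(s) + \gamma \bar\phi - \phi(s) - \gamma \bar\phi = 0,\\
\mathbb{E}\bigl[C(R)(Y, A, X)\bigr] &= \bar\phi + \gamma \bar\phi - \bar\phi - \gamma \bar\phi = 0.
\end{align*}
Combining these three identities, the total correction added by the second application of $C$ is $\gamma \cdot 0 - 0 - \gamma \cdot 0 = 0$, giving $C(C(R)) = C(R)$.

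The only real obstacle is notational bookkeeping: inside an expression like $C(R)(Y, A, X)$, one must keep the outer random variables $Y, A, X$ logically distinct from the independent copies implicitly bound inside the definitions of $\phi$ and $\bar\phi$, so that iterated expectations and Fubini are applied correctly. Because $\phi$ depends only on its state argument, this is resolved cleanly by $\mathbb{E}[\phi(X)] = \bar\phi$ and $\mathbb{E}[\phi(Y)] = \bar\phi$. No structural appeal to the more elaborate SRRD or DARD reward terms is needed beyond invoking Proposition \ref{prop:prop3}; the deeper reason for idempotence is that $C(R)$ is already a potential-shaped reward whose potential $\phi$ itself has mean $\bar\phi$, and the $-\gamma \bar\phi$ constant in $C(R)$ is precisely what zeros out the would-be residual on the second pass.
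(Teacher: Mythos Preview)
Your proposal is correct and follows essentially the same approach as the paper: both invoke Proposition \ref{prop:prop3} to collapse the three operators into one common $C$, and both then verify idempotence by showing that each of the three correction terms added on the second pass vanishes. Your introduction of $\phi(s) = \mathbb{E}[R(s,A,X)]$ and $\bar\phi$ is a clean notational shortcut that makes the cancellation transparent, whereas the paper carries out the same cancellation by brute-force expansion of the nested expectations; the underlying computation is identical.
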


\begin{proof} From Proposition \ref{prop:prop3}, we showed that under full coverage, $C_S = C_{EPIC} = C_{DARD} = C_{SRRD}$. Applying $C_S$ to canonicalize a previously canonicalized reward we get:
\begin{align*}
        C_S[C_S(R)(s, a, s')] & =  C_S[R(s, a, s') + \mathbb{E}[\gamma R(s', A, S) 
        - R(s, A, S)  - \gamma R(S, A, S)]] \\
        \\
        & =  C_S(R)(s, a, s') + \gamma \mathbb{E}[C_S(R(s', A, S))] 
        - \mathbb{E}[C_S(R(s, A, S))] -\gamma \mathbb{E}[C_S(R(S, A, S))] \\
        \\
        &= C_S(R)(s, a, s') \\
        & \quad +\gamma \mathbb{E} [R(s', A, S) + \mathbb{E}[\gamma R(S, A, S) 
        - R(s', A, S)  - \gamma R(S, A, S)]] \\
        & \quad - \mathbb{E} [R(s, A, S) + \mathbb{E}[\gamma R(S, A, S) 
        - R(s, A, S)  - \gamma R(S, A, S)]] \\
        & \quad - \gamma \mathbb{E} [R(S, A, S) + \mathbb{E}[\gamma R(S, A, S) 
        - R(S, A, S)  - \gamma R(S, A, S)]] \\
        \\
        &= C_S(R)(s, a, s') \\
        & \quad + \gamma \mathbb{E} [R(s', A, S) - R(s', A, S)] + \mathbb{E}[\gamma R(S, A, S) - \gamma R(S, A, S)]
        \\
        & \quad - \mathbb{E} [R(s, A, S) - R(s, A, S)] + \mathbb{E}[\gamma R(S, A, S) 
        - \gamma R(S, A, S)] \\
        & \quad - \gamma \mathbb{E} [R(S, A, S) - R(S, A, S)] + \mathbb{E}[\gamma R(S, A, S)  - \gamma R(S, A, S)] \\ \\
        &= \quad C_S(R)(s, a, s')
\end{align*}
\end{proof}
\subsection{Regret Bound}
\label{ap:Appendix_A8}

In this section, we establish a regret bound in terms of the SRRD distance. The procedure for the analysis is adapted from the related work on EPIC by \citet{gleave-et-al:differences}. Given reward functions $R_A$ and $R_B$ and their optimal policies $\pi_A^*$ and $\pi_B^*$, we show that the regret of using policy $\pi^*_B$ instead of a policy $\pi^*_A$ is bounded by a function of $D_{\text{SRRD}}(R_A, R_B)$. We also show that as $D_{\text{SRRD}}(R_A, R_B) \rightarrow 0$, the regret approaches $0$ suggesting that $\pi^*_A \approx \pi^*_B$. The concept of regret bounds is important as it shows that differences in $D_{\text{SRRD}}$ reflect differences between the optimal policies induced by the input rewards.  

For our analysis, we will use the following Lemmas:

\begin{lemma}
    \label{lem:norms}
Let $f \in \mathbb{R}^n$ be a vector of real numbers and let $f_i \in \mathbb{R}^k$ be a subvector formed by selecting a subset of the entries of $f$. Then:
    \begin{align}
        ||f_i||_2  \le ||f||_2
    \end{align}
\end{lemma}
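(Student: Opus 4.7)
The plan is to reduce the inequality to an elementary statement about sums of non-negative terms. First I would write the squared Euclidean norms explicitly: $\|f\|_2^2 = \sum_{j=1}^n f_j^2$ and $\|f_i\|_2^2 = \sum_{j \in I} f_j^2$, where $I \subseteq \{1,\dots,n\}$ is the index set of size $k$ picking out the entries of the subvector $f_i$.

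Next I would partition the sum for $\|f\|_2^2$ according to whether indices lie in $I$ or in its complement $I^c = \{1,\dots,n\} \setminus I$, giving the identity
\begin{equation*}
\|f\|_2^2 \;=\; \sum_{j \in I} f_j^2 \;+\; \sum_{j \in I^c} f_j^2 \;=\; \|f_i\|_2^2 \;+\; \sum_{j \in I^c} f_j^2.
\end{equation*}
Since each $f_j^2 \ge 0$, the tail sum over $I^c$ is non-negative, so $\|f_i\|_2^2 \le \|f\|_2^2$. Taking square roots (which is monotone on $[0,\infty)$) yields $\|f_i\|_2 \le \|f\|_2$, as desired.

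There is no genuine obstacle here; the only thing to be careful about is making the index-set bookkeeping explicit so that the subvector's norm really does correspond to a subsum of the same squared entries. Once that is stated cleanly, the inequality follows immediately from the non-negativity of squares.
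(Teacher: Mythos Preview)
Your proposal is correct and takes essentially the same approach as the paper: both argue that the sum of squared entries over a subset of indices is bounded by the full sum because the omitted terms are non-negative, then take square roots. Your version is simply a more explicit rendering of the same idea, with the index-set bookkeeping spelled out.
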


\begin{proof}
    Suppose $f$ has $n$ elements and $f_i$ has $k$ elements. Since $f_i \subseteq f$, every element in $f_i$ is also in $f$, and $k \le n$. 
    Therefore, $\sum{f^2} \ge \sum{f_i^2}$ such that:  $||f_i||_2 \le ||f||_2.$   
\end{proof}

\begin{lemma}
    \label{lem:limits}
    Let \( R_A, R_B : \mathcal{S} \times \mathcal{A} \times \mathcal{S} \rightarrow \mathbb{R} \) be reward functions with corresponding optimal policies \( \pi^*_A \) and \( \pi^*_B \). Let \( D_\pi (t, s_t, a_t, s_{t+1}) \) denote the distribution over trajectories that policy \( \pi \) induces at time step \( t \). Let \( \mathcal{D}(s, a, s') \) be the coverage distribution over transitions. Suppose that there exists some \( K > 0 \) such that \( K\mathcal{D}(s_t, a_t, s_{t+1}) \geq D_\pi(t, s_t, a_t, s_{t+1}) \) for all time steps \( t \in \mathbb{N} \), triples \( s_t, a_t, s_{t+1} \in \mathcal{S \times A \times S} \) and policies \( \pi \in \{\pi^*_A, \pi^*_B\} \). Then the regret under \( R_A \) from executing \( \pi^*_B \) optimal for \( R_B \) instead of \( \pi^*_A \) is at most:
\[
G_{R_A} (\pi^*_A) - G_{R_A} (\pi^*_B) \leq \frac{2K}{1 - \gamma} D_{L_1, \mathcal{D}} (R_A, R_B).
\]

where $D_{L_1, \mathcal{D}}$ is a pseudometric in $L_1$ space, and $G_{R} (\pi)$ resembles the return of $R$ under a policy $\pi$. 
\end{lemma}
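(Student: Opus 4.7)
The plan is to adapt the standard regret-decomposition argument of \citet{gleave-et-al:differences}. I would first exploit the optimality of $\pi^*_B$ for $R_B$: since $G_{R_B}(\pi^*_B) \ge G_{R_B}(\pi^*_A)$, adding the nonnegative quantity $G_{R_B}(\pi^*_B) - G_{R_B}(\pi^*_A)$ to the regret and rearranging yields
$$G_{R_A}(\pi^*_A) - G_{R_A}(\pi^*_B) \le \bigl[G_{R_A}(\pi^*_A) - G_{R_B}(\pi^*_A)\bigr] + \bigl[G_{R_B}(\pi^*_B) - G_{R_A}(\pi^*_B)\bigr].$$
Each bracketed term depends only on a single policy, so it suffices to bound $|G_{R_A}(\pi) - G_{R_B}(\pi)|$ by $\tfrac{K}{1-\gamma}\, D_{L_1, \mathcal{D}}(R_A, R_B)$ for each $\pi \in \{\pi^*_A, \pi^*_B\}$ and then add the two bounds.

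For that single-policy bound, I would expand the return as $G_R(\pi) = \mathbb{E}_{\pi}\bigl[\sum_{t=0}^{\infty} \gamma^t R(s_t, a_t, s_{t+1})\bigr]$, pull the expectation inside the sum (Tonelli applies to the nonnegative $|R_A - R_B|$ after the triangle inequality), and rewrite each time slice as an expectation over $D_\pi(t, \cdot)$:
$$|G_{R_A}(\pi) - G_{R_B}(\pi)| \le \sum_{t=0}^{\infty} \gamma^t\, \mathbb{E}_{(s,a,s') \sim D_\pi(t, \cdot)}\bigl[|R_A(s,a,s') - R_B(s,a,s')|\bigr].$$
The coverage hypothesis $D_\pi(t, s, a, s') \le K\,\mathcal{D}(s,a,s')$ then lets me re-weight each expectation against $\mathcal{D}$ at the cost of a factor $K$, producing $K\,\mathbb{E}_{(s,a,s')\sim\mathcal{D}}[|R_A - R_B|] = K\,D_{L_1, \mathcal{D}}(R_A, R_B)$ per time slice. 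Summing the geometric series in $\gamma$ gives the single-policy bound $\tfrac{K}{1-\gamma}\, D_{L_1, \mathcal{D}}(R_A, R_B)$, and doubling yields the claim.

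The main obstacle, and essentially the only nonroutine step, is the measure-swap from $D_\pi$ to $\mathcal{D}$. The pointwise domination $D_\pi \le K\mathcal{D}$ must be interpreted as a bound on Radon--Nikodym derivatives (or, in the discrete case, a ratio of PMFs) and paired with the nonnegativity of $|R_A - R_B|$ to push through the expectation; this needs $\mathcal{D}$ to absolutely dominate $D_\pi$ so that the derivative is well defined wherever $D_\pi$ places mass. A secondary subtlety is that the excerpt does not explicitly define $D_{L_1, \mathcal{D}}$, so the proof should open by fixing the convention $D_{L_1, \mathcal{D}}(R_A, R_B) = \mathbb{E}_{(s,a,s')\sim \mathcal{D}}\bigl[|R_A(s,a,s') - R_B(s,a,s')|\bigr]$. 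The remaining ingredients---the optimality decomposition, the triangle inequality, and the geometric-series summation---are immediate.
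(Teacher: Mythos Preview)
Your proposal is correct and matches the approach the paper defers to: the paper's own ``proof'' of this lemma is simply a citation to \citet{gleave-et-al:differences}, Lemma~A.11, and your regret-decomposition argument (optimality of $\pi^*_B$ for $R_B$, triangle inequality on per-timestep rewards, measure swap via the coverage hypothesis, geometric summation) is precisely that standard argument.
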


\begin{proof}
    See \citet{gleave-et-al:differences} Lemma A.11.
\end{proof} 

\begin{lemma}
\label{lem:bounds}
Let \( R_A, R_B : \mathcal{S \times A \times S} \rightarrow \mathbb{R} \) be reward functions. Let \( \pi^*_A \) and \( \pi^*_B \) be policies optimal for reward functions \( R_A \) and \( R_B \). Suppose the regret under the standardized reward \( R^{S}_A \) from executing \( \pi^*_B \) instead of \( \pi^*_A \) is upper bounded by some \( U \in \mathbb{R} \):
    \begin{align}
    \label{lemma3_1}
    G_{R^{S}_A}(\pi^*_A) - G_{R^{S}_A}(\pi^*_B) \leq U. 
    \end{align}
    Assuming that $S_3$ is identically distributed to $S_4$ in $C_{SRRD}$, then the regret is bounded by:
    \begin{align}
    \label{lemma3_2}
    G_{R_A}(\pi^*_A) - G_{R_A}(\pi^*_B) \leq 8U \| R_A \|_2.
    \end{align}
\end{lemma}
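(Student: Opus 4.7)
The plan is to leverage the potential-shaping structure of $C_{SRRD}$ under the assumption $S_3$ identically distributed to $S_4$ in order to show that regret is preserved between $R_A$ and its canonicalized form up to a policy-independent offset, and then convert this to the standardized reward via a norm bound. The overall template follows the EPIC regret argument of \citet{gleave-et-al:differences}, with the new factor of $8$ appearing because $C_{SRRD}$ is a sum of eight reward-expectation terms rather than four.

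First, I would show that, under the stated distributional assumption, $C_{SRRD}(R_A)(s,a,s')$ can be expressed as $R_A(s,a,s') + \gamma \phi(s') - \phi(s) + K$, where $\phi:\mathcal{S}\to\mathbb{R}$ is a state-potential function and $K$ is a constant independent of $(s,a,s')$. The key observation is that the pairs $(\gamma R(s',A,S_1),\, -R(s,A,S_2))$ and $(\gamma^2 R(S_1,A,S_5),\, -\gamma R(S_2,A,S_6))$ each have the form of a potential-like shaping, because $S_1$ and $S_2$ (resp.\ $S_5$ and $S_6$) are governed by the same conditional transition dynamics applied to $s'$ and $s$ (resp.\ to $S_1$ and $S_2$). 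The hypothesis $S_3 \stackrel{d}{=} S_4$ then forces the residual terms $-\gamma R(S_3,A,S_4) + \gamma R(S_3,A,S_6) - \gamma^2 R(S_4,A,S_5)$ to either cancel pairwise or collapse into a constant, once one absorbs the $S_5, S_6$-dependent parts into $\phi$.

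Second, using the standard telescoping identity for potential shaping, I would show that for any policy $\pi$,
\[
G_{C_{SRRD}(R_A)}(\pi) \;=\; G_{R_A}(\pi) \;-\; \mathbb{E}_{s_0}[\phi(s_0)] \;+\; \frac{K}{1-\gamma},
\]
so that the offset is independent of $\pi$. Hence regret is preserved exactly, $G_{C_{SRRD}(R_A)}(\pi^*_A) - G_{C_{SRRD}(R_A)}(\pi^*_B) = G_{R_A}(\pi^*_A) - G_{R_A}(\pi^*_B)$. Interpreting $R^S_A$ as $C_{SRRD}(R_A)$ normalized so that returns scale by $1/\|C_{SRRD}(R_A)\|_2$, hypothesis (\ref{lemma3_1}) then yields
\[
G_{R_A}(\pi^*_A) - G_{R_A}(\pi^*_B) \;\leq\; U \cdot \|C_{SRRD}(R_A)\|_2.
\]
Finally, I would bound $\|C_{SRRD}(R_A)\|_2 \leq 8\|R_A\|_2$: because $C_{SRRD}(R_A)$ is a sum of eight conditional-expectation terms in $R_A$, the triangle inequality gives the factor $8$, and Lemma \ref{lem:norms} ensures that each expectation, viewed as a rearrangement/averaging of a subvector of $R_A$, has $L_2$-norm at most $\|R_A\|_2$. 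Combining with the previous display gives (\ref{lemma3_2}).

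The main obstacle is the first step: verifying the decomposition $C_{SRRD}(R_A) = R_A + \gamma\phi(s') - \phi(s) + K$ using only the distributional assumption $S_3 \stackrel{d}{=} S_4$. The delicate accounting is that $S_5, S_6$ depend on $S_1, S_2$ (hence on $s', s$), so the ``constant'' term $\gamma R(S_3, A, S_6) - \gamma^2 R(S_4, A, S_5) - \gamma R(S_3,A,S_4)$ must be shown to reorganize, via the $S_3 \stackrel{d}{=} S_4$ identification, into a genuine constant plus a contribution absorbable into $\phi$, with the discount exponents aligning so that $\phi$ enters with the correct coefficients $(\gamma, -1)$. A secondary subtlety is ensuring the norm bound remains tight at $8$ once one accounts for the compositional structure of the conditional expectations, rather than inadvertently incurring a product of batch sizes.
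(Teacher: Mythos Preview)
Your proposal is correct and follows essentially the same route as the paper's proof. The paper resolves your ``main obstacle'' directly by writing down the potential explicitly: with $S_3 \stackrel{d}{=} S_4$, one sets $\phi(x) = \mathbb{E}[R(x, A, S_1) + \gamma R(S_1, A, S_5) - \gamma R(S_4, A, S_5)]$ (equivalently $\mathbb{E}[R(x, A, S_2) + \gamma R(S_2, A, S_6) - \gamma R(S_3, A, S_6)]$, which is the same function by the distributional assumption), so that six of the seven expectation terms in $C_{SRRD}$ assemble into $\gamma\phi(s') - \phi(s)$ and the remaining $-\gamma\,\mathbb{E}[R(S_3,A,S_4)]$ is the genuine constant you anticipated; the rest of your argument (regret invariance under potential shaping plus a constant, normalization by $\|C_{SRRD}(R_A)\|_2$, and the triangle-inequality bound $\|C_{SRRD}(R_A)\|_2 \le 8\|R_A\|_2$ via Lemma~\ref{lem:norms}) matches the paper exactly.
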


\begin{proof}

Following \cite{gleave-et-al:differences}, we can express the standardized reward as:
\begin{align}
\label{standard}
R^{S} = \frac{C_{SRRD}(R)}{\|C_{SRRD}(R)\|_2},  
\end{align}

In $C_{SRRD}$ (Equation \ref{eq:srrd}), the states $S_1$ and $S_5$ depend on $s'$, while $S_2$ and $S_6$ depend on $s$. Assuming that $S_3$ is identically distributed to $S_4$, we can see that $C_{SRRD}$ is a potential function, where $\phi(s') = \mathbb{E}[R(s', A, S_1)]+ \gamma R(S_1, A, S_5) - \gamma R(S_4, A, S_5)], $ and $\phi(s) = \mathbb{E}[R(s, A, S_2)]+ \gamma R(S_2, A, S_6) - \gamma R(S_3, A, S_6)]$. Therefore, $C_{SRRD}$ is simply $R$ shaped by some potential $\Phi$, such that:
$$G_{C_{SRRD}(R)}(\pi) = G_R(\pi) - \mathbb{E}_{s_0 \sim d_0}[\Phi(s_0)] $$
Therefore, we can write:
\begin{align}
\label{g_standard}
G_{R^{S}}(\pi) = \frac{1}{\|C_{SRRD}(R)\|_2} G_{C_{SRRD}(R)}(\pi)
= \frac{1}{\|C_{SRRD}(R)\|_2} (G_R(\pi) - \mathbb{E}_{s_0 \sim d_0}[\Phi(s_0)]),
\end{align}
where, \( s_0 \) depends only on the initial state distribution \( d_0 \), but not \(\pi\). Applying Equation \ref{g_standard} to $\pi^*_A$ and $\pi^*_B$:
\begin{align}
\label{standard_2}
G_{R^{S}}(\pi^*_A) - G_{R^{S}}(\pi^*_B) = \frac{1}{\|C_{SRRD}(R_A)\|_2} (G_{R_A}(\pi^*_A) - G_{R_A}(\pi^*_B)). 
\end{align}
Combining Equation \ref{standard_2} and \ref{lemma3_1}:
\begin{align}
\label{lemma3_11}
G_{R_A}(\pi^*_A) - G_{R_A}(\pi^*_B) \leq U \|C_{SRRD}(R_A)\|_2.
\end{align}

We now bound $\|C_{SRRD} (R_A)\|_2$ in terms of $\|R_A\|_2$. The SRRD canonical reward is expressed as: 
\begin{align*}
    C_{SRRD}(R)(s, a, s’) &= R(s, a, s’) + \mathbb{E}[ \gamma R(s’, A, S_1) - R(s, A, S_2) - \gamma R(S_3, A, S_4) \\
    &\quad + \gamma^2 R(S_1, A, S_5) - \gamma R(S_2, A, S_6) 
     + \gamma R(S_3, A, S_6) - \gamma^2 R(S_4, A, S_5) ] 
\end{align*}
Now, using the triangular inequality rule on the $L_2$ distance, and linearity of expectation:
\begin{align*}
    ||C_{SRRD}(R)(s, a, s’)||_2 &\le ||R(s, a, s’)||_2 + \mathbb{E}[ \gamma ||R(s’, A, S_1)||_2 + ||-R(s, A, S_2)||_2 + \gamma ||-R(S_3, A, S_4)||_2 \\
    & + \gamma^2 ||R(S_1, A, S_5)||_2 + \gamma ||-R(S_2, A, S_6)||_2 
     + \gamma ||R(S_3, A, S_6)||_2 + \gamma^2 ||-R(S_4, A, S_5)||_2 ] 
\end{align*}

Using Lemma \ref{lem:norms}, the $L_2$ norm of each reward subspace is such that: 
\begin{align}
    ||R(S_i, A_j, S_k)||_2 \le ||R(S, A, S')||_2 = ||R||_2.    
\end{align}
Therefore, 
\begin{align}
\label{lemma3_4}
    ||C_{SRRD}(R)(s, a, s’)||_2 &\le 8||R||_2 
\end{align}

Combining Equation \ref{lemma3_4} and \ref{lemma3_11} we get: 
\begin{align*}
G_{R_A}(\pi^*_A) - G_{R_A}(\pi^*_B) \leq 8U||R_A||_2.
\end{align*}

\end{proof}

\begin{lemma} 
\label{fullcov}
Given a reward function $R: S \times A \times S \rightarrow \mathbb{R}$, then: $\mathbb{E}[C_{SRRD}(R)(S, A, S')] = 0$, if $S$ and $S'$ are identically distributed. 
\end{lemma}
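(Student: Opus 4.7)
The plan is to substitute Definition~\ref{def:SRRD} into $\mathbb{E}[C_{SRRD}(R)(S, A, S')]$ and use linearity of expectation to split the claim into the outer expectations of the eight reward terms appearing in Equation~\ref{eq:srrd}. I would then pair those eight terms so that the hypothesis $S \stackrel{d}{=} S'$, together with the symmetric construction of $(S_1, S_5)$ from $s'$ and $(S_2, S_6)$ from $s$, forces matching pairs to have equal expected values after integrating the outer variables.

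The key steps, in order, will be as follows. First, I would note that $S_1 \mid s', A$ has the same conditional law as $S_2 \mid s, A$, and analogously $S_5 \mid S_1, A$ matches $S_6 \mid S_2, A$, so once the outer triple $(S, A, S')$ is integrated with $S \stackrel{d}{=} S'$, one obtains $\mathbb{E}[R(S', A, S_1)] = \mathbb{E}[R(S, A, S_2)]$ and $\mathbb{E}[R(S_1, A, S_5)] = \mathbb{E}[R(S_2, A, S_6)]$. Second, I would invoke the companion assumption used already in Lemma~\ref{lem:bounds} and built into the full-coverage regime of Proposition~\ref{prop:prop3}: namely, that $S_3 \stackrel{d}{=} S_4$ and that the outer marginal of $S$ coincides with $\mathcal{D_S}$. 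Under these conditions, the three ``constant'' expectations $\mathbb{E}[R(S_3, A, S_4)]$, $\mathbb{E}[R(S_3, A, S_6)]$, $\mathbb{E}[R(S_4, A, S_5)]$ and the outer term $\mathbb{E}[R(S, A, S')]$ all collapse to a single common value $\nu$. Third, I would collect coefficients and exploit the fact that the signed weights $(1, +\gamma, -1, -\gamma, +\gamma^2, -\gamma, +\gamma, -\gamma^2)$ in Equation~\ref{eq:srrd} telescope to $0$, so the total expectation equals $0 \cdot \nu = 0$.

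The step I expect to be hardest is handling the two non-forward terms $+\gamma\,\mathbb{E}[R(S_3, A, S_6)]$ and $-\gamma^2\,\mathbb{E}[R(S_4, A, S_5)]$: the end-states $S_6$ and $S_5$ are conditioned on $S_2$ and $S_1$ rather than on $S_3$ and $S_4$, so their joint distributions with $S_3$ and $S_4$ are not forward-transition pairs. Showing that, under the stated identical-distribution hypothesis plus the stationarity implicit in the inner chain, these non-forward expectations nevertheless coincide marginally with the forward ones is where the real work lies; once that alignment is established, the coefficient telescoping closes out the argument. As a sanity check, this is the same mechanism by which the EPIC identity $\mathbb{E}[C_{EPIC}(R)(S, A, S')] = 0$ holds under full coverage, and Proposition~\ref{prop:prop3} tells us that under full coverage $C_{SRRD}$ collapses to $C_{EPIC}$, which gives a one-line alternative derivation via that identity.
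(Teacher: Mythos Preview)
Your proposal is sound, but your \emph{main} plan is considerably more elaborate than the paper's proof, while the ``one-line alternative'' you tuck into your final sentence is essentially what the paper actually does.

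The paper does not pair terms via conditional-law symmetries or wrestle with the non-forward expectations. It simply asserts the full-coverage collapse of Definition~\ref{def:SRRD}: when the argument is the random triple $(S,A,S')$, the auxiliary sets degenerate to $S_3=S$ and $S_1=S_2=S_4=S_5=S_6=S'$. Under that identification, four of the eight terms in Equation~\ref{eq:srrd} cancel in pairs before any outer expectation is taken (for instance $+\gamma^2 R(S_1,A,S_5)-\gamma^2 R(S_4,A,S_5)=0$ since $S_1=S_4$, and $-\gamma R(S_3,A,S_4)+\gamma R(S_3,A,S_6)=0$ since $S_4=S_6$), leaving precisely the four-term EPIC-style expression. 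One outer expectation together with $S\stackrel{d}{=}S'$ then gives zero.

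So the step you flagged as hardest---showing that the non-forward expectations $\mathbb{E}[R(S_3,A,S_6)]$ and $\mathbb{E}[R(S_4,A,S_5)]$ align marginally with forward ones---never arises in the paper's argument: after the collapse those non-forward terms are \emph{literally identical} to the forward terms they are paired with, so they cancel pointwise rather than only in expectation. Your careful pairing-and-telescoping route would also reach the conclusion, but it is doing genuine work that the collapse renders unnecessary; what you presented as a throwaway sanity check via Proposition~\ref{prop:prop3} is in fact the paper's whole proof.
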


\begin{proof}
Applying the transitions $(S, A, S')$ into $C_{SRRD}$ (Equation \ref{eq:srrd}), $S' = S_4 = S_1 = S_2 = S_5 = S_6$, and $S = S_3$, such that: 
\begin{align*}
    C_{SRRD}(R)(S, A, S') &= R(S, A, S') + \mathbb{E}[\gamma R(S', A, S') 
    - R(S, A, S')  - \gamma R(S, A, S')]
\end{align*}
Therefore:  
\begin{align*}
    \mathbb{E}[C_{SRRD}(R)(S, A, S')] &= \mathbb{E}[R(S, A, S') + \mathbb{E}[\gamma R(S', A, S') - R(S, A, S')  - \gamma R(S, A, S')] 
\end{align*}
if $S$ is identically distributed to $S'$, then $\mathbb{E}[R(S', A, S')] = \mathbb{E}[R(S, A, S')]$, hence:
\begin{align*}
    \mathbb{E}[C_{SRRD}(R)(S, A, S')] &= \mathbb{E}[R(S, A, S') + \mathbb{E}[\gamma R(S, A, S') - R(S, A, S')  - \gamma R(S, A, S')]  = 0
\end{align*}

\end{proof}

\begin{theorem}
\label{regret_bound_th}
    Let \( R_A, R_B : \mathcal{S \times A \times S} \rightarrow \mathbb{R} \) be reward functions with respective optimal policies, \( \pi_A^*, \pi_B^* \). Let $\gamma$ be a discount factor, \( D_\pi (t, s_t, a_t, s_{t+1}) \) be the distribution over the transitions induced by policy \( \pi \) at time \( t \), and \( \mathcal{D}(s, a, s') \) be the coverage distribution. Suppose there exists \( K > 0 \) such that \( K\mathcal{D}(s_t, a_t, s_{t+1}) \geq D_\pi (t, s_t, a_t, s_{t+1}) \) for all times \( t \in \mathbb{N} \), triples \( (s_t, a_t, s_{t+1}) \in \mathcal{S \times A \times S} \) and policies \( \pi \in \{ \pi_A^*, \pi_B^* \} \). Then the regret under \( R_A \) from executing \( \pi_B^* \) instead of \( \pi_A^* \) is at most:
\[
G_{R_A}(\pi_A^*) - G_{R_A}(\pi_B^*) \leq 32K\|R_A\|_2 (1 - \gamma)^{-1} D_{\text{SRRD}}(R_A, R_B),
\]
where \( G_R(\pi) \) is the return of policy \( \pi \) under reward \( R \).
\end{theorem}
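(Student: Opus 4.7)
The plan is to chain the three auxiliary lemmas in a manner analogous to the EPIC regret bound of Gleave et al., passing from the raw regret to the standardized reward, then from the standardized $L_1$ distance to the $L_2$ distance, and finally to the Pearson-based $D_{\text{SRRD}}$.

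First I would invoke Lemma \ref{lem:bounds} to reduce the problem to bounding regret under the standardized reward $R_A^S = C_{SRRD}(R_A)/\|C_{SRRD}(R_A)\|_2$: if $U$ is any upper bound on $G_{R_A^S}(\pi_A^*) - G_{R_A^S}(\pi_B^*)$, then the regret under $R_A$ is at most $8U\|R_A\|_2$. This step is legitimate because $C_{SRRD}(R_A)$ equals $R_A$ shaped by a potential (as argued just after Equation \ref{standard}) and subsequent rescaling by a positive constant preserves optimal policies, so $\pi_A^*$ and $\pi_B^*$ remain optimal for $R_A^S$ and $R_B^S$, respectively.

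Next I would apply Lemma \ref{lem:limits} to the pair $(R_A^S, R_B^S)$, obtaining $U \le \tfrac{2K}{1-\gamma}\, D_{L_1,\mathcal{D}}(R_A^S, R_B^S)$. To convert this $L_1$ distance into a Pearson distance, I would first use Jensen's inequality to get $D_{L_1,\mathcal{D}}(R_A^S, R_B^S) \le \|R_A^S - R_B^S\|_{L_2(\mathcal{D})}$. By the definition of standardization, $\|R_A^S\|_{L_2(\mathcal{D})} = \|R_B^S\|_{L_2(\mathcal{D})} = 1$, and Lemma \ref{fullcov} forces $\mathbb{E}_{\mathcal{D}}[R_A^S] = \mathbb{E}_{\mathcal{D}}[R_B^S] = 0$ whenever the coverage distribution satisfies $S \sim S'$. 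Expanding the squared $L_2$ distance then yields
\begin{equation*}
\|R_A^S - R_B^S\|_{L_2(\mathcal{D})}^2 \;=\; 2 - 2\,\rho(R_A^S, R_B^S) \;=\; 2 - 2\,\rho(C_{SRRD}(R_A), C_{SRRD}(R_B)) \;=\; 4\, D_{\text{SRRD}}(R_A, R_B)^2,
\end{equation*}
so $\|R_A^S - R_B^S\|_{L_2(\mathcal{D})} \le 2 D_{\text{SRRD}}(R_A, R_B)$. Substituting back through the chain gives $G_{R_A}(\pi_A^*) - G_{R_A}(\pi_B^*) \le 8 \cdot \tfrac{2K}{1-\gamma} \cdot 2 D_{\text{SRRD}}(R_A, R_B) \cdot \|R_A\|_2$, which is exactly the announced bound with constant $32$.

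The main obstacle, in my view, is ensuring the zero-mean and unit-norm conditions required to equate $\|R_A^S - R_B^S\|_{L_2}^2$ with $4 D_{\text{SRRD}}^2$. Lemma \ref{fullcov} only yields $\mathbb{E}[C_{SRRD}(R)(S,A,S')] = 0$ under the identical-distribution assumption on $S$ and $S'$, and Lemma \ref{lem:bounds} similarly invokes identical distribution for $S_3$ and $S_4$ inside $C_{SRRD}$. I would either fold these compatibility conditions into the theorem's hypotheses (as is done implicitly in the analogous EPIC result) or absorb a residual mean into a centring constant at the cost of a slightly looser constant. The remainder of the argument is essentially bookkeeping once those distributional assumptions are in place.
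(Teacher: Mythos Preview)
Your proposal is correct and follows essentially the same approach as the paper's proof: both chain Lemma~\ref{fullcov} (zero-mean of $C_{SRRD}$) to identify $2D_{\text{SRRD}}$ with $\|R_A^S - R_B^S\|_{L_2}$, use the $L_1 \le L_2$ bound on a probability space, apply Lemma~\ref{lem:limits} to get the $\tfrac{2K}{1-\gamma}$ factor, and finish with Lemma~\ref{lem:bounds} for the $8\|R_A\|_2$ factor. Your identification of the implicit identical-distribution assumptions on $S,S'$ (and $S_3,S_4$) is exactly right; the paper's proof also relies on these without stating them in the theorem itself.
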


\begin{proof}

Since $C_{SRRD}$ is zero-mean centered for transition inputs $(S, A, S')$ (see Lemma \ref{fullcov}), from \citet{gleave-et-al:differences} [A.4], it follows that:
\begin{align}
D_{\text{SRRD}}(R_A, R_B) = \frac{1}{2} \left\| R^{S}_A(S, A, S') - R^{S}_B(S, A, S') \right\|_2^2.
\end{align}
From Lemma A.10 in \cite{gleave-et-al:differences}, the $L^1$ norm of a function is upper bounded by its $L^2$ norm on a probability space, such that:
\begin{align}
\label{theorem_2_1}
D_{L_1, \mathcal{D}}(R^{S}_A, R^{S}_B) = \left\| R^{S}_A(S, A, S') - R^{S}_B(S, A, S') \right\|_1 \leq 2D_{\text{SRRD}}(R_A, R_B).
\end{align}
Combining Lemma \ref{lem:limits} and Equation \ref{theorem_2_1}:
\begin{align}
G_{R^{SRRD}_A}(\pi^*_A) - G_{R^{SRRD}_A}(\pi^*_B) \leq \frac{2K}{1 - \gamma} D_{L_1,\mathcal{D}}(R^{SRRD}_A, R^{SRRD}_B) \leq \frac{4K}{1 - \gamma} D_{\text{SRRD}}(R_A, R_B).
\end{align}

Applying Lemma \ref{lem:bounds}, we get:
\begin{align}
G_{R_A}(\pi^*_A) - G_{R_A}(\pi^*_B) \leq \frac{32K\|R_A\|_2}{1 - \gamma} D_{\text{SRRD}}(R_A, R_B).
\end{align}
\end{proof}

As shown, when $D_{\text{SRRD}} \rightarrow 0$, the regret: $G_{R_A}(\pi^*_A) - G_{R_A}(\pi^*_B) \rightarrow 0$

\subsection{Generalized SRRD Extensions}
\label{ap:Appendix_A4}
We derive a generalized form of SRRD by recursively eliminating shaping residuals via higher-order terms.

\begin{enumerate}
    \item To create SRRD, the first step is to adopt the desirable characteristics from both DARD and EPIC (refer to Section 4), and derive $C_1$ as follows:
\begin{equation*}
    \begin{split}
        C_1(R)(s, a, s') & = R(s, a, s') + \mathbb{E}[\gamma R(s', A, S_1) 
        - R(s, A, S_2)  - \gamma R(S_3, A, S_4)]. 
    \end{split}
\end{equation*}

\noindent $C_1$ yields the residual shaping term: 
     $ \phi_{res1} = \mathbb{E}[\gamma^2\phi(S_1) - \gamma^2\phi(S_4) + \gamma \phi(S_3) - \gamma\phi(S_2)]. $
\item To cancel $\mathbb{E}[\phi(S_i)], \, $ $\forall_i \in \{1,...,4\}$, we add rewards $R(S_i, A, k^1_i)$ to induce potentials $\gamma \phi(k^1_i) - \phi(S_i)$, which results in $C_2$:
\begin{equation*}
    \begin{split}
        C_2(R)(s, & a, s') = R(s, a, s') + \mathbb{E}[\gamma R(s', A, S_1) 
        - R(s, A, S_2)  - \gamma R(S_3, A, S_4)
        \\ &  \text{\quad \quad \quad }  + \gamma^2 R(S_1, A, k^1_1) - \gamma^2 R(S_4, A, k^1_4)  + \gamma R(S_3, A, k^1_3) - \gamma R(S_2, A, k^1_2)].
    \end{split}
\end{equation*}
$C_2$ yields the residual shaping: $ \phi_{res2} = \mathbb{E}[\gamma^3\phi(k^1_1) - \gamma^3 \phi(k^1_4) + \gamma^2 \phi(k^1_3) - \gamma^2\phi(k^1_2)].$
\item To cancel $\mathbb{E}[\phi(k^1_i)]$, we add rewards $R(k^1_i, A, k^2_i)$ to induce potentials $\gamma \phi(k^2_i) - \phi(k^1_i)$, yielding $C_3$:
\begin{equation*}
    \begin{aligned}
        C_3(R)(s, & a, s') = R(s, a, s') + \mathbb{E}[\gamma R(s', A, S_1) 
        - R(s, A, S_2)  - \gamma R(S_3, A, S_4)
        \\ &  \text{\quad \quad \quad }  + \gamma^2 R(S_1, A, k^1_1) - \gamma^2 R(S_4, A, k^1_4)  + \gamma R(S_3, A, k^1_3) - \gamma R(S_2, A, k^1_2) 
        \\ &  \text{\quad \quad \quad }  
        + \gamma^3 R(k^1_1, A, k^2_1) - \gamma^3 R(k^1_4, A, k^2_4) 
         + \gamma^2 R(k^1_3, A, k^2_3) - \gamma^2 R(k^1_2, A, k^2_2)]
    \end{aligned}
\end{equation*}
$C_3$ yields the residual shaping: $ \phi_{res3} = \mathbb{E}[\gamma^4\phi(k^2_1) - \gamma^4 \phi(k^2_4) + \gamma^3 \phi(k^2_3) - \gamma^3\phi(k^2_2)].$
\item  As we can see, this process results in the generalized formula: 
\begin{equation*}
    \begin{aligned}
        C_{n}(R)(s, & a, s') = R(s, a, s') + \mathbb{E}[\gamma R(s', A, S_1) 
        - R(s, A, S_2)  - \gamma R(S_3, A, S_4)
        \\ & + \gamma^2 R(S_1, A, k^1_1) - \gamma^2 R(S_4, A, k^1_4)  + \gamma R(S_3, A, k^1_3) - \gamma R(S_2, A, k^1_2) 
        \\ & +
        \gamma^3 R(k^1_1, A, k^2_1) - \gamma^3 R(k^1_4, A, k^2_4) 
         + \gamma^2 R(k^1_3, A, k^2_3) - \gamma^2 R(k^1_2, A, k^2_2)
         \\ & \text{\quad \quad \quad \quad \quad \quad \quad \quad \quad \quad \quad \quad \quad } \cdots 
         \\ & + \gamma^n R(k^{n-2}_1, A, k^{n-1}_1) - \gamma^n R(k^{n-2}_4, A, k^{n-1}_4) 
         + \gamma^{n-1} R(k^{n-2}_3, A, k^{n-1}_3) \\ \quad\quad &  \quad\quad- \gamma^{n-1} R(k^{n-2}_2, A, k^{n-1}_2)], 
    \end{aligned}
\end{equation*}

\noindent where, $n \ge 3$. $C_n$ yields the residual shaping: 

$$ \phi_{n} = \mathbb{E}[\gamma^{n+1}\phi(k^{n-1}_1) - \gamma^{n+1} \phi(k^{n-1}_4) + \gamma^{n} \phi(k^{n-1}_3) - \gamma^{n}\phi(k^{n-1}_2)].$$
\end{enumerate}

\begin{itemize}
    \item 
    Looking at $\phi_n$, as $n$ increases, each residual shaping term is scaled by $\gamma^n$, so their contribution diminishes exponentially with $n$. Therefore, the upper bound magnitude of $\phi_n$ significantly decreases since $0 \le \gamma < 1$, and each $|\phi(k_i)| \le M$, where $M$ is the upper bound potential for all distributions $k_i \subseteq S^\mathcal{D}$ (see Appendix \ref{ap:th2}). Therefore, as $n$ approaches infinity, $\phi_n$ approaches $0$.

    \item 
    The advantage of the generalized SRRD form is that $\phi_n$ approaches $0$ as $n$ increases. However, computing many $k_i$ sets makes the process expensive and difficult to implement in practice. Therefore, a smaller $n$ is preferable. In SRRD, we choose $n=2$, then use our intuition to select sets $k_i$, which further reduce the residual shaping.  
\end{itemize}

\subsection{Computational Complexity}
\label{sec:complexity}
The pseudometrics discussed in this paper all utilize a double sampling method that uses the batches: $B_V$ of $N_V$ sampled transitions, and $B_M$ of $N_M$ state-action pairs. The sample-based approximations for the methods have the following computational complexities:

\paragraph{EPIC Complexity:}
\begin{equation*}
    \begin{split}
        C_{EPIC}(R)(s, a, s') & = R(s, a, s') + \mathbb{E}[\gamma R(s', A, S') 
        - R(s, A, S')  - \gamma R(S, A, S')].
    \end{split}
\end{equation*}

\begin{itemize}
\item For all transitions in $B_V$, approximating $\mathbb{E}[R(S, A, S')]$ from $B_M$ takes approximately $O(N_M^2)$ complexity since we iterate $B_M$ in a double loop (see Equation \ref{eq:epic_approximation}). However, this expectation is the same for all transitions, hence it can be computed once. 

\item For each transition $(s, a, s') \in B_V$, the reward expectations, $\mathbb{E}[R(s', A, S')]$ and $\mathbb{E}[R(s, A, S')]$ can be approximated in one iteration through $B_M$, resulting in $O(N_M)$ time complexity per transition. Since the computation varies based on $(s, a, s')$, the overall complexity is $O(N_V N_M)$.
\end{itemize}

Therefore, the overall complexity for EPIC is $O(\max(N_V N_M, N_M^2))$. When $N_M$ is significantly larger compared to $N_V$, the complexity is approximately $O(N_M^2)$, and if $N_V$ is significantly larger relative to $N_M$, the complexity is approximately $O(N_V N_M)$.

\paragraph{DARD Complexity:}
\begin{equation*}
    \begin{split}
        C_{DARD}(R)(s, a, s') & = R(s, a, s') + \mathbb{E}[\gamma R(s', A, S'') 
        - R(s, A, S')  - \gamma R(S', A, S'')].
    \end{split}
\end{equation*}

\begin{itemize}
    \item For each transition $(s, a, s') \in B_V$, the reward expectations, $\mathbb{E}[R(s', A, S'')]$ and $\mathbb{E}[R(s, A, S')]$ can be approximated through a single iteration of $B_M$, resulting in $O(N_M)$ time complexity per transition. The reward expectation, $\mathbb{E}[R(S', A, S'')]$ can be computed via a double loop through $B_M$, and it varies based on $(s, a, s')$, resulting in $O(N_M^2)$ time complexity per transition. 
\end{itemize}{}

Therefore, the overall complexity for DARD is $O(N_V N_M^2).$

\paragraph{SRRD Complexity:}
\begin{equation*}
    \begin{split}
        C_{SRRD}(R)(s, & a, s') = R(s, a, s') + \mathbb{E}[\gamma R(s', A, S_1) 
        - R(s, A, S_2)  - \gamma R(S_3, A, S_4)  
        \\ & + \gamma^2 R(S_1, A, S_5) - \gamma R(S_2, A, S_6) 
        + \gamma R(S_3, A, S_6) - \gamma^2 R(S_4, A, S_5) ].
    \end{split}
\end{equation*}

\begin{itemize}

\item For all transitions in $B_V$, approximating $\mathbb{E}[R(S_3, A, S_4)]$ from $B_M$ takes approximately $O(N_M^2)$ complexity since we iterate $B_M$ in a double loop (see Equation \ref{eq:SRRD_approximation}). However, this expectation is the same for all transitions, hence it can be computed once. 

\item For each transition $(s, a, s') \in B_V$, the reward expectations, $\mathbb{E}[R(s', A, S_1)]$ and $\mathbb{E}[R(s, A, S_2)]$ can be approximated in one iteration through $B_M$, resulting in $O(N_M)$ time complexity per transition. The reward expectations, $\mathbb{E}[R(S_1, A, S_5)]$, $\mathbb{E}[R(S_2, A, S_6)]$, $\mathbb{E}[R(S_3, A, S_6)]$ and $\mathbb{E}[R(S_4, A, S_5)]$, all require double iterations through $B_M$, and they vary based on $(s,a, s')$, hence, they take $O(N_M^2)$ complexity per transition.

\end{itemize}

Therefore, the overall complexity for SRRD is $O(N_V N_M^2).$

\section {Additional Considerations}

\subsection{Deviations from Potential Shaping}
\label{ap:Appendix_B.4}
\begin{figure}[htb]
  \centering
  \includegraphics[width=1\textwidth]{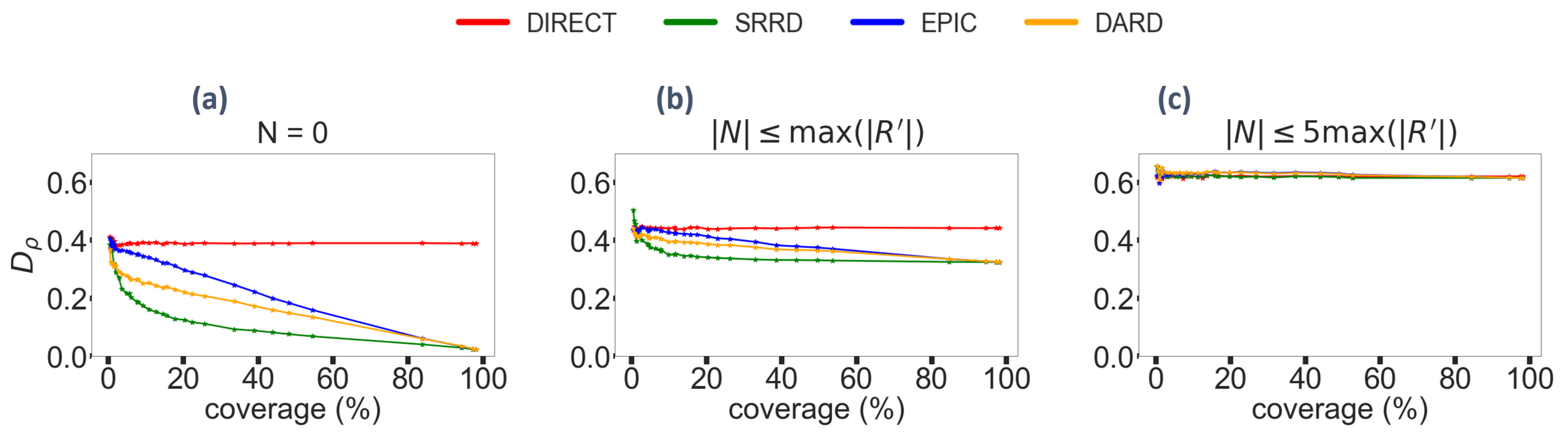}
  \caption{(Non-Potential Shaping Effects): As the severity of randomly generated noise increases from part (a) to (c), rewards deviate more from potential shaping, hence, all the pseudometrics degrade in performance. In the end (part (c)), the pseudometrics perform similarly to DIRECT, showing that canonicalization does not yield any additional advantages when the rewards significantly deviate from potential shaping.}
  \label{noise_fig} 
\end{figure}

Reward comparison pseudometrics are generally designed to eliminate  potential shaping, and in this study, we examine how deviations from potential shaping can affect the performance of these pseudometrics. Within a $20 \times 20$ Gridworld domain, we generate a ground truth ($GT$) polynomial reward function and a corresponding shaped reward function ($SH$), both with full coverage ($100\%$ transitions). From $GT$ and $SH$, we sample rewards $R$ and $R'$ using uniform policy rollovers. For both samples, we add additional noise, $N$, with the following severity levels: \textbf{None:} $N = 0$, \textbf{Mild:} $|N| \leq \max(|R'|)$, and \textbf{High:} $|N| \leq 5\max(|R'|)$, where $N$ is randomly generated from a uniform distribution within bounds defined by the severity levels. Thus, the updated shaped reward is given by: $$R'' = R' + N.$$ 
Figure \ref{noise_fig} shows the performance variation of the reward comparison pseudometrics to different noise severity levels. When $N = 0$ (noise free), the difference between SRRD, EPIC, DARD, and DIRECT is the highest, with a performance order: SRRD $>$ DARD $>$ EPIC $>$ DIRECT, which demonstrates SRRD's performance advantage over other pseudometrics under potential shaping. As the impact of $N$ increases, the shaped reward $R''$ becomes almost entirely comprised of noise, and the shaping component significantly deviates from potential shaping. As shown, SRRD's performance gap over other pseudometrics significantly diminishes. At high severity (Figure \ref{noise_fig}c), SRRD's performance is nearly identical to all other pseudometrics, including DIRECT, which does not involve any canonicalization. In conclusion, these results still demonstrate SRRD's superiority in canonicalizing rewards even with minor random deviations from potential shaping (Figure \ref{noise_fig} b). However, as the rewards become non-potentially shaped, all pseudometrics generally become ineffective, performing similarly to non-canonicalized techniques such as DIRECT.

\subsection{Sample-based Approximations using Unbiased Estimates}
\label{sec:unbiased_estimates}

In this paper, the sample-based SRRD approximation presented adopts a similar form to that of EPIC and DARD, primarily to maintain consistency with well-established methods. This approach utilizes two types of samples: $B_V$, a batch of transitions of size $N_V$, and $B_M$, a batch of state-action pairs of size $N_M$. Each transition in $B_V$ is canonicalized using state-action pairs from $B_M$. The advantage is that if $B_M$ is representative of the joint state-action distribution of the reward function (or sample), canonicalization can be highly effective even with smaller sizes of $B_M$. Since each transition in $B_V$ is canonicalized using state-action pairs from $B_M$, the computational load required during canonicalization can be reduced compared to using all possible transitions. For example, suppose we have a reward function with $10,000$ states and $100$ actions. In this case, the total number of transitions $ = 10,000 \times 100 \times 10,000 = 10^{10}$. However, using the sample-based SRRD approximation method, we could generate a sample $B_V$ with $100$ transitions, and another sample $B_M$, with $100$ state-action pairs. Each transition in $B_V$ is then canonicalized using state-action pairs from $B_M$, and the total number of transitions needed is approximately $N_V *{N_M}^2 \approx 100*100^2 = 10^6$ transitions (see Appendix \ref{sec:complexity}); which is way less than the transitions needed in the full computation. 

The double-batch sampling approach (using $B_V$ and $B_M)$ generally works well in environments where transition sparsity is not a major problem. However, when transition sparsity is a concern, a large fraction of transitions generated from the combination of $B_V$ and $B_M$ might be unsampled, and can have undefined reward values, which can introduce errors in canonicalization. Our proposed pseudometric, SRRD, is much more robust under these conditions compared to both EPIC and DARD, even though it uses a similar sampling form. An alternative approach to the double-batch sampling method is to use unbiased estimates. The unbiased estimate approximations are described as follows:
\begin{equation}
\begin{aligned}
    \hat{C}_{EPIC}(R)(s, a, s') &\approx R(s, a, s') + \frac{\gamma}{N_1}\sum R(s', u, x') 
      - \frac{1}{N_2}\sum R(s, u, x') - \frac{\gamma}{N_3}\sum R(x, u, x'), 
\end{aligned}
\end{equation}
where: $\{(s', u, x')\}$ is the set of sampled transitions that start from the state  $s'$, with the total number of transitions equal to $N_1$; $\{(s, u, x')\}$ is the set of sampled transitions that start from the state $s$, with the total number of transitions equal to $N_2$; and $\{(x, u, x')\}$ is the set of all the sampled transitions, with the total number of transitions equal to $N_3$.
\begin{equation}
\begin{aligned}
    \hat{C}_{DARD}(R)(s, a, s') &\approx R(s, a, s') + \frac{\gamma}{N_1}\sum R(s', u, x'') 
      - \frac{1}{N_2}\sum R(s, u, x') - \frac{\gamma}{N_3}\sum R(x', u, x'') ,
\end{aligned}
\end{equation}
where: $\{(s', u, x'')\}$, is the set of all sampled transitions that start from the state  $s'$, with the total number of transitions equal to $N_1$; $\{(s, u, x')\}$ is the set of all sampled transitions that start from the state $s$, with the total number of transitions equal to $N_2$; and $\{(x', u, x'')\}$ is the set of transitions that start from the subsequent states of $s$ to the subsequent states of $s'$, and have a total number of transitions equal to $N_3$.
\begin{equation}
    \begin{split}
        C_{SRRD}(R)(s, & a, s') \approx R(s, a, s') + \frac{\gamma}{N_1}\sum R(s', u, x_1) 
      - \frac{1}{N_2}\sum R(s, u, x_2) - \frac{\gamma}{N_3}\sum R(x_3, u, x_4)   \\ 
      &+ \frac{\gamma^2}{N_4}\sum R(x_1, u, x_5) 
      - \frac{\gamma}{N_5}\sum R(x_2, u, x_6) + \frac{\gamma}{N_6}\sum R(x_3, u, x_6)  - \frac{\gamma^2}{N_7}\sum R(x_4, u, x_5).
    \end{split}
\end{equation}
For SRRD, let \( X_1 \) represent the set of all subsequent states from \( s' \); \( X_2 \) represent the set of all subsequent states to \( s \); and \( X_3 \) be the set of all initial states for transitions, while \( X_4 \), \( X_5 \), and \( X_6 \) denote the sets of subsequent states from \( X_3 \), \( X_1 \), and \( X_2 \), respectively. For all transitions, the set: \( \{(x_3, u, x_4)\} \) contains all sampled transitions where \( x_3 \in X_3 \) and \( x_4 \in X_4 \), with the total number of transitions equal to \( N_3 \). For each sampled transition, $(s, a, s')$, the set:\( \{(s', u, x_1)\} \) contains observed transitions starting from \( s' \) and ending in \( x_1 \in X_1 \), with a magnitude $N_1$. \( \{(s, u, x_2)\} \) contains observed transitions starting from \( s \) and ending in \( x_2 \in X_2 \), with a magnitude $N_2$. \( \{(x_1, u, x_5)\} \) contains observed transitions starting from \(x_1 \in X_1 \) and ending in \( x_5 \in X_5 \), with a magnitude $N_4$. \( \{(x_2, u, x_6)\} \) contains observed transitions starting from \( x_2 \in X_2 \) and ending in \( x_6 \in X_6 \), with a magnitude $N_5$. \( \{(x_3, u, x_6)\} \) contains observed transitions starting from \( x_3 \in X_3 \) and ending in \( x_6 \in X_6 \), with a magnitude $N_6$. \( \{(x_4, u, x_5)\} \) contains observed transitions starting from \( x_4 \in X_4 \) and ending in \( x_5 \in X_5 \), with a magnitude $N_7.$

Figure \ref{fig:unbiased_estimates11} shows results obtained for Experiment 1 (refer to Section \ref{sec:experiment1}), using sample-based approximations relying on unbiased estimates, instead of the double-batch sampling approach. The obtained results show a similar trend to the results obtained from the double sampling approach, with SRRD still yielding better performance compared to both EPIC and DARD. In general, unbiased estimates tend to be more accurate under transition sparsity, as reflected in Figure \ref{fig:unbiased_estimates22}, showing results for both the double-sampling and the unbiased estimate approaches. Overall, the unbiased estimate approach tends to outperform the double-batch approach, however the double-sampling approach catches up as the level of transition sparsity decreases. The SRRD approach still outperforms both EPIC and SRRD especially when coverage is low $(\le 20\%)$, and it also performs well under the double-batch sampling mechanism, highlighting its robustness in eliminating potential shaping.

\begin{figure}[H]
  \centering
  \includegraphics[width=1.0 \textwidth]{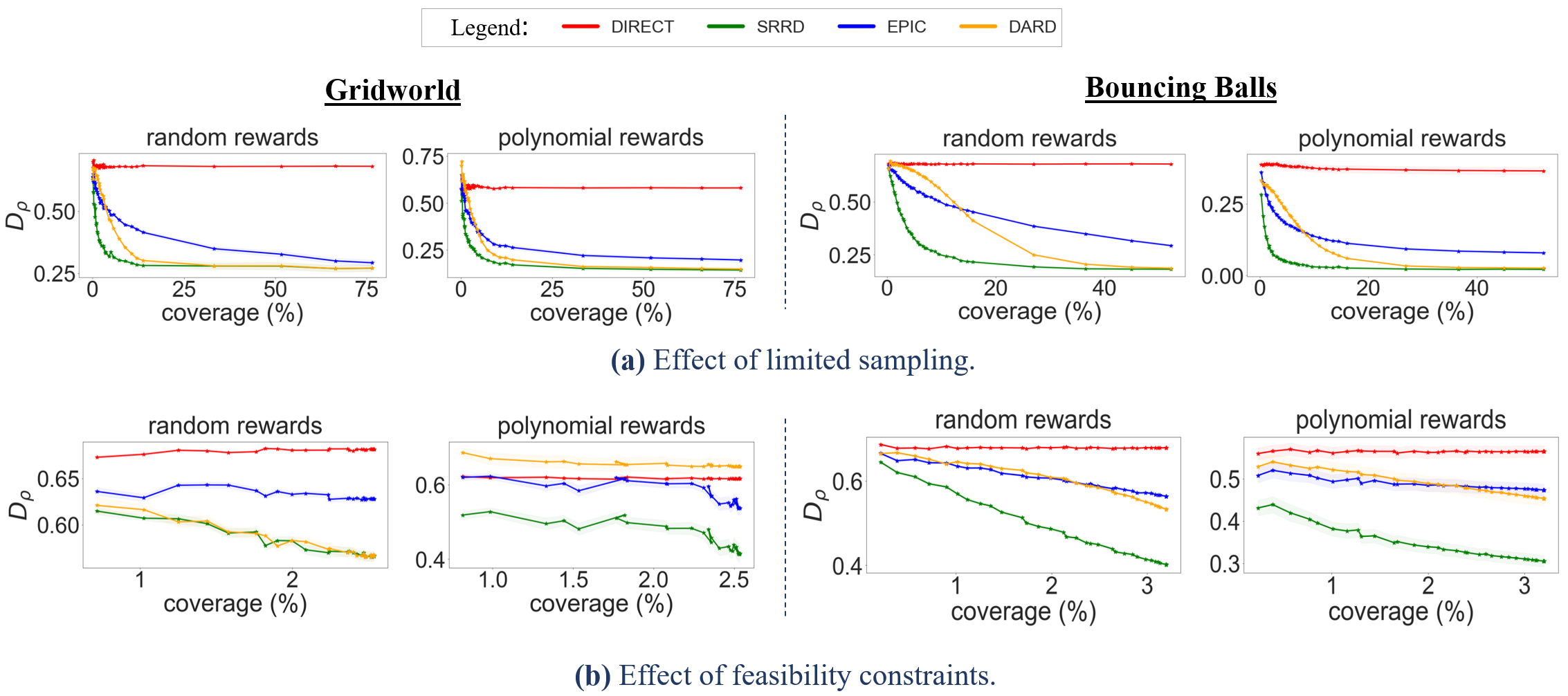}
  \caption{(Unbiased Estimate Approximations).
    The figure shows results for Experiment 1 (refer to Section \ref{sec:experiment1}) which is conducted using unbiased estimates as approximations for the SRRD, DARD and EPIC. As shown, results demonstrate a similar trend as the one obtained when using the double-batch sampling approach reliant on $B_V$ and $B_M$. The goal of the experiment is to compare the effectiveness of reward comparison pseudometrics at identifying the similarity between potentially shaped reward functions under two conditions: (a) limited sampling and (b) feasibility constraints.  A more accurate pseudometric yields a Pearson distance \( D_\rho \) close to 0, indicating a high degree of similarity between shaped reward functions, while a less accurate pseudometric results in \( D_\rho \) close to 1. In \textbf{(a)}, EPIC and DARD lag behind SRRD at low coverage due to limited sampling, but their performance gradually improves as coverage increases. In \textbf{(b)}, movement restrictions significantly reduce transition coverage, negatively impacting both EPIC and DARD.}
  \label{fig:unbiased_estimates11}
\end{figure}

\begin{figure}[H]
  \centering
  \includegraphics[width=1.0 \textwidth]{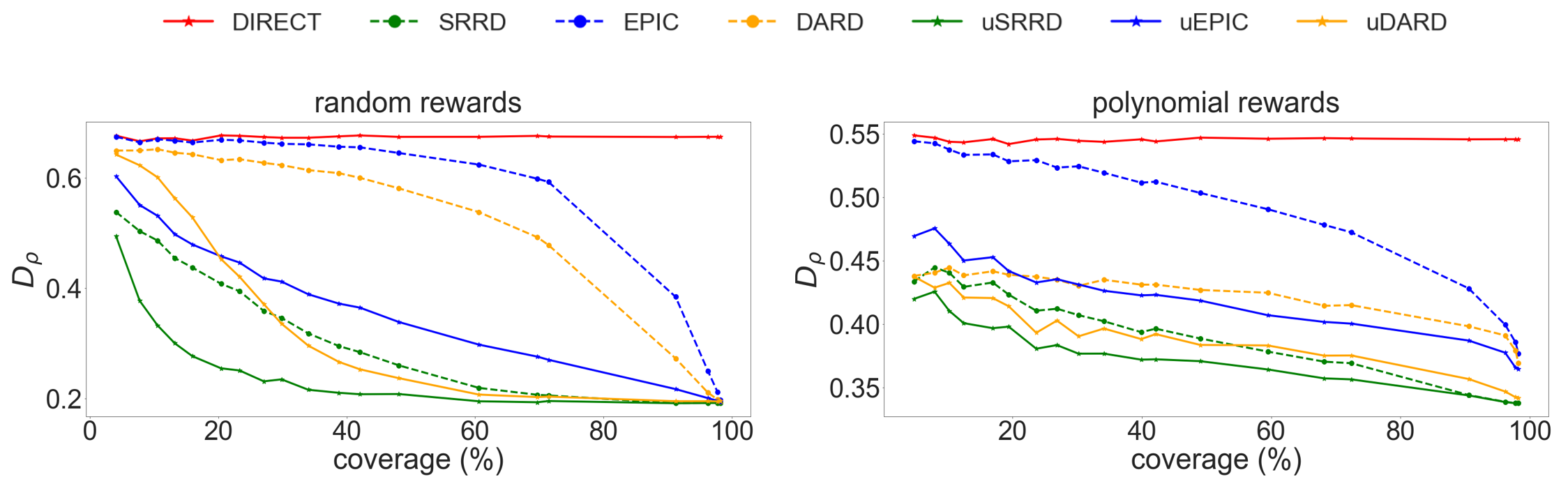}
  \caption{(Unbiased Estimates vs Double-batch Sampling).
    This figure presents the results for Experiment 1 under limited sampling for the Gridworld domain. It compares the performance of pseudometrics using the double-sampling approach (with batches $B_v$ and $B_N$) versus the unbiased estimate approach. For the unbiased estimates, the results are plotted in bold lines and for the double-sampling approach, they are plotted using broken lines. The initial 'u' (legend)  denotes the unbiased estimate methods, for example uSRRD. The objective is to compare the difference in performance between unbiased estimates and the double-batch sampling approach. As shown, the unbiased estimate approaches tend to outperform the double-batch approaches. In both sampling approaches (double-sampling or unbiased estimates), SRRD also outperforms both EPIC and DARD especially when coverage is low $(\le 20\%)$. The SRRD approach also performs well under the double-batch sampling approach, highlighting its robustness at eliminating potential shaping.}
  \label{fig:unbiased_estimates22}
\end{figure}

\subsection{Inferring Rewards for Unsampled Transitions via Regression}
\label{sec:regression_approx}

All the canonicalization methods discussed in this paper are susceptible to unsampled transitions. However, $C_{SRRD}$ is more resilient since it mostly relies on forward transitions rather than non-forward transitions (refer to Section \ref{sec:relative_errors}). In this study, instead of relying solely on sampled transitions during canonicalization, we explore the possibility of addressing transition sparsity by generalizing rewards from sampled transitions to unsampled transitions via regression. Using a $20 \times 20$ Bouncing Balls domain, we generate reward samples from a uniform policy and compute the SRRD, DARD, and EPIC distances. To vary coverage, we adjust the number of trajectories generated from policy rollouts, and compute coverage as the fraction of sampled transitions over the total number of transitions, $|S \times A \times S|$. In this experiment, reward samples are only defined for feasible transitions and unfeasible transitions have undefined reward values. However, during canonicalization, the reward values for the unsampled transitions (mostly unfeasible), are inferred via regression. Figure \ref{fig:regressing_rewards} shows the results obtained in this experiment:

\begin{figure}[htb]
  \centering
  \includegraphics[width=1\textwidth]{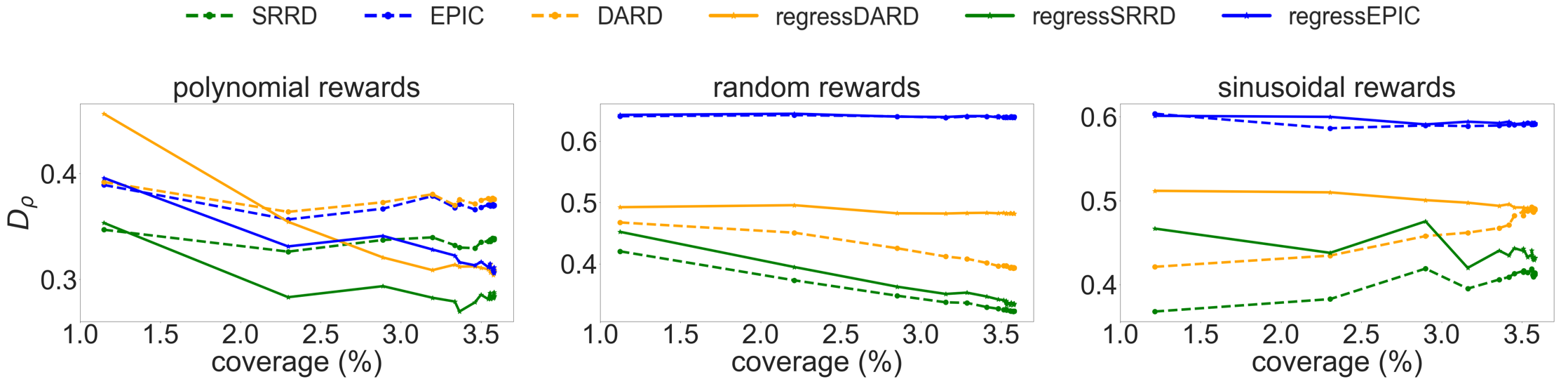}
  \caption{(Incorporating Regressed Rewards) When the relationship between rewards and state-action features is less complex, such as the case with polynomial rewards, learning the regression model is highly effective. Consequently, incorporating rewards that are inferred via regression into canonicalization can be beneficial, resulting in improved performance for \textit{regressDARD}, \textit{regressSRRD}, and \textit{regressEPIC}. However, when the relationship between rewards and state-action features is complex, the learned regression model might struggle to generalize well such as the case for random and sinusoidal rewards.}
  \label{fig:regressing_rewards} 
\end{figure}

As shown in Figure \ref{fig:regressing_rewards}, the success of incorporating regressed rewards depends on the nature of the reward function. When the reward function is derived as a simple combination of state-action features, learning the regression model is highly effective due to the less complex relationship between state-action features and reward values. This is likely the case for polynomial rewards, where pseudometrics that incorporate regressed rewards, generally outperform the original non-regression-based pseudometrics, especially as the coverage increases. However, for sinusoidal and random reward functions, the relationship between state-action features and rewards is much more complex, making it challenging to effectively learn a highly effective model under transition sparsity. Consequently, the rewards learned via regression may not generalize well, and the original sample-based approximations tend to outperform the regression-based approximations. In general, the regressed SRRD approximation outperformed all other regressed-based approximations. This superiority is attributed to the inherent nature of SRRD which relies more on forward transitions that are likely in-distribution with the reward samples' transition dynamics, than non-forward transitions that are more prone to being out-of-distribution with the dynamics of the reward samples. Therefore, even when incorporating regression, SRRD depends less on the regressed rewards compared to EPIC and DARD, resulting in more accurate predictions. EPIC ideally requires full coverage, hence, under transition sparsity, it relies more heavily on regressed rewards which makes it more susceptible to errors when the regression model cannot generalize well. In this experiment, we compared linear regression, decision trees, and neural networks, and chose linear regression since it yielded the best results in terms of accuracy. This could result from the lack of diverse data, as rewards are only defined for feasible transitions, which are a small subset compared to the total number of transitions that would be needed if no feasibility constraints were imposed.

\subsection{Sensitivity of SRRD to Sampling Policy}
\label{sec:sensitivity}
A crucial challenge in reward comparison tasks is the fact that reward samples partially represent the true reward function and might not fully capture the structure of the actual reward function. This section examines the performance of SRRD to variations in the policy used to extract reward samples. The experiment is conducted in a $15 \times 15$ Gridworld environment, where an agent's objective is to navigate from an initial state $(0, 0)$ to the target state $(14, 14)$. The agent selects actions from the set $A = \{\text{up}, \text{right}, \text{down}, \text{left}\}$. The reward function is derived from predefined expert behaviors using Adversarial Inverse Reinforcement Learning (AIRL). Figure \ref{fig:biased1} shows a reward function $R_{\text{diagonal}}$, computed for an agent with a diagonally oriented policy where for each state $s$: $\pi_{\textit{diagonal}}(\text{up}|s) = 0.1$, $\pi_{\textit{diagonal}}(\text{right}|s) = 0.4$, $\pi_{\textit{diagonal}}(\text{down}|s) = 0.4$ and $\pi_{\textit{diagonal}}(\text{left}|s) = 0.1$. The reward for each transition is represented by the triangular directional arrows, for example, the reward from state $(0, 0)$ to state $(0, 1)$ is $2.5$. The reward function is defined exclusively for feasible transitions, and the intensity of rewards is depicted using three three colors: red for high rewards $(>5)$, blue for moderate rewards $(2-5)$, and light brown for low rewards $(<2)$. In this study, we compare the similarity between two shaped reward samples derived from $R_{\text{diagonal}}$ using a specified sampling policy. We then analyze the variation in $D_{\text{SRRD}}$ based on different sampling policies. The experiment is repeated over $100$ trials.

\begin{figure}[H]
  \centering
  \includegraphics[width=1.0 \textwidth]{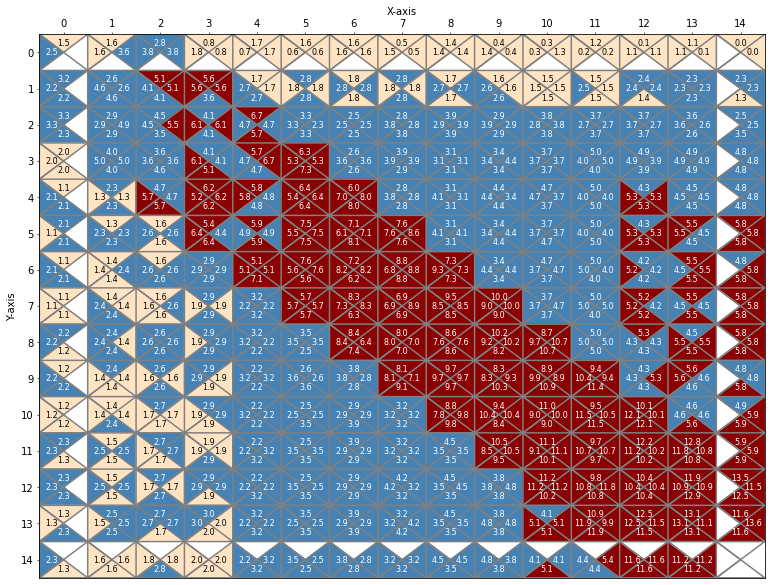}
  \caption{($R_{\text{diagonal}}$). The reward function is generated via AIRL from an agent that executes the policy $\pi_{diagnoal}$, which favors movement along the grid's main diagonal. High rewards are highlighted in red, moderate rewards in blue, and low rewards in light brown.}
  \label{fig:biased1}
\end{figure}

\begin{table}[h]
\footnotesize
\centering
\caption{Sensitivity of SRRD to different reward sampling policies. The coverage is computed as the fraction of sampled transitions, over the total number of feasible transitions.}
\setlength{\tabcolsep}{10pt} 
\renewcommand{\arraystretch}{1.5} 
\resizebox{\textwidth}{!}{%
\begin{tabular}{|c|cc|cc|cc|cc|}
\hline
\multirow{2}{*}{} & \multicolumn{8}{c|}{Reward Sampling Policy} \\ \cline{2-9} 
 & \multicolumn{2}{c|}{\(\pi_{\text{uniform}}\)} & \multicolumn{2}{c|}{\(\pi_{\text{diagonal}}\)} & \multicolumn{2}{c|}{\(\pi_{\text{top}}\)} & \multicolumn{2}{c|}{\(\pi_{\text{left}}\)} \\ \hline
\# of Trajectories & 50 & 100 & 50 & 100 & 50 & 100 & 50 & 100 \\ \hline
Coverage (\%) & 33.2 & 43.4 & 56.9 & 74.6 & 11.91 & 14.3 & 10.86 & 13.03 \\ \hline
\rowcolor[HTML]{D3D3D3} $D_{\text{SRRD}}$ for $R_{\text{diagonal}}$ & 0.62 & 0.60 & 0.57 & 0.56 & 0.64 & 0.63  & 0.62 & 0.62 \\ \hline
\end{tabular}%
}
\label{tb:variation_sample}
\end{table}

Table \ref{tb:variation_sample} summarizes results for testing SRRD's sensitivity to different reward sampling policies: $\pi_{\text{diagonal}}$, $\pi_{\text{top}}$ -biased toward the upper segment of the grid; $\pi_{\text{uniform}}$ - a uniform policy across all four directions; and $\pi_{\text{left}}$ - biased towards the left grid segment. As shown, $D_{\text{SRRD}}$ varies based on the reward sampling policy. In $R_{\text{diagonal}}$ when all the feasible transitions are used in reward comparisons, $D_{\text{SRRD}} \approx 0.58$, which is the benchmark value to test the policy variations. This value closely matches the distance values obtained by $\pi_{\text{uniform}}$ and $\pi_{\text{diagonal}}$, which both sample transitions that closely match the actual distribution of the benchmark rewards, $R_{\text{diagonal}}$. However, policies such as $\pi_{\text{top}}$ and $\pi_{\text{left}}$, generally achieve distances that deviate from the benchmark score, since the reward distribution in the reward samples might be less representative of the structure of $R_{\text{diagonal}}$. Overall, $D_{\text{SRRD}}$ varies between: $[0.56 - 0.64]$, while the benchmark distance from $R_{\text{diagonal}}$ is $0.58$. We also observe that increasing the number of sampled trajectories from $50$ to $100$ generally leads to lower SRRD distances. However, this effect is less pronounced in this experiment because the reward function is defined only for feasible transitions, limiting the variability and coverage inherently. In conclusion, SRRD is sensitive to variations in the policies that generate the reward samples, since canonicalization relies on the distribution of sampled transitions, which might not be representative of the actual transition distribution under full coverage. In sampling rewards, it is desirable to ensure that the sample has high coverage and broad support over the distribution of the true reward function. We also performed this study with EPIC and DARD, and they both yield higher distances within the range $[0.63-0.7]$.

\subsection{Environments with Infinite or Continuous State and Action Spaces}

As previously mentioned in Section \ref{sec:Section3} and \ref{sec:sard_section}, computing the exact reward comparison distances is only feasible in small environments with finite discrete states and actions, where all the possible transitions between states and actions can be enumerated. For complex environments with infinite or continuous states and actions, computing the exact values for $D_{\text{EPIC}}$, $D_{\text{DARD}}$ and $D_{\text{SRRD}}$ becomes impractical, hence, sample-based approximation methods have been developed. These approximations take transition inputs composed of discrete states and actions, and when applied to continuous environments, it is essential to discretize the state and action observations from the continuous signals as demonstrated in prior works \citep{gleave-et-al:differences, wulfe-et-al:dard}. In our experiments, these approximations are necessary in environments such as Robomimic, MIMIC-IV, Drone Combat Scenario and StarCraft II. For StarCraft II for example, while not necessarily continuous in a strict sense, the environment operates in real-time with partial observability, multiagent decision-making, durative actions, and asynchronous, parallel action execution. The game updates at approximately $16$ times per second, and it effectively has infinite states and actions. These fluid, real-time interactions align it more with continuous decision-making scenarios, and to manage the complexity, we perform feature preprocessing to discretize and cluster state and action observations before applying the sample-based approximations. 

An intriguing area for future research is the integration of function approximation to generalize reward canonicalization to reward functions represented as neural networks. While function approximation might not be necessary for straightforward reward comparisons---where the goal is to retrieve a similarity distance---they become essential in applications requiring the canonicalization of the entire reward functions, such as standardizing rewards during IRL for example. Our initial proposed approach involves training a neural network to predict canonical rewards based on batches of transition observations. In the training process, the canonicalized rewards for the batch can be approximated via sample-based methods, and then the neural network aims to predict the canonicalized rewards from the input transition batch. The network iteratively learns to predict the canonicalized rewards by minimizing the difference between its predictions and the sample-based approximations.

\section{Experimental Details}

\subsection{Experiment 1: Reward Functions}
\label{ap:Appendix_C.1.1}

Extrinsic reward functions are manually defined using a combination of state and action features. For the Drone Combat, Montezuma's Revenge, and StarCraft II domains, we use the default game engine scores as the reward function, and for Robomimic, rewards are based on task completion (see Appendix \ref{ap:Appendix_C.1.1}). For the Gridworld and Bouncing Balls domains, in each reward function, the reward values are derived from the decomposition of state and action features, where, $(s_{f1}, ..., s_{fn})$ is from the starting state $s$; $(a_{f1}, ..., a_{fm})$ is from the action $a$; and $(s'_{f1}, ..., s'_{fn})$ is from the subsequent state $s'$. For each unique transition, using randomly generated constants: $\{u_1,..., u_n\}$ for incoming state features; $\{w_1,..., w_m\}$ for action features; $\{v_1,...v_n\}$ for subsequent state features, we create reward models as follows:

\begin{itemize}
    \item Linear: $$R(s, a, s') = u_1 s_{f1} + ...+u_n s_{fn} + w_1 a_{f1} + ...+ w_m a_{fm} + v_1 s'_{f1} + ...+v_n s'_{fn}, $$ 

    \item Polynomial:  
     $$R(s, a, s') = u_1 s^{\alpha}_{f1} + ...+u_n s^{\alpha}_{fn} + w_1 a^{\alpha}_{f1} + ...+ w_m a^{\alpha}_{fm} + v_1 s'^{\alpha}_{f1} + ...+v_n s'^{\alpha}_{fn}, $$ 

    \text{\quad \quad} where, $\alpha$ is randomly generated from $1-10$, denoting the degree of the polynomial. 

    \item Sinusoidal:
    \begin{equation*}
        \begin{split}
            R(s, a, s') = & u_1 sin(s_{f1}) + \cdots
            +u_n sin(s_{fn}) + w_1 sin(a_{f1}) + \cdots
            + w_m sin(a_{fm}) 
             \\ &
            + v_1 sin(s'_{f1}) + \cdots +v_n sin(s'_{fn})
        \end{split}
    \end{equation*}
    
    \item Random 
    $$R(s, a, s') = \beta, $$
    \text{\quad \quad} where, $\beta$ is a randomly generated reward for each given transition. 
    
\end{itemize}
\noindent The same relationships are used to model potential functions, where: 
$\phi(s) = f(s_{f1}, .., s_{fn})$, 
and $f$ is the relationship drawn from the set: \{polynomial, sinusoidal, linear, random\}. For StarCraft II, Drone Combat, and Montezuma's revenge, we used the default game score provided by the game engine as the reward function; and for Robomimic, sparse rewards based on task completion are used. For the StarCraft II domain, this score focuses on the composition of unit and resource features. Since the Drone Combat environment is originally designed for a predator-prey domain, we adapt the score \url{https://github.com/koulanurag/ma-gym} to essentially work for the Drone Combat scene (i.e instead of a predator being rewarded for eating some prey, the reward is now an ally attacking an enemy). 

\subsection{Experiment 1: Parameters}
\label{ap:Appendix_C.1.4}
\noindent A uniform policy in the Gridworld domain randomly selects one of the four actions, \{north, east, south, west\}, at each timestep. For the Bouncing Balls domain, it randomly selects an action from the set: 
\{north, north-east, east, east-south, south, south-west, west, west-north, north\}. The parameter $\epsilon$ dictates the ratio of times in which random transitions (instead of uniform policy) are executed. Table \ref{tb:low_coverage} and Table \ref{tb:feasibility} shows the experimental parameters used in Experiment 1 (Algorithm 1). 

\begin{table}[h!]
\centering
\caption{(Low Coverage): Parameters used to test the variation of coverage for the Gridworld and the Bouncing Balls domain.}
\small
\renewcommand{\arraystretch}{1.2} 
\begin{tabular}{|p{3cm}|p{11cm}|} 
\hline
\textbf{Parameter} & \textbf{Values} \\
\hline
Rollout Counts, $T$ & $[1, 2, 3, 4, 5, 6, 7, 8, 9, 10, 15, 20, 30, 40, 50, 75, 100, 200, 300, 400, 500, 1000, 2000]$ \\
\hline
Epochs, $E$ & 200 \\
\hline
Policy, $\pi$ & uniform, $\epsilon = 0.1$ \\
\hline
Discount, $\gamma$ & 0.7 \\
\hline
Dimensions & 20 $\times$ 20 \\
\hline
\end{tabular}
\label{tb:low_coverage}
\end{table}

\begin{table}[h!]
\centering
\caption{(Feasibility Constraints): Parameters used to test the variation of coverage in the presence of movement restrictions, $\epsilon = 0$, for the Gridworld and Bouncing Balls domain.}
\small
\renewcommand{\arraystretch}{1.2} 
\begin{tabular}{|p{3cm}|p{11cm}|} 
\hline
\textbf{Parameter} & \textbf{Values} \\
\hline
Rollout Counts, $T$ & $[1, 2, 3, 4, 5, 6, 7, 8, 9, 10, 15, 20, 30, 40, 50, 75, 100, 200, 300, 400, 500, 1000, 2000]$ \\
\hline
Epochs, $E$ & 200 \\
\hline
Policy, $\pi$ & uniform, $\epsilon = 0$ \\
\hline
Discount, $\gamma$ & 0.7 \\
\hline
Dimensions & 20 $\times$ 20 \\
\hline
\end{tabular}
\label{tb:feasibility}
\end{table}

\newpage
\subsection{Transition Sparsity: Additional Results }
\label{ap:Appendix_C.1.3}
\paragraph{\textbf{Parameter Variation}} 

In both the Gridworld and the Bouncing Balls domains, we did not see much difference in the structure of results between the $10 \times 10$ domain and the $20 \times 20$ domains. Results were fairly consistent in that SRRD tends to outperform DARD and EPIC, and feasibility constraints tend to limit coverage significantly. 

\begin{figure}[H]
    \centering
    \includegraphics[width=1.0\linewidth]{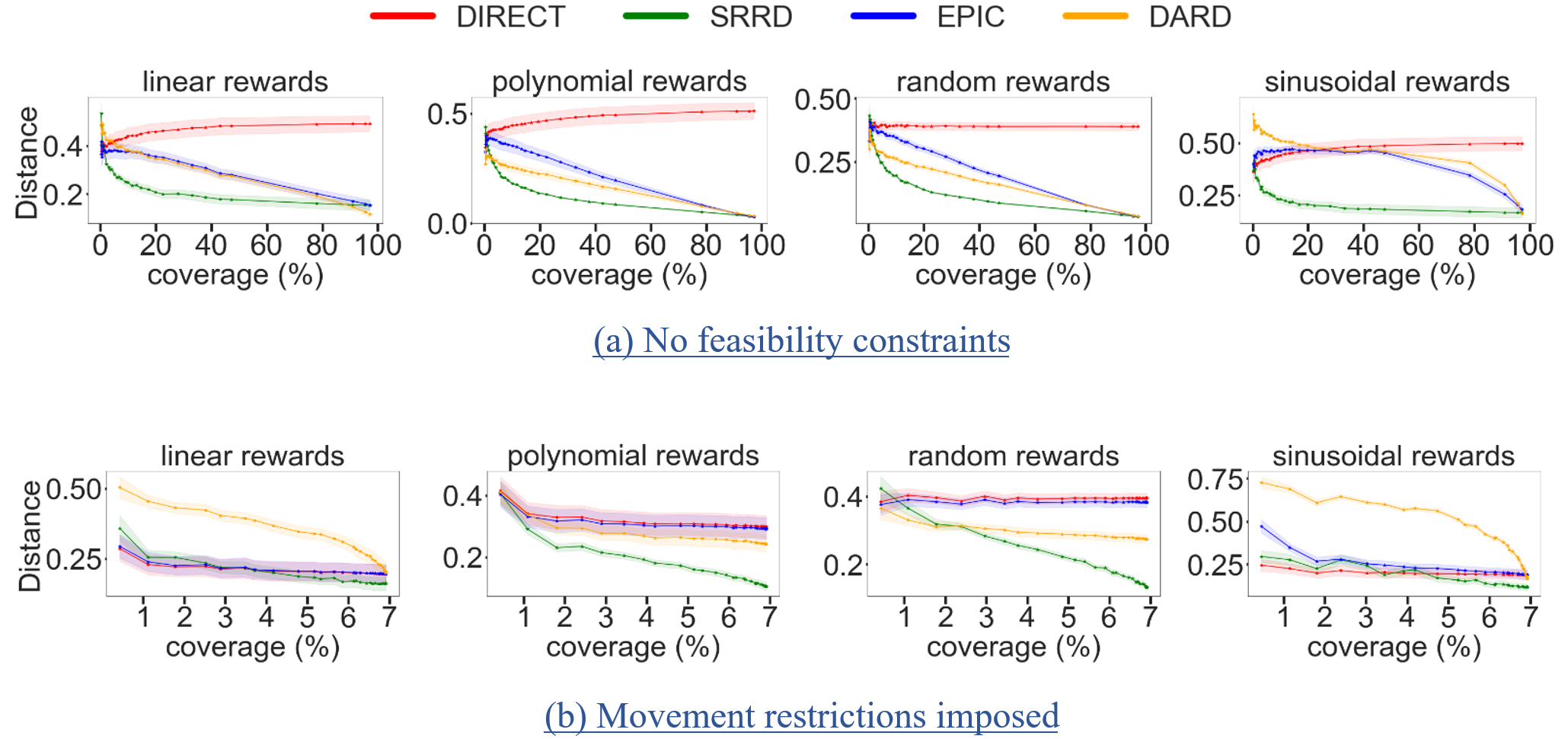}
    \caption{$10 \times 10$ Gridworld: Variation of reward relationships}
    \label{fig:enter-label}
\end{figure}

\begin{figure}[H]
    \centering
    \includegraphics[width=1.0\linewidth]{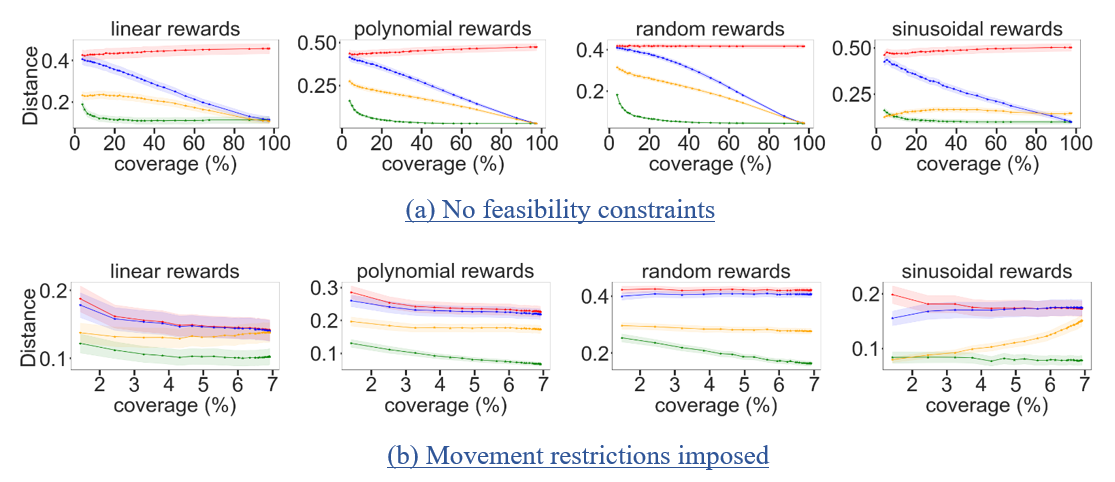}
    \caption{$20 \times 20$ Gridworld: Variation of reward relationships}
\end{figure}

\begin{figure}[H]
    \centering
    \includegraphics[width=1.0\linewidth]{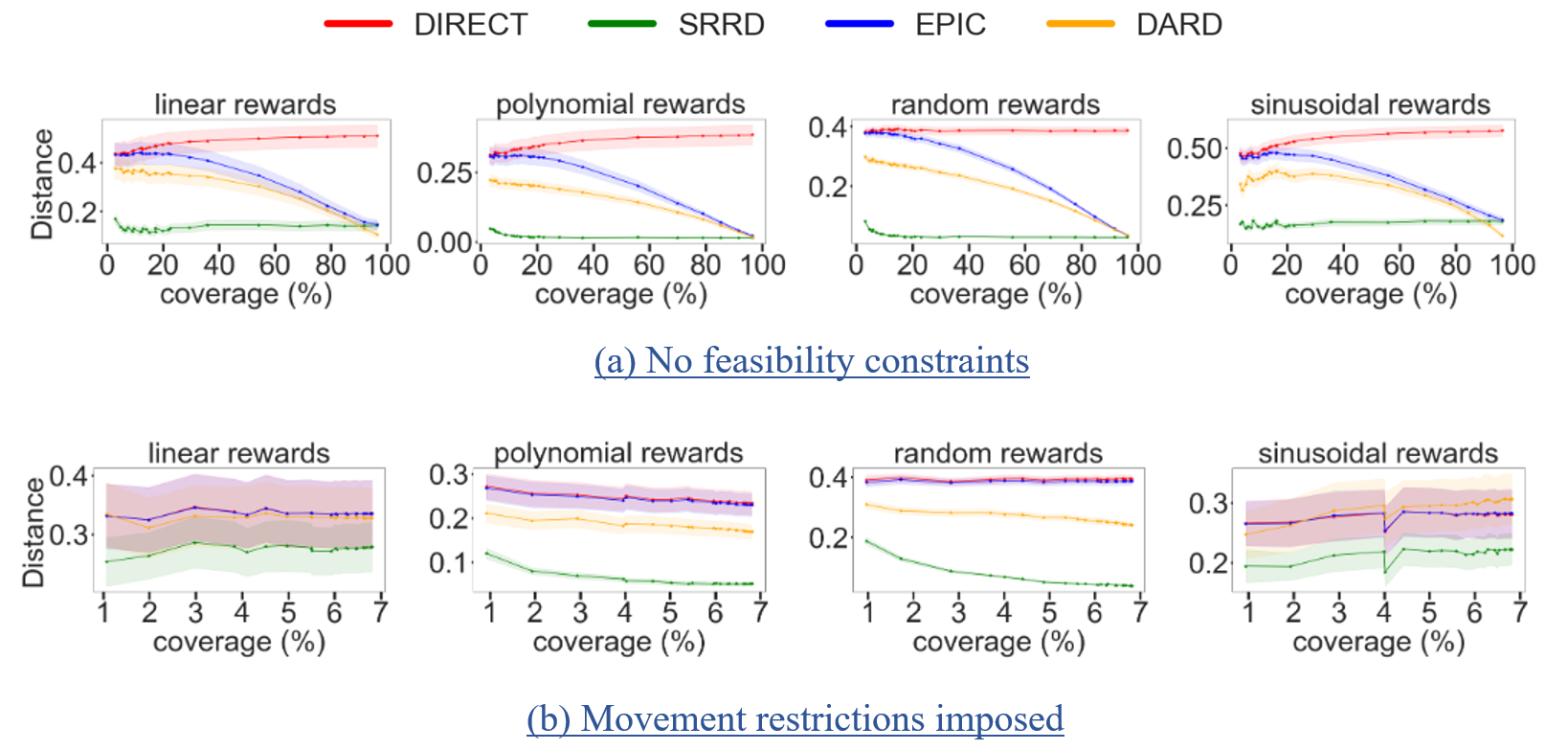}
    \caption{$10 \times 10$ Bouncing Balls: Variation of reward relationships}
\end{figure}

\begin{figure*}[!ht]
    \centering
    \includegraphics[width=1.0\linewidth]{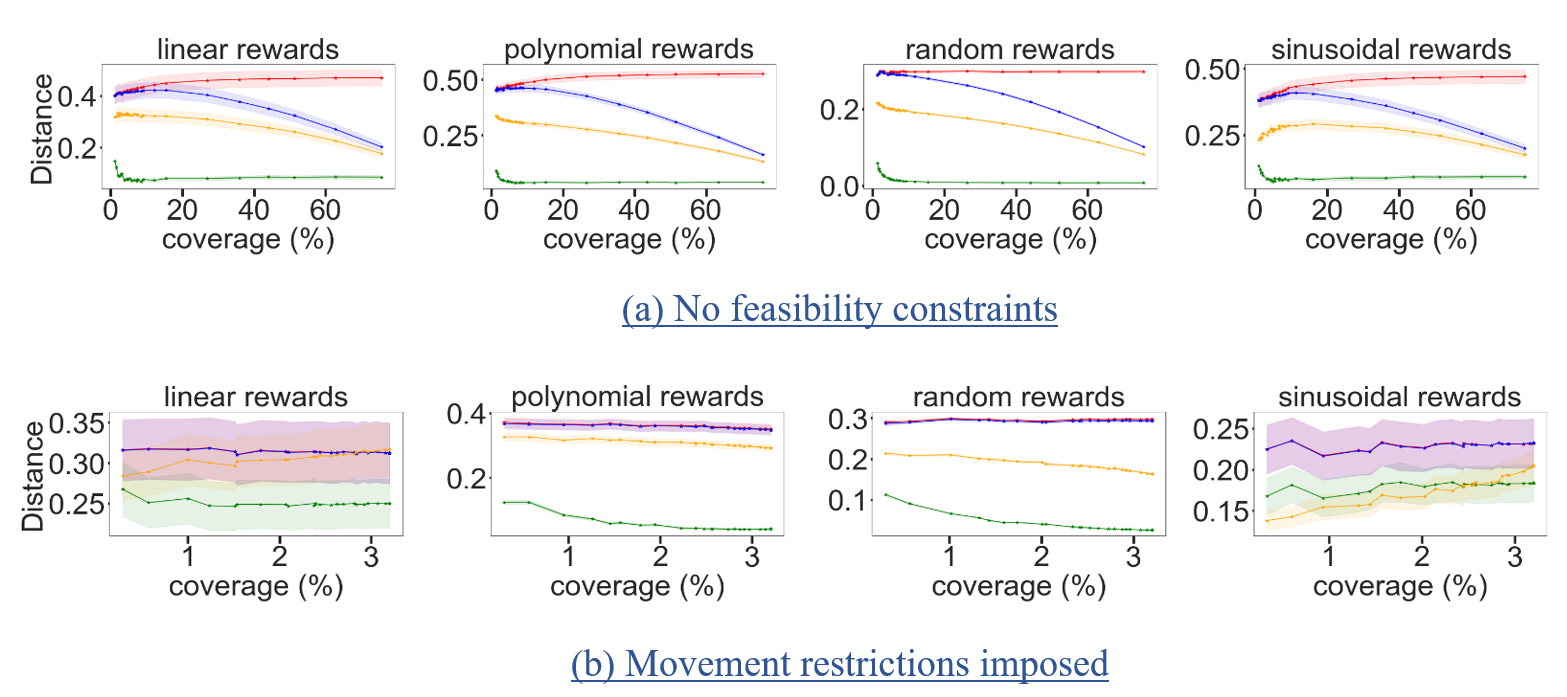}
    \caption{$20 \times 20$ Bouncing Balls: Variation of reward relationships}
\end{figure*}

\newpage
\subsection{Experiment 2: Reward Classification - Testbeds}
\label{ap:Appendix_C.2.1}
\paragraph{Gridworld:} The Gridworld domain simulates agent movement from a given initial state to a specified terminal state under a static policy. Each state is defined by an $(x, y)$ coordinate where $0 \le x < N$, and $ 0 \le y < M$ implying $|\mathcal{S}| = NM$. For Experiment 2, the action space only consists of four cardinal directions \{north, east, south, west\}, and to define classes, we use static policies based on the action-selection distribution (out of 100) per state. Table \ref{tb:appendix_gridworld} shows the Gridworld parameters used for Experiment 2.

\paragraph{Bouncing Balls:}
The Bouncing Balls domain, adapted from \cite{wulfe-et-al:dard}, simulates a ball's motion from a starting state to a target state while avoiding randomly mobile obstacles. These obstacles add complexity to the environment since the ball might need to change its strategy to avoid obstacles (at a distance, $d = 3$). Each state is defined by the tuple  $(x, y, d)$, where $(x, y)$ indicates the ball's current location, and $d$ indicates the ball's Euclidean distance to the nearest obstacle, such that: $0 \le x < N$, $0 \le y < M$, and $d\le \max(M, N)$. The action space includes eight directions (cardinals and ordinals), with the stochastic parameter $\epsilon$ for choosing random transitions. Table \ref{tb:appendix_bb} describes the parameters for Experiment 2.

\paragraph{Drone Combat:} The Drone Combat domain is derived from the multi-agent gym environment, which simulates a predator-prey interaction \citep{Anurag:2019}. We adapt this testbed to simulate a battle between two drone swarms; a blue swarm denoting the ally team; and a red swarm denoting a default AI enemy. The goal is for the blue ally team to defeat the default AI team. This testbed offers discrete actions and states within a fully observable environment, while also offering flexibility for unit creation, and obstacle placement. However, the number of states and actions is still high such that we did not use the testbed in Experiment 1. Each unit (blue and red squares) possesses a distinct set of parameters and possible actions. Each team consists of drones and ships, and the team that wins either destroys the entire drones of the opponent or its ship. This ship adds complexity to the decision-making process of the teams which need to engage with the enemy, as well as safeguard their ships. Each drone is defined by the following attributes: visibility range (VR) - the range a unit can see from its current position (partial observability);  health (H) - the number of firings a unit can sustain; movement range(MR) - the maximum distance that a unit can move to; and shot strength(SS) - the probability of a shot hitting its target. These attributes are drawn from the set:
$$U = \{(\text{VR}, \text{H}, \text{MR}, \text{SS}) \mid \text{VR} \in \{1, 3, 5\}, \text{H} \in \{5, 10, 15\}, \text{MR} \in \{1, 2, 3\}, \text{SS} \in \{0.05, 0.1\}\}$$
Table \ref{tb:appendix_dc} summarizes the parameters used for Experiment 2. 

\paragraph{Robomimic} Robomimic is a framework designed for imitation learning and reinforcement learning in robotic manipulation tasks \citep{robomimic2021}. It provides standardized datasets, and environments to facilitate reproducibility in learning from demonstrations. In our experiments, we focus on collected offline datasets, which include human and simulated demonstrations on the following robotic tasks: \textit{Lift} - the robot needs to pick up a cube and lift it off the table; \textit{Can} - the task requires lifting and placing a soda can upright; \textit{Square} - the robot aligns and places a square peg into a hole; \textit{Tool Hang} - requires a robot to use a tool to hang it on a rack; and \textit{Nut Assembly} - a multi-step task where the robot must pick up a nut and screw it onto a bolt. The goal in Experiment 2 is to distinguish between a single human expert, machine generated, and mixed human agent behaviors, using a k-NN classifier based on reward pseudometrics. For each agent type, we collect 200 trajectories, and utilize the extrinsic sparse rewards based on task completion, as well as IRL computed rewards. The details about the agent types, and data collection are available at: \url{https://robomimic.github.io/docs/datasets/robomimic_v0.1.html}.

\paragraph{Montezuma's Revenge:} Montezuma's Revenge is a classic Atari 2600 game generally used as a benchmark in RL research due to its challenging environment. The game involves controlling an explorer agent, in navigating a series of interconnected rooms in a temple filled with traps, ladders, enemies and collectible items. The game is challenging for RL since its characterized by sparse rewards, complex exploration, long-term dependencies, and high penalty for mistakes. In our experiments, we used a collection of Atari games from human demonstrations \citep{atari_grand_challenge}. We subdivide these games into three subgroups based on game scores: experts ($>3000$), moderate ($1000-3000$) and novice($<1000$); and sample $200$ trajectories for each agent type. The task objective is to determine the agents' expertise using the reward pseudometrics to classify the agents based on samples of their demonstrated behavior.

\paragraph{StarCraft II (SC2):} The SC2 domain is a strategy game created by Blizzard Entertainment that features real-time actions on a complex battlefield environment. The game involves planning, strategy, and quick decision-making to control a multi-agent ally team, aiming to defeat a default AI team in a competitive, challenging, and time-sensitive environment. SC2 serves not only as entertainment but also as a platform for professional player competitions. Due to its complexity and the availability of commonly used interactive Python libraries, the SC2 game is widely employed in Reinforcement Learning, serving as a testbed for multi-agent research. The goal of the ally team is to gather resources, and build attacking units that are used to execute a strategic mission to defeat the AI enemy; within an uncharted map, that gets revealed after extensive exploration (introduces partial observability). The sheer size of the map and the multitude of possible actions for each type of unit, as well as the number of units, contribute to the enormity of the action and state spaces. During combat, each ally unit, moves in a decentralized manner and attacks an enemy unit using an assigned cooperative strategy from the set: $C = \{c_1, c_2, c_3, c_4\}$; where $c_1$ - move towards ally closest to enemy's start base; $c_2$ - move towards a random enemy unit; $c_3$ - move towards ally closest to an enemy unit; and $c_4$ - move towards the map's center.  We focus on attack-oriented ally units to reduce the state space. Non-attacking units such as pylons are treated as part of the environment. The game state records the number of ally units ($num_{ally}$), and the total number of opponent units ($num_{enemy}$); as well as the central coordinates of the ally and the enemy. The action records the number of ally units attacking the enemy at an instance. Table \ref{tb:appendix_sc2} describes the StarCraft II parameters used for Experiment 2. 

\paragraph{MIMIC IV:} MIMIC-IV is a large, publicly available database of de-identified patients admitted to an emergency department (ED) or intensive care unit(ICU) at the Beth Israel Deaconess Medical Center in Boston, MA \citep{johnson2023mimic}. The dataset contains records for over $65,000$ patients admitted to an ICU, and over $200,000$ patients admitted to the ED. In our analysis, we model each patient's hospital's visit as a Markov Decision Process, where, each state is a list of assigned diagnoses (a patient may have multiple concurrent diagnoses), and the actions are the procedures and prescriptions provided. For manual rewards, we assign $R = 5*\gamma^t$ when a patient survives, where $\gamma = 0.9$ is a discount factor (to penalize frequent visits) and $t$ is the trial number of the patient's visit. A manual reward of $-5$ is assigned when the patient dies. We also compute AIRL, Maxent and PTIRL rewards. From the data, because both the states (diagnoses) and actions (prescriptions and procedures) are expressed in natural language, we transform them into numerical vectors using frequency counts of medical keywords (without stop words) from the diagnoses or prescription reports. We then apply standard scaling (mean and variance), and perform PCA to extract numerical features. For the behavioral classification experiment, we manually group patients into 5 categories based on keywords from the first  diagnosis record of the patients. These categories are: \textit{diabetes} - has diabetes oriented diagnoses, \textit{kidney} - has kidney oriented diagnoses, \textit{limb} - has limb or skeletal oriented diagnoses, \textit{respiratory} - has respiratory diagnoses, and \textit{substance} - has substance abuse or mental health diagnoses. Our task is then to classify different patient trajectories (sequences of states and actions), based on their reward functions, using the reward pseudometrics. It is important to note that some diseases overlap, hence we used a majority vote to assign the overall category. Since this data is inherently sparse, our final dataset had $409$ patients distributed such that we have: $[70, 75, 42, 35, 187]$ patients, corresponding to each disease class, respectively. Each trajectory has a length in the range $30$ to $250$. Most of the original MIMIC-IV data was not useful for our analysis since patients rarely visit an ICU facility more than once. In the experiments we did not consider age, race or gender in the analysis.

\subsection{Experiment 2: Parameters}
\label{ap:Appendix_C.2.2}
The optimal values for $\gamma$ (discount factor) and $k$ (the neighborhood size) are not fixed for each independent trial. Therefore, for hyperparameter selection, we employ a grid search over the set defined by:
\[
\{(\gamma, k) : \gamma \in \{0, 0.1, \ldots, 1\}, k \in \{10, 20, \ldots, 100\}\}
\]

The agent classes shown describe the policy that an agent takes in each given state. For example, in the Gridworld domain, an agent with a policy [25, 25, 25, 25], randomly selects the cardinal direction to take from a uniform distribution. For the Drone Combat and StarCraft II domains, the agent behaves based on the combination of the defined attributes.

\begin{table*}[h!]
\caption{Bouncing Balls Parameters.}
\centering
\footnotesize
\renewcommand{\arraystretch}{1.2} 
\begin{tabular}{|p{5cm}|p{10cm}|} 
\hline
\textbf{Parameter} & \textbf{Values} \\
\hline
Agent policies (10 classes) & $[[12, 12, 12, 12, 13, 13, 13, 13], [5, 5, 25, 25, 25, 5, 5, 5],$
        $[25, 25, 25, 5, 5, 5, 5, 5], [5, 5, 5, 5, 5, 25, 25, 25],
        [5, 5, 65, 5, 5, 5, 5, 5],$ $[5, 5, 5, 65, 5, 5, 5, 5],
        [5, 5, 5, 5, 65, 5, 5, 5], [5, 25, 5, 25, 5, 25, 5, 5],$
        $[20, 5, 20, 5, 20, 5, 20, 5], [5, 20, 5, 20, 5, 20, 5, 20]]$ \\
\hline
Trajectory sets per policy & 100 \\
\hline
Number of obstacles & 5 \\
\hline
Distance to obstacle (Manhattan) & 3 \\
\hline
Number of comparison trials & 200 \\
\hline 
Actions  & move: \{north, north-east, east, east-south, south, south-west, west, west-north\} \\
\hline
State Dimensions & $20 \times 20 \times 3$ \quad  \\
\hline
\end{tabular}
\label{tb:appendix_bb}
\end{table*}

\begin{table*}[h!]
\caption{Gridworld Parameters.}
\centering
\footnotesize
\renewcommand{\arraystretch}{1.2} 
\begin{tabular}{|p{5cm}|p{10cm}|} 
\hline
\textbf{Parameter} & \textbf{Values} \\
\hline
Agent policies (10 classes), & [[25, 25, 25, 25], [5, 5, 5, 85], [85, 5, 5, 5],
          [5, 85, 5, 5], [5, 5, 85, 5], [5, 15, 30, 55],
          [55, 30, 15, 5], [15, 5, 55, 30], [5, 55, 30, 15],
          [15, 30, 5, 55]] \\
\hline
Trajectory sets per policy, & 100 \\
\hline
Number of comparison trials, & 200 \\
\hline 
Actions,  & move: \{north, west, south, east\} \\
\hline
State Dimensions & 20 $\times$ 20 \quad\\
\hline
\end{tabular}
\label{tb:appendix_gridworld}
\end{table*}

\begin{table*}[h!]
\caption{Drone Combat Parameters.}
\centering
\small
\renewcommand{\arraystretch}{1.2} 
\begin{tabular}{|p{5cm}|p{10cm}|} 
\hline
\textbf{Parameter} & \textbf{Values} \\
\hline
Agent policies & $10$ classes, each consisting of $5$ agents. Each agent $x$ has attributes randomly drawn from the set: \\
& 
$U = \{(VR, H, MR, SS) \mid VR \in \{1, 3, 5\}, H \in \{5, 10, 15\}, MR \in \{1, 2, 3\}, SS \in \{0.05, 0.1\}\}$
\\
\hline
Trajectory sets per policy & 100 \\
\hline
Number of agents per team & 11 (1 ship, 10 drones) \\
\hline
Number of comparison trials & 200 \\
\hline 
Actions, $\alpha$ is the movement range, $1 \le \alpha \le 3$ & $\{ \{\text{left}^\alpha, \text{up}^\alpha, \text{right}^\alpha, \text{down}^\alpha \}, \text{attack}\}$ \\
\hline
Dimensions & 40 $\times$ 25, with obstacles occupying $\approx 30\%$ of the area \\
\hline
\end{tabular}
\label{tb:appendix_dc}
\end{table*}

\begin{table}[H]
\caption{StarCraft II Parameters.}
\centering
\footnotesize
\renewcommand{\arraystretch}{1.2} 
\begin{tabular}{|p{5cm}|p{10cm}|} 
\hline
\textbf{Parameter} & \textbf{Values} \\
\hline
Agent policies generated based on resources and strategy & 10 classes, agents attributes randomly chosen from: $$
U = \{(c, \text{u}) \mid c \in \{c_1, c_2, c_3, c_4\}, \text{u} \in \{\text{adept}, \text{voidray}, \text{phoenix}, \text{stalker}\}\}
.$$ \\
\hline
Trajectory sets per policy & 100 \\
\hline
Comparison trials & 200 \\
\hline 
Actions & Number of attacking units per unit time \\
\hline
State representation & $(\text{num}_{\text{ally}}, \text{num}_{\text{enemy}}, (x_{ally}, y_{ally}), (x_{enemy}, y_{\text{enemy}}))$ \\
\hline
\end{tabular}
\label{tb:appendix_sc2}
\end{table}

\subsection{Experiment 2: Inverse Reinforcement Learning (IRL)}
\label{ap:Appendix_C.2.3}

\begin{table}[h!]
\centering
\small
\caption{Reward Learning Parameters Across Domains}
\begin{tabular}{p{5cm} p{5cm} p{5cm}}
\toprule
\multicolumn{1}{c}{\textbf{AIRL}} &
\multicolumn{1}{c}{\textbf{MAXENT}} &
\multicolumn{1}{c}{\textbf{PTIRL}} \\ \midrule
Trajectories/run: 5 &
Trajectories/run: 5 &
Target Trajectories/run: 5 \\
RL Algorithm: PPO &
RL Algorithm: PPO &
Non-Target Trajectories/run: 10 \\
Discount (\(\gamma\)): 0.9 &
Discount (\(\gamma\)): 0.9 &
Max Reward Cap: +100 \\
\begin{tabular}[l]{@{}l@{}}Reward Network - MLP \\ \quad Hidden Size: {[}256, 128{]}\end{tabular} &
\begin{tabular}[l]{@{}l@{}}Reward Network - MLP \\\quad Hidden Size: {[}256, 128{]}\end{tabular} &
Min Reward Cap: -100 \\
Learning Rate: \(10^{-4}\) &
Learning Rate: \(10^{-4}\) &
LP Solver: Cplex \\
Time Steps: \(10^{5}\) &
\begin{tabular}[l]{@{}l@{}} \end{tabular} &
\\
Generator Batch Size: 2048 &
 &
 \\
Discriminator Batch Size: 256 &
 &
 \\ \bottomrule
\end{tabular}
\label{table:comparison}
\end{table}

\noindent In Experiment 2, we utilize Inverse Reinforcement Learning (IRL) to compute agent rewards based on demonstrated behavior. Specifically, we employ three IRL algorithms: Maximum Entropy IRL (Maxent-IRL) \citep{zeibart-et-al:maxent}; Adversarial IRL \citep{fu_luo:AIRL}; and the Preferential Trajectory IRL (PT-IRL) \citep{c:119}. In addition, we compute manual rewards that differ due to potential shaping.

\paragraph{Maxent IRL} The objective of the Maxent IRL\footnote{Maxent and AIRL implementations adapted from:
https://github.com/HumanCompatibleAI/imitation \citep{gleave2022imitation}} algorithm is to compute a reward function that will generate a policy (learner) $\pi_L$ that matches the feature expectations of the trajectories generated by the expert's policy (demonstrations, assumed to be optimal) $\pi_E$ . Formally, this objective can be expressed as:
\begin{align}
    \mathbb{E}_{\pi_L}[\phi(\tau)] &= \mathbb{E}_{\pi_E}[\phi(\tau)], 
\end{align}
where $\phi_{\tau}$ are trajectory features. $$\mathbb{E}_{\pi_k} =  \underset{\tau \in \varphi_k}{\sum} \, p_{\pi_k}(\tau) \cdot \phi(\tau),$$ where $p_{\pi_k}(\tau)$ is the probability distribution of selecting trajectory $\tau$ from $\pi_k$. The original Maxent-IRL algorithm modeled the relationship between state features and agent rewards as linear, however, recent modifications now incorporate non-linear features via neural networks. To resolve the ambiguity of having multiple optimal policies which can explain an agent's behavior, this algorithm applies the principle of maximum entropy to select rewards yielding a policy with the highest entropy.

\paragraph{AIRL:} The AIRL algorithm uses generative adversarial networks to train a policy that can mimic the expert's behavior. The IRL problem can be seen as training a generative model over trajectories, such that: 
\begin{align}
\underset{w}{\max} J(w) = \underset{w}{\max} \, \mathbb{E}_{\tau \sim \mathcal{D}} [\log p_w(\tau)], 
\end{align}
\noindent where $p_\mathbf{w}(\tau) \propto p(s_0) \prod_{t=0}^{T-1} P(s_{t+1}|s_t, a_t) e^{\gamma^t.R_w(s_t, a_t)}$ and the parameterized reward is $R_w(s, a)$. Using the gradient of $J(w)$, the \textbf{entropy-regularized policy objective} can be shown to reduce to:
\begin{align}
    \underset{\pi}{\max} \, \mathbb{E}_\pi \, \left[\sum_{t=0}^{T}(R_w(s_t, a_t) - log \pi(a_t|s_t))\right]
\end{align}
\noindent The discriminator is designed to take the form: 
    $D_w(s, a) = \exp(f_w(s, a))/{(\exp(f_w(s, a)) + \pi(a|s))},$ 
and the \textbf{training objective} aims to minimize the cross-entropy loss between expert demonstrations and generated samples: $$L(w) = \sum_{t=0}^{T} \left( -\mathbb{E}_\mathcal{D}[\log D_w(s_t, a_t)] - \mathbb{E}_{\pi_t}[\log(1-D_w(s_t, a_t))] \right).$$
The policy optimization objective then uses the reward: $R(s, a) = log(D_{w}(s, a)) - log(1-D_w(s, a)).$

\paragraph{PTIRL:} The PTIRL algorithm incorporates multiple agents, each with a set of demonstrated trajectories $\varphi_i$. To compute rewards for each agent, PTIRL considers target and non-target trajectories. Target trajectories are demonstrated trajectories from a target agent, and non-target trajectories are demonstrated trajectories from other agents. Denoting $P$ as the probability transition function for all the agents, the linear expected reward for each trajectory $\tau$ is defined as: $$LER(\tau) = \sum_{k = 1}^{m}P({s_k}', a_k, s_k) \cdot r({s_k}', a_k, s_k).$$ For each trajectory set, there is a lower bound value $lb(\varphi)$ and an upper bound value $ub(\varphi)$, defined as: $lb(\varphi) = \min_{\tau \in \varphi}(LER(\tau))$ and $ub(\varphi) = \max_{\tau \in \varphi}(LER(\tau)), $ respectively. From $lb(\varphi)$ and $ub(\varphi)$, the spread $\delta$ is defined as: $\delta(\varphi_a, \varphi_b) = lb(\varphi_a) - ub(\varphi_b).$ PTIRL defines a preferential ordering between any two trajectories $\varphi_a$ and $\varphi_b$ as a poset, $\prec$, such that if $\varphi_b \prec \varphi_a$, then
$\delta(\varphi_a, \varphi_b) > 0$. Given the above definitions, let $\varphi_i$ be the set of target trajectories and $\varphi_{ni}$ the set of non-target trajectories. The PTIRL objective is to compute rewards such that $\varphi_{nt} \prec \varphi_i$, $\delta(\varphi_i, \varphi_{nt}) \geq \alpha$, where $\alpha$ is the minimum threshold for the spread. PTIRL is generally fast because it directly computes rewards via linear optimization.

\subsection{Reward Classification: Significance Tests }
\label{welch_tests}
\begin{table}[H]
\caption{Experiment 2: Welch's t-tests}
\centering
\footnotesize
\begin{tabular}{@{}|c|c|cc|cc|cc|@{}}
\toprule
 &
   &
  \multicolumn{2}{c|}{\textbf{SRRD\_vs\_DIRECT}} &
  \multicolumn{2}{c|}{\textbf{SRRD\_vs\_EPIC}} &
  \multicolumn{2}{c|}{\textbf{SRRD\_vs\_DARD}} \\ \cmidrule(l){3-8} 
\multirow{-2}{*}{\textbf{Domain}} &
  \multirow{-2}{*}{\textbf{Rewards}} &
  \multicolumn{1}{c|}{\textbf{t-statistic}} &
  \textbf{p-value} &
  \multicolumn{1}{c|}{\textbf{t-statistic}} &
  \textbf{p-value} &
  \multicolumn{1}{c|}{\textbf{t-statistic}} &
  \textbf{p-value} \\ \midrule
 &
  Manual &
  \multicolumn{1}{c|}{11.522} &
  \cellcolor[HTML]{EFEFEF}0 &
  \multicolumn{1}{c|}{12.478} &
  \cellcolor[HTML]{EFEFEF}0 &
  \multicolumn{1}{c|}{10.385} &
  \cellcolor[HTML]{EFEFEF}0 \\ \cmidrule(l){2-8} 
 &
  Maxent &
  \multicolumn{1}{c|}{28.496} &
  \cellcolor[HTML]{EFEFEF}0 &
  \multicolumn{1}{c|}{28.142} &
  \cellcolor[HTML]{EFEFEF}0 &
  \multicolumn{1}{c|}{2.593} &
  \cellcolor[HTML]{EFEFEF}0.005 \\ \cmidrule(l){2-8} 
 &
  AIRL &
  \multicolumn{1}{c|}{13.610} &
  \cellcolor[HTML]{EFEFEF}0 &
  \multicolumn{1}{c|}{5.117} &
  \cellcolor[HTML]{EFEFEF}0 &
  \multicolumn{1}{c|}{4.266} &
  \cellcolor[HTML]{EFEFEF}0 \\ \cmidrule(l){2-8} 
\multirow{-4}{*}{Gridworld} &
  PTIRL &
  \multicolumn{1}{c|}{11.209} &
  \cellcolor[HTML]{EFEFEF}0 &
  \multicolumn{1}{c|}{5.719} &
  \cellcolor[HTML]{EFEFEF}0 &
  \multicolumn{1}{c|}{7.725} &
  \cellcolor[HTML]{EFEFEF}0 \\ \midrule
 &
  Manual &
  \multicolumn{1}{c|}{18.801} &
  \cellcolor[HTML]{EFEFEF}0 &
  \multicolumn{1}{c|}{17.375} &
  \cellcolor[HTML]{EFEFEF}0 &
  \multicolumn{1}{c|}{6.955} &
  \cellcolor[HTML]{EFEFEF}0 \\ \cmidrule(l){2-8} 
 &
  Maxent &
  \multicolumn{1}{c|}{32.341} &
  \cellcolor[HTML]{EFEFEF}0 &
  \multicolumn{1}{c|}{12.104} &
  \cellcolor[HTML]{EFEFEF}0 &
  \multicolumn{1}{c|}{-2.586} &
  \cellcolor[HTML]{FFCCC9}0.995 \\ \cmidrule(l){2-8} 
 &
  AIRL &
  \multicolumn{1}{c|}{45.020} &
  \cellcolor[HTML]{EFEFEF}0 &
  \multicolumn{1}{c|}{28.226} &
  \cellcolor[HTML]{EFEFEF}0 &
  \multicolumn{1}{c|}{19.488} &
  \cellcolor[HTML]{EFEFEF}0 \\ \cmidrule(l){2-8} 
\multirow{-4}{*}{Bouncing Balls} &
  PTIRL &
  \multicolumn{1}{c|}{5.089} &
  \cellcolor[HTML]{EFEFEF}0 &
  \multicolumn{1}{c|}{3.101} &
  \cellcolor[HTML]{EFEFEF}0.001 &
  \multicolumn{1}{c|}{7.096} &
  \cellcolor[HTML]{EFEFEF}0 \\ \midrule
 &
  Manual &
  \multicolumn{1}{c|}{16.152} &
  \cellcolor[HTML]{EFEFEF}0 &
  \multicolumn{1}{c|}{15.851} &
  \cellcolor[HTML]{EFEFEF}0 &
  \multicolumn{1}{c|}{16.786} &
  \cellcolor[HTML]{EFEFEF}0 \\ \cmidrule(l){2-8} 
 &
  Maxent &
  \multicolumn{1}{c|}{17.829} &
  \cellcolor[HTML]{EFEFEF}0 &
  \multicolumn{1}{c|}{-2.543} &
  \cellcolor[HTML]{FFCCC9}0.994 &
  \multicolumn{1}{c|}{9.123} &
  \cellcolor[HTML]{EFEFEF}0 \\ \cmidrule(l){2-8} 
 &
  AIRL &
  \multicolumn{1}{c|}{9.772} &
  \cellcolor[HTML]{EFEFEF}0 &
  \multicolumn{1}{c|}{8.023} &
  \cellcolor[HTML]{EFEFEF}0 &
  \multicolumn{1}{c|}{3.935} &
  \cellcolor[HTML]{EFEFEF}0 \\ \cmidrule(l){2-8} 
\multirow{-4}{*}{Drone Combat} &
  PTIRL &
  \multicolumn{1}{c|}{61.534} &
  \cellcolor[HTML]{EFEFEF}0 &
  \multicolumn{1}{c|}{34.679} &
  \cellcolor[HTML]{EFEFEF}0 &
  \multicolumn{1}{c|}{30.384} &
  \cellcolor[HTML]{EFEFEF}0 \\ \midrule
 &
  Manual &
  \multicolumn{1}{c|}{24.419} &
  \cellcolor[HTML]{EFEFEF}0 &
  \multicolumn{1}{c|}{20.633} &
  \cellcolor[HTML]{EFEFEF}0 &
  \multicolumn{1}{c|}{15.760} &
  \cellcolor[HTML]{EFEFEF}0 \\ \cmidrule(l){2-8} 
 &
  Maxent &
  \multicolumn{1}{c|}{6.171} &
  \cellcolor[HTML]{EFEFEF}0 &
  \multicolumn{1}{c|}{1.717} &
  \cellcolor[HTML]{EFEFEF}0.04 &
  \multicolumn{1}{c|}{2.233} &
  \cellcolor[HTML]{EFEFEF}0.013 \\ \cmidrule(l){2-8} 
 &
  AIRL &
  \multicolumn{1}{c|}{4.992} &
  \cellcolor[HTML]{EFEFEF}0 &
  \multicolumn{1}{c|}{4.300} &
  \cellcolor[HTML]{EFEFEF}0 &
  \multicolumn{1}{c|}{-2.913} &
  \cellcolor[HTML]{FFCCC9}0.998 \\ \cmidrule(l){2-8} 
\multirow{-4}{*}{StarCraft II} &
  PTIRL &
  \multicolumn{1}{c|}{6.054} &
  \cellcolor[HTML]{EFEFEF}0 &
  \multicolumn{1}{c|}{3.631} &
  \cellcolor[HTML]{EFEFEF}0 &
  \multicolumn{1}{c|}{4.961} &
  \cellcolor[HTML]{EFEFEF}0    \\ \midrule
   &
  Manual &
  \multicolumn{1}{c|}{4.409} &
  \cellcolor[HTML]{EFEFEF}0 &
  \multicolumn{1}{c|}{2.23} &
  \cellcolor[HTML]{EFEFEF}0.01 &
  \multicolumn{1}{c|}{3.04} &
  \cellcolor[HTML]{EFEFEF}0.001 \\ \cmidrule(l){2-8} 
 &
  Maxent &
  \multicolumn{1}{c|}{8.02} &
  \cellcolor[HTML]{EFEFEF}0 &
  \multicolumn{1}{c|}{3.19} &
  \cellcolor[HTML]{EFEFEF}0.043 &
  \multicolumn{1}{c|}{10.89} &
  \cellcolor[HTML]{EFEFEF}0 \\ \cmidrule(l){2-8} 
 &
  AIRL &
  \multicolumn{1}{c|}{6.24} &
  \cellcolor[HTML]{EFEFEF}0 &
  \multicolumn{1}{c|}{4.95} &
  \cellcolor[HTML]{EFEFEF}0 &
  \multicolumn{1}{c|}{5.59} &
  \cellcolor[HTML]{EFEFEF}0 \\ \cmidrule(l){2-8} 
\multirow{-4}{*}{Robomimic} &
  PTIRL &
  \multicolumn{1}{c|}{4.27} &
  \cellcolor[HTML]{EFEFEF}0 &
  \multicolumn{1}{c|}{0.65} &
  \cellcolor[HTML]{FFCCC9}0.16 &
  \multicolumn{1}{c|}{1.18} &
  \cellcolor[HTML]{EFEFEF}0.04 \\ \midrule
   &
  Manual &
  \multicolumn{1}{c|}{8.97} &
  \cellcolor[HTML]{EFEFEF}0 &
  \multicolumn{1}{c|}{4.58} &
  \cellcolor[HTML]{EFEFEF}0 &
  \multicolumn{1}{c|}{6.54} &
  \cellcolor[HTML]{EFEFEF}0 \\ \cmidrule(l){2-8} 
 &
  Maxent &
  \multicolumn{1}{c|}{4.40} &
  \cellcolor[HTML]{EFEFEF}0 &
  \multicolumn{1}{c|}{2.95} &
  \cellcolor[HTML]{EFEFEF}0.001 &
  \multicolumn{1}{c|}{3.23} &
  \cellcolor[HTML]{EFEFEF}0 \\ \cmidrule(l){2-8} 
 &
  AIRL &
  \multicolumn{1}{c|}{4.73} &
  \cellcolor[HTML]{EFEFEF}0 &
  \multicolumn{1}{c|}{1.03} &
  \cellcolor[HTML]{FFCCC9}0.15 &
  \multicolumn{1}{c|}{2.87} &
  \cellcolor[HTML]{EFEFEF}0 \\ \cmidrule(l){2-8} 
\multirow{-4}{*}{Montezuma's Revenge} &
  PTIRL &
  \multicolumn{1}{c|}{2.30} &
  \cellcolor[HTML]{EFEFEF}0.01 &
  \multicolumn{1}{c|}{0.763} &
  \cellcolor[HTML]{FFCCC9}0.223 &
  \multicolumn{1}{c|}{-0.51} &
  \cellcolor[HTML]{FFCCC9}0.69 \\ \midrule
  &
  Manual &
  \multicolumn{1}{c|}{5.93} &
  \cellcolor[HTML]{EFEFEF}0 &
  \multicolumn{1}{c|}{2.89} &
  \cellcolor[HTML]{EFEFEF}0 &
  \multicolumn{1}{c|}{1.94} &
  \cellcolor[HTML]{EFEFEF}0.03 \\ \cmidrule(l){2-8} 
 &
  Maxent &
  \multicolumn{1}{c|}{2.71} &
  \cellcolor[HTML]{EFEFEF}0 &
  \multicolumn{1}{c|}{1.18} &
  \cellcolor[HTML]{FFCCC9}0.12 &
  \multicolumn{1}{c|}{6.66} &
  \cellcolor[HTML]{EFEFEF}0 \\ \cmidrule(l){2-8} 
 &
  AIRL &
  \multicolumn{1}{c|}{7.73} &
  \cellcolor[HTML]{EFEFEF}0 &
  \multicolumn{1}{c|}{2.97} &
  \cellcolor[HTML]{EFEFEF}0 &
  \multicolumn{1}{c|}{6.29} &
  \cellcolor[HTML]{EFEFEF}0. \\ \cmidrule(l){2-8} 
\multirow{-4}{*}{MIMIC-IV} &
  PTIRL &
  \multicolumn{1}{c|}{2.14} &
  \cellcolor[HTML]{EFEFEF}0.02 &
  \multicolumn{1}{c|}{-0.55} &
  \cellcolor[HTML]{FFCCC9}0.71 &
  \multicolumn{1}{c|}{0.65} &
  \cellcolor[HTML]{FFCCC9}0.26 \\
  
  \bottomrule

\end{tabular}
\label{tb:welch}
\end{table}

In Table \ref{tb:welch}, we show the comprehensive results for the Welch's t-tests for unequal variances, which are conducted across all domain and reward type combinations, to test the null hypotheses: (1) $\mu_{\text{SRRD}} \le \mu_{\text{DIRECT}}$, (2) $\mu_{\text{SRRD}} \le \mu_{\text{EPIC}}$, and (3) $\mu_{\text{SRRD}} \le \mu_{\text{DARD}}$; against the alternative: (1) $\mu_{\text{SRRD}} > \mu_{\text{DIRECT}}$, (2) $\mu_{\text{SRRD}} > \mu_{\text{EPIC}}$, and (3) $\mu_{\text{SRRD}} > \mu_{\text{DARD}}$, where $\mu$ represents the sample mean. Generally, the tests indicate that (1) $\mu_{\text{SRRD}} > \mu_{\text{DIRECT}}$ for all instances; (2) $\mu_{\text{SRRD}} > \mu_{\text{EPIC}}$ for $22$ out of $28$ instances, and (3) $\mu_{\text{SRRD}} > \mu_{\text{DARD}}$ for $24$ out of $28$ instances. These tests are performed at a significant level of $\alpha = 0.05$, assuming normality as per central limit theorem, since the number of trials is $200$. In summary, we conclude that the SRRD pseudometric is more effective at classifying reward samples compared to its baselines. Detailed accuracy scores with variability are shown in Table \ref{tb:acc}.

\begin{table}[H]
\caption{Experiment 2: Accuracy Scores}
\centering
\renewcommand{\arraystretch}{1.2} 
\footnotesize 
\begin{tabular}{@{}|c|l|c|c|c|c|@{}}
\toprule
\multicolumn{1}{|c|}{Domain} & 
\multicolumn{1}{c|}{Rewards} &
\multicolumn{1}{c|}{DIRECT} &
\multicolumn{1}{c|}{EPIC} &
\multicolumn{1}{c|}{DARD} &
\multicolumn{1}{c|}{SRRD} \\
\midrule
\multirow{4}{*}{\centering{Gridworld}} 
 & Manual  & 69.8 $\pm$ 4.6 & 69.3 $\pm$ 4.6 & 70.0 $\pm$ 5.0 & 75.8 $\pm$ 4.6 \\ \cmidrule(lr){2-6}
 & Maxent  & 57.4 $\pm$ 4.5 & 57.5 $\pm$ 4.5 & 68.9 $\pm$ 4.5 & 70.0 $\pm$ 4.4 \\ \cmidrule(lr){2-6}
 & AIRL    & 82.3 $\pm$ 3.0 & 84.9 $\pm$ 1.8 & 85.0 $\pm$ 2.6 & 86.2 $\pm$ 2.7 \\ \cmidrule(lr){2-6}
 & PTIRL   & 82.2 $\pm$ 3.5 & 84.2 $\pm$ 3.3 & 83.4 $\pm$ 3.5 & 86.0 $\pm$ 3.3 \\ \midrule
\multirow{4}{*}{\centering{\shortstack{Bouncing \\ Balls}}} 
 & Manual  & 46.5 $\pm$ 4.8 & 47.3 $\pm$ 4.6 & 52.0 $\pm$ 4.8 & 55.2 $\pm$ 4.5 \\ \cmidrule(lr){2-6}
 & Maxent  & 39.7 $\pm$ 3.1 & 46.0 $\pm$ 3.3 & 50.8 $\pm$ 3.2 & 49.9 $\pm$ 3.2 \\ \cmidrule(lr){2-6}
 & AIRL    & 41.2 $\pm$ 3.4 & 46.1 $\pm$ 3.9 & 49.8 $\pm$ 3.3 & 56.3 $\pm$ 3.3 \\ \cmidrule(lr){2-6}
 & PTIRL   & 70.3 $\pm$ 4.2 & 71.1 $\pm$ 4.3 & 69.5 $\pm$ 4.1 & 72.4 $\pm$ 4.0 \\ \midrule
\multirow{4}{*}{\centering{Drone Combat}} 
 & Manual  & 67.1 $\pm$ 4.1 & 67.2 $\pm$ 4.2 & 66.2 $\pm$ 4.9 & 73.9 $\pm$ 4.2 \\ \cmidrule(lr){2-6}
 & Maxent  & 70.3 $\pm$ 3.7 & 77.7 $\pm$ 3.8 & 73.2 $\pm$ 4.2 & 76.8 $\pm$ 3.5 \\ \cmidrule(lr){2-6}
 & AIRL    & 90.1 $\pm$ 3.9 & 90.7 $\pm$ 3.9 & 92.3 $\pm$ 3.7 & 93.8 $\pm$ 3.7 \\ \cmidrule(lr){2-6}
 & PTIRL   & 52.5 $\pm$ 4.3 & 63.7 $\pm$ 4.3 & 65.1 $\pm$ 4.6 & 78.3 $\pm$ 4.1 \\ \midrule

\multirow{4}{*}{\centering{StarCraft II}} 
 & Manual  & 65.5 $\pm$ 4.6 & 67.4 $\pm$ 4.5 & 69.5 $\pm$ 4.5 & 76.5 $\pm$ 4.4 \\ \cmidrule(lr){2-6}
 & Maxent  & 72.3 $\pm$ 4.1 & 74.1 $\pm$ 4.1 & 73.9 $\pm$ 4.2 & 74.8 $\pm$ 4.1 \\ \cmidrule(lr){2-6}
 & AIRL    & 75.1 $\pm$ 4.0 & 75.3 $\pm$ 4.0 & 78.1 $\pm$ 3.8 & 77.0 $\pm$ 3.8 \\ \cmidrule(lr){2-6}
 & PTIRL   & 77.2 $\pm$ 4.1 & 78.1 $\pm$ 4.2 & 77.6 $\pm$ 4.2 & 79.6 $\pm$ 4.0 \\ \midrule
\multirow{4}{*}{\centering {Robomimic}} 
 & Manual  & 78.2 \(\pm\) 7.6 & 80.3 \(\pm\) 9.3 & 79.5 \(\pm\) 7.9 & 82.4 \(\pm\) 8.6 \\ \cmidrule(lr){2-6}
 & Maxent  & 82.3 \(\pm\) 8.3 & 86.8 \(\pm\) 7.9 & 79.5 \(\pm\) 9.1 & 89.8 \(\pm\) 8.8 \\ \cmidrule(lr){2-6}
 & AIRL    & 85.9 \(\pm\) 8.7 & 87.1 \(\pm\) 9.1 & 86.3 \(\pm\) 9.7 & 91.8 \(\pm\) 9.5 \\ \cmidrule(lr){2-6}
 & PTIRL   & 80.3 \(\pm\) 7.7 & 83.6 \(\pm\) 8.3 & 83.1 \(\pm\) 8.5 & 84.2 \(\pm\) 8.1 \\ \midrule
\multirow{4}{*}{\centering {\shortstack{Montezuma's \\ Revenge}}}
 & Manual  & 66.4 \(\pm\) 7.5 & 70.1 \(\pm\) 8.3 & 68.3 \(\pm\) 7.7 & 73.5 \(\pm\) 7.5 \\ \cmidrule(lr){2-6}
 & Maxent  & 67.8 \(\pm\) 8.1 & 69.1 \(\pm\) 8.3 & 68.7 \(\pm\) 7.4 & 71.2 \(\pm\) 7.9 \\ \cmidrule(lr){2-6}
 & AIRL    & 68.2 \(\pm\) 8.3 & 71.4 \(\pm\) 8.7 & 69.8 \(\pm\) 7.9 & 72.3 \(\pm\) 8.2 \\ \cmidrule(lr){2-6}
 & PTIRL   & 68.2 \(\pm\) 7.9 & 69.6 \(\pm\) 7.6 & 70.6 \(\pm\) 7.9 & 70.2 \(\pm\) 7.7 \\ \midrule
 \multirow{4}{*}{\centering {\shortstack{MIMIC-IV}}}
 & Manual  & 53.5 \(\pm\) 9.7 & 56.5 \(\pm\) 9.2 & 57.3 \(\pm\) 10.1 & 59.2 \(\pm\) 9.5 \\ \cmidrule(lr){2-6}
 & Maxent  & 57.8 \(\pm\) 8.51 & 59.1 \(\pm\) 9.5 & 53.1 \(\pm\) 9.7 & 60.2 \(\pm\) 9.2 \\ \cmidrule(lr){2-6}
 & AIRL    & 56.5 \(\pm\) 8.7 & 60.7 \(\pm\) 8.6 & 57.6 \(\pm\) 9.2 & 63.3 \(\pm\) 8.9 \\ \cmidrule(lr){2-6}
 & PTIRL   & 58.9 \(\pm\) 9.4 & 61.4 \(\pm\) 8.9 & 60.3 \(\pm\) 9.1 & 60.9 \(\pm\) 9.3 \\ \bottomrule
\end{tabular}
\label{tb:acc}
\end{table}

\end{document}